\documentclass[10pt,journal,compsoc]{IEEEtran}

\usepackage{amsmath}

\usepackage{cite}
\usepackage{amsmath,amssymb,amsfonts}
\allowdisplaybreaks

\usepackage{array}
\usepackage{graphicx}
\usepackage{stfloats}
\usepackage{amsthm}
\usepackage{caption}
\usepackage{subfigure}
\usepackage{url}
\usepackage{verbatim}  
\usepackage{textcomp}
\usepackage{xcolor}
\usepackage{color}
\usepackage{enumitem}
\usepackage{algorithmicx}
\usepackage[noend,ruled, linesnumbered, vlined]{algorithm2e}
\usepackage{algpseudocode}
\usepackage[absolute,overlay]{textpos}

\setlength{\textfloatsep}{3pt}
\setlength{\abovedisplayskip}{1pt}
\setlength{\belowdisplayskip}{1pt}

\allowdisplaybreaks[4]
\def\BibTeX{{\rm B\kern-.05em{\sc i\kern-.025em b}\kern-.08em
    T\kern-.1667em\lower.7ex\hbox{E}\kern-.125emX}}
\setlength{\floatsep}{1pt}
\setlength{\textfloatsep}{1pt}

\newenvironment{sequation}{\begin{equation}\small}{\end{equation}}

\newtheorem{theorem}{\textbf{Theorem}}
\newtheorem{remark}{\textbf{Remark}}
\newtheorem{definition}{\textbf{Definition}}

\usepackage{multirow}
\usepackage{bm}
\usepackage{booktabs}

\usepackage{epstopdf}
\usepackage{cases}
\usepackage{makecell,multirow,diagbox}
\usepackage{stfloats}
\usepackage{comment}
\usepackage{array}

\usepackage{xr}
\makeatletter
\newcommand*{\addFileDependency}[1]{
  \typeout{(#1)}
  \@addtofilelist{#1}
  \IfFileExists{#1}{}{\typeout{No file #1.}}
}
\makeatother

\newcommand*{\myexternaldocument}[1]{
    \externaldocument{#1}
    \addFileDependency{#1.tex}
    \addFileDependency{#1.aux}
}

\myexternaldocument{Appendices}

\definecolor{b}{rgb}{0.0, 0, 1}
\definecolor{k}{rgb}{0, 0, 0}

\hyphenation{op-tical net-works semi-conduc-tor}

\def\BibTeX{{\rm B\kern-.05em{\sc i\kern-.025em b}\kern-.08em
		T\kern-.1667em\lower.7ex\hbox{E}\kern-.125emX}}

\begin{document}

\title{Cost Minimization for Space-Air-Ground Integrated Multi-Access Edge Computing Systems}

\author{Weihong~Qin,
        Aimin Wang,
        Geng~Sun,~\IEEEmembership{Senior Member,~IEEE},
    Zemin~Sun,~\IEEEmembership{Member,~IEEE},\\
        Jiacheng Wang,
        Dusit Niyato,~\IEEEmembership{Fellow,~IEEE},
        Dong In Kim,~\IEEEmembership{Fellow,~IEEE},
        Zhu Han,~\IEEEmembership{Fellow,~IEEE}

    \thanks{This study is supported in part by the National Natural Science Foundation of China (62272194, 62471200, 62502179), and in part by the Science and Technology Development Plan Project of Jilin Province (20250101027JJ). (\textit{Corresponding authors: Geng Sun and Zemin Sun.})}
    \IEEEcompsocitemizethanks{
    \IEEEcompsocthanksitem Weihong Qin, Aimin Wang and Zemin Sun are with the College of Computer Science and Technology, Key Laboratory of Symbolic Computation and Knowledge Engineering of Ministry of Education, Jilin University, Changchun 130012, China (e-mail: qinwh25@mails.jlu.edu.cn, wangaimin@jlu.edu.cn, sunzemin@jlu.edu.cn).
    
    \IEEEcompsocthanksitem Geng Sun is with the College of Computer Science and Technology, Key Laboratory of Symbolic Computation and Knowledge Engineering of Ministry of Education, Jilin University, Changchun 130012, China, and also with the College of Computing and Data Science, Nanyang Technological University, Singapore 639798 (e-mail: sungeng@jlu.edu.cn).

    \IEEEcompsocthanksitem Jiacheng Wang and Dusit Niyato are with the College of Computing and Data Science, Nanyang Technological University, Singapore 639798 (e-mail: jiacheng.wang@ntu.edu.sg, dniyato@ntu.edu.sg).

    \IEEEcompsocthanksitem Dong In Kim is with the Department of Electrical and Computer Engineering, Sungkyunkwan University, Suwon 16419, South Korea. (e-mail:dongin@skku.edu).

    \IEEEcompsocthanksitem Zhu Han is with the Department of Electrical and Computer Engineering at the University of Houston, Houston TX 77004, USA, and also with the Department of Computer Science and Engineering, Kyung Hee University, Seoul 446701, South Korea (e-mail: hanzhu22@gmail.com).

    }
}

\IEEEtitleabstractindextext{%
\begin{abstract}		
\par Space-air-ground integrated multi-access edge computing (SAGIN-MEC) provides a promising solution for the rapidly developing low-altitude economy (LAE) to deliver flexible and wide-area computing services. However, fully realizing the potential of SAGIN-MEC in the LAE presents significant challenges, including coordinating decisions across heterogeneous nodes with different roles, modeling complex factors such as mobility and network variability, and handling real-time decision-making under partially observable environment with hybrid variables. To address these challenges, we first present a hierarchical SAGIN-MEC architecture that enables the coordination between user devices (UDs), uncrewed aerial vehicles (UAVs), and satellites. Then, we formulate a UD cost minimization optimization problem (UCMOP) to minimize the UD cost by jointly optimizing the task offloading ratio, UAV trajectory planning, computing resource allocation, and UD association. We show that the UCMOP is an NP-hard problem. To overcome this challenge, we propose a multi-agent deep deterministic policy gradient (MADDPG)-convex optimization and coalitional game (MADDPG-COCG) algorithm. Specifically, we employ the MADDPG algorithm to optimize the continuous temporal decisions for heterogeneous nodes in the partially observable SAGIN-MEC system. Moreover, we propose a convex optimization and coalitional game (COCG) method to enhance the conventional MADDPG by deterministically handling the hybrid and varying-dimensional decisions. Simulation results demonstrate that the proposed MADDPG-COCG algorithm significantly enhances the user-centric performances in terms of the aggregated UD cost, task completion delay, and UD energy consumption, with a slight increase in UAV energy consumption, compared to the benchmark algorithms. Moreover, the MADDPG-COCG algorithm shows superior convergence stability and scalability.
\end{abstract}

\begin{IEEEkeywords}
    Multi-access edge computing, low-altitude economy, computing resource allocation, task offloading, trajectory planning
\end{IEEEkeywords}}

\maketitle
\IEEEdisplaynontitleabstractindextext
\IEEEpeerreviewmaketitle

%
%

\section{Introduction}
\label{sec_introduction}
\par \IEEEPARstart{A}{s} a nascent field, the low-altitude economy (LAE) is quickly becoming a rapidly developing sector in modern economic systems, driven by technological advancements and regulatory reforms that have enabled the commercial exploitation of underutilized airspace. Supported by the deployment of uncrewed aerial vehicles (UAVs), including drones and electric vertical take-off and landing aircraft, the LAE involves a broad range of commercial and public service activities conducted within low-altitude airspace, such as logistics delivery, emergency response, environmental monitoring, and precision agriculture \cite{Ye2024}. These services not only enhance the service efficiency but also create new commercial opportunities across a wide range of industries. The leading enterprises have already begun to capitalize on this emerging market. For example, companies such as DJI and EHang have actively explored diverse low-altitude applications, including drone-based imaging and autonomous aerial transportation \cite{dai2024data}. 

\par The expansion of LAE has seen the widespread deployment of UAVs across multiple domains by leveraging their mobility, flexibility, and maneuverability \cite{li2024multi}. Among these applications, UAV-assisted multi-access edge computing (MEC) stands out as a highly potential solution, which involves rapidly deploying UAVs to act as MEC servers nearby ground user devices (UDs). By offloading the delay-sensitive and resource-intensive tasks, such as navigation, road traffic monitoring, and rescue operations, to nearby UAVs, the computational and energy constraints of UDs can be alleviated~\cite{Sun2024TJCCT, he2024online, sun2024joint_rescue}. This is particularly advantageous in remote, infrastructure-scarce, or disaster-affected areas where ground base stations may be unavailable or damaged. Nonetheless, the practical utility of UAVs is hampered by inherent deficiencies in onboard computing, energy, and communication range, all of which impede the delivery of reliable, long-term services. 

\par To address the limitations of UAVs, combining the space-air-ground integrated network with MEC presents a viable approach, leading to the SAGIN-MEC framework. SAGIN-MEC leverages the rapid deployment of low earth orbit (LEO) satellite, such as Starlink \cite{mcdowell2020low}, OneWeb \cite{radtke2017interactions}, and Kuiper \cite{liu2020automatic}, to integrate terrestrial, aerial, and space-based networks into a unified architecture. On the one hand, SAGIN-MEC offers seamless computation offloading across a wide coverage area by leveraging the wide coverage of LEO satellites and the mobility of UAVs. On the other hand, SAGIN-MEC effectively integrates space and edge computing resources to improve the resource utilization.

\par However, fully exploiting the potential of the SAGIN-MEC system presents several challenges. \textbf{First}, the SAGIN-MEC system is inherently dynamic, with mobile nodes, time-varying channels, fluctuating workloads, and varying requirements of UDs, UAVs, and LEO satellites. Therefore, it is challenging to capture these dynamics to ensure long-term, efficient, and stable performances. \textbf{Second}, the SAGIN-MEC system consists of heterogeneous nodes, i.e., UDs, UAVs, and LEO satellites, which have distinct requirements and resources. However, although these nodes make decisions independently, their decisions jointly affect the overall system performance. Consequently, ensuring that heterogeneous nodes can make distributed decisions while coordinating the decisions across space, air, and ground layers to achieve efficient overall system performance remains a significant challenge. \textbf{Finally}, in realistic SAGIN-MEC scenarios, UDs and UAVs can obtain only partial local information due to mobility, coverage constraints, and communication delays, which inherently prevents full knowledge of the global state. Moreover, the decisions of different nodes exhibit complex characteristics, which combines varying dimensionality with discrete and continuous decision types. Consequently, the coexistence of incomplete information and varying-dimensional, hybrid decision structures makes achieving timely decisions particularly challenging for the SAGIN-MEC framework.

\par To ensure efficient long-term system performance in the dynamic SAGIN-MEC system, DRL (deep reinforcement learning) stands out as a promising algorithm. Moreover, the presence of heterogeneous nodes with diverse requirements and interdependent decisions necessitates a framework for distributed and coordinated decision-making, naturally motivating the use of multi-agent DRL (MADRL) algorithm. However, conventional MADRL algorithms struggle to handle the varying-dimensional and hybrid decisions. In response to these limitations, we presents a multi-agent deep deterministic policy gradient (MADDPG)-convex optimization and coalitional game (MADDPG-COCG) algorithm. Specifically, we develop the MADDPG to learn continuous temporal decisions. Moreover, we incorporate a convex optimization and coalitional game (COCG)-based method to enhance the conventional MADDPG by deterministically addressing the varying-dimensional and hybrid decisions. We highlight our main contributions as follows.

\begin{itemize}
	\item \textit{\textbf{System Architecture.}} We propose a hierarchical architecture for the SAGIN-MEC system in the LAE scenario. This architecture integrates a UD layer, an air layer, and a space layer. It provides integrated computing services by using UAVs for low-latency edge computation and the satellite network for cloud access. In this architecture, the heterogeneous nodes with different roles and constraints make distributed decisions that collectively optimize the overall system performance.
    
    \item \textit{\textbf{Problem Formulation.}} To capture the delay sensitivity of most tasks and energy constraints of UDs, we formulate a UD cost minimization optimization problem (UCMOP). The core objective is to curtail both task completion delay and UD energy consumption. This is achieved through the co-optimization of the task offloading ratio, UAV trajectory, computing resource allocation, and UD association, all while adhering to the resource and energy constraints of the UAVs. This problem is shown to be both non-convex and NP-hard, rendering it intractable for traditional optimization algorithms.
    
    \item \textbf{\textit{Algorithm Design.}} To solve the NP-hard UCMOP, we propose the MADDPG-COCG algorithm. Specifically, the MADDPG is employed to enable heterogeneous nodes (i.e., UDs and UAVs) to learn continuous temporal decisions of task offloading ratio and UAV trajectory planning under partial observability. Moreover, the COCG method is designed to enhance MADDPG by deterministically obtaining the varying-dimensional and discrete decisions. For the varying-dimensional decision on computing resource allocation, we leverage convex optimization to derive a closed-form expression. For the discrete decision of UD association, we obtain the stable and mutually beneficial result in a distributed manner by using the coalitional game.
    
   \item \textbf{\textit{Performance Evaluation.}} Simulation results validate the effectiveness of the proposed MADDPG-COCG algorithm. Specifically, the proposed MADDPG-COCG algorithm exhibits superior performance compared to two competing MADRL algorithms and two key ablation baselines, achieving significant improvements in user-centric metrics while maintaining relatively efficient UAV energy consumption. Furthermore, our algorithm exhibits faster convergence speed and enhanced learning stability when compared to MADDPG and MAPPO. To further verify its robustness and scalability, we conduct extensive performance analyses under diverse system settings, including a varying number of UDs, different task sizes, and fluctuating computing resources.
\end{itemize}

\par The remainder of this paper is organized as follows. Section \ref{sec_related work} reviews related work. Section \ref{sec_model} presents the system model. Section \ref{sec_problemFormulation} formulates the problem. Section \ref{sec_algorithm} provides a detailed description of the proposed algorithm. We show simulation results in Section \ref{sec_simulation} and conclude in Section \ref{sec_conclusion}.

%
%
\section{Related work}
\label{sec_related work}
\par In this section, we review the related work from the aspects of MEC architectures, formulation of joint optimization problems, and optimization approaches.

\subsection{MEC Architecture}

\par Considering that conventional MEC is limited by insufficient coverage and lack of flexibility due to the dependence on terrestrial infrastructures, numerous studies have focused on UAV-enabled MEC. For example, Pervez \emph{et al.}~\cite{Pervez2024} presents a hybrid air-ground architecture, in which a base station and multiple UAVs collaboratively provide computing services to users. Zhang \emph{et al.}~\cite{Zhang2024Task} architected a UAV-assisted  MEC architecture, where the considered region is divided into several sub-regions, and the tasks of ground users are offloaded to UAVs. Moreover, Bai \emph{et al.}~\cite{Bai2024} studied a hybrid edge-cloud model, incorporating aerial platforms with distant cloud infrastructure. However, these studies often overlook the fundamental limitation that UAVs have limited coverage radius \cite{sun2024uav}, which can lead to service discontinuity and insufficient service quality, especially in large-scale deployment scenarios. 

\par Due to the extensive coverage and system robustness, the integration of MEC with satellite systems has gained increasing attention recently. For example, Peng \emph{et al.}~\cite{Peng2025} introduced a satellite-cloud-MEC integrated vehicular network, where the LEO satellite relays the tasks from terrestrial MEC servers to the cloud server. Moreover, Li \emph{et al.}~\cite{Li2025Joint} considered an MEC-enabled LEO satellite network, allowing UD tasks to be handled either by the satellite itself or relayed to the cloud for execution. However, most of these studies focus on satellite-assisted task offloading for terrestrial MEC servers, which limits the applicability for delay sensitive tasks because of the significant propagation delay inherent in ground-satellite communication.

\par To leverage the advantages of both UAVs and satellites, SAGIN-MEC has attracted extensive attention. For example, Du \emph{et al.}~\cite{Du2024Joint}proposed an architecture in which multiple UAVs, alongside a single satellite, are utilized to offer computational support for Internet of Things (IoT) devices. Moreover, Tun \emph{et al.}~\cite{Tun2025Joint} considered a THz-assisted SAGIN-MEC system aiming for minimal energy drain on IoT devices. Zhang \emph{et al.}~\cite{Zhang2025Joint} explored an SAGIN-MEC-assisted task offloading and energy harvesting system, where an LEO satellite and a set of UAVs are deployed to offer energy harvesting and task offloading capabilities to users. However, most of these studies addressed relatively simple scenarios where a single satellite is deployed, UAVs fly along fixed trajectories, or UDs are assumed to be stationary.

\par In summary, the MEC architectures considered in existing studies are constrained by limited coverage and service continuity, are prone to high latency for delay-sensitive tasks, or oversimplify real-world deployment scenarios. This work proposes a hierarchical SAGIN-MEC architecture  that guarantees flexible and scalable computing services through efficient coordination among satellite, aerial, and ground nodes. 

\subsection{Formulation of Joint Optimization Problems}

\par The formulation of the optimization problem is fundamental to measuring the SAGIN-MEC system overall performance. Previous studies have explored the optimization objectives such as delay and energy consumption. For instance, Waqar \emph{et al.}~\cite{Waqar2022Computation} formulated a delay minimization problem for MEC-assisted aerial-terrestrial integrated vehicular networks. Similarly, Nguyen \emph{et al.}~\cite{nguyen2022joint} jointly optimized computation offloading, UAV trajectory, and resource allocation within a SAGIN. Their objective was to minimize a weighted sum composed of both task completion delay and energy consumption. Moreover, Nguyen \emph{et al.}~\cite{Nguyen2024} focus on task offloading in the hybrid edge-cloud SAGIN, where the objective is to reduce the weighted energy consumption for the entire SAGIN. However, these studies mainly focused on optimizing one perspective of the performance metrics, which may lead to suboptimal solutions that fail to meet the diverse requirements of tasks where both delay and energy efficiency are critical considerations.

\par Besides the optimization objective, the joint optimization of decision variables also significantly affects the system performance. Researchers have optimized various aspects of SAGIN-MEC such as resource allocation and task offloading. For example, Chen \emph{et al.}~\cite{Chen2023} jointly optimized access strategy, transmit power, task offloading, and resource allocation for UAV-augmented SAGIN. Moreover, Zhu \emph{et al.}~\cite{Zhu2025Resource} explored the joint optimization of resource allocation and task offloading for SAGIN-MEC systems. Furthermore, focusing on the emerging low-altitude economy, Tang \emph{et al.}~\cite{Tang2025} jointly optimized UAV exploration and task assignment to minimize both energy consumption and completion delay in a cooperative emergency rescue system. However, these studies lack a comprehensive consideration of the key factors such as task offloading ratio and UAV trajectory control.

\par This work differs from the aforementioned studies in several aspects. First, rather than focusing solely on a single metric, our optimization objective simultaneously considers both energy consumption and task completion delay to address these critical performance metrics. Moreover, we adopt a more comprehensive set of decision variables by jointly optimizing task offloading ratio, UAV trajectory planning, resource allocation, and UD association, aiming to fully exploit the benefits of the SAGIN-MEC system.

\subsection{Optimization Approach}

\par To address the complex optimization problems, researchers are devoted to designing efficient optimization approaches by employing methodologies such as optimization theory, heuristic algorithms, and evolutionary algorithms. For example, Zhou \emph{et al.}~\cite{Zhou2022} used Lyapunov optimization and dependent rounding within an alternating optimization framework to minimize service delay. Moreover, Laboni \emph{et al.}~\cite{Goudarzi2023} developed a hyper-heuristic algorithm for resource allocation in MEC networks. Additionally, evolutionary algorithms have also been widely adopted. For instance, Zhou \emph{et al.}~\cite{zhou2022multi} addressed the joint optimization of service caching and task offloading in MEC-assisted industrial IoTs by proposing a multi-objective evolutionary algorithm. He \emph{et al.}~\cite{he2021uav} addressed the task offloading problem in a UAV-assisted edge computing framework for vehicular networks by developing a genetic-algorithm-based approach. Furthermore, an improved NSGA-II algorithm was designed in~\cite{du2024jointoff} to address the joint offloading and resource management in multi-UAV-assisted MEC. However, the applicability of most of these traditional optimization methods is limited in highly dynamic environments. Furthermore, heuristic algorithms often exhibit sensitivity to parameter settings and may converge to local optima, while evolutionary algorithms can suffer from slow convergence and high computational costs.

\par To better adapt to the dynamic nature of SAGIN-MEC systems, DRL has emerged as an effective paradigm that can operate without prior information about the environment. For example, Hoang et al. ~\cite{hoang2023deep} integrated Lyapunov theory into a DRL framework to ensure stability in learning process. To improve energy efficiency for data gathering in wireless charging environments, the authors of ~\cite{Fu2021} utilized deep Q-learning to optimize UAV trajectory. However, traditional DRL approaches, which typically rely on a centralized agent, face significant challenges in coordinating the decisions of multiple heterogeneous nodes with distinct objectives. Consequently, MADRL has been adopted as a more suitable paradigm. For instance, Du \emph{et al.}~\cite{du2024multiagent} developed an MADRL algorithm for the co-optimization of resource allocation and task offloading in SAGINs. In~\cite{Du2024MADDPG}, the authors presented an MADDPG-based algorithm for service placement and task offloading in SAGIN-MEC systems. Moreover, Li \emph{et al.}~\cite{Li2025Collaborative} also proposed a multi-agent proximal policy optimization (MAPPO)-based approach to minimize UD energy consumption. Nevertheless, conventional MADRL algorithms still struggle to effectively address the challenges of hybrid discrete-continuous and varying-dimensional action spaces.

\par To address these specific challenges, this paper presents an MADDPG-COCG algorithm that integrates COCG method into the MADDPG framework, thus enhancing the capability of the conventional MADDPG to efficiently handle the varying-dimensional and hybrid decisions while achieving superior performance.

%
%
\section{System Model and Problem Formulation}
\label{sec_model}

\par In this section, we first introduce a SAGIN-MEC architecture, followed by the models of mobility, communication, computation, and cost.

\begin{figure}[t] 
	\centering
    \setlength{\abovecaptionskip}{2pt}%
	\setlength{\belowcaptionskip}{2pt}%
	\includegraphics[width =3.5in]{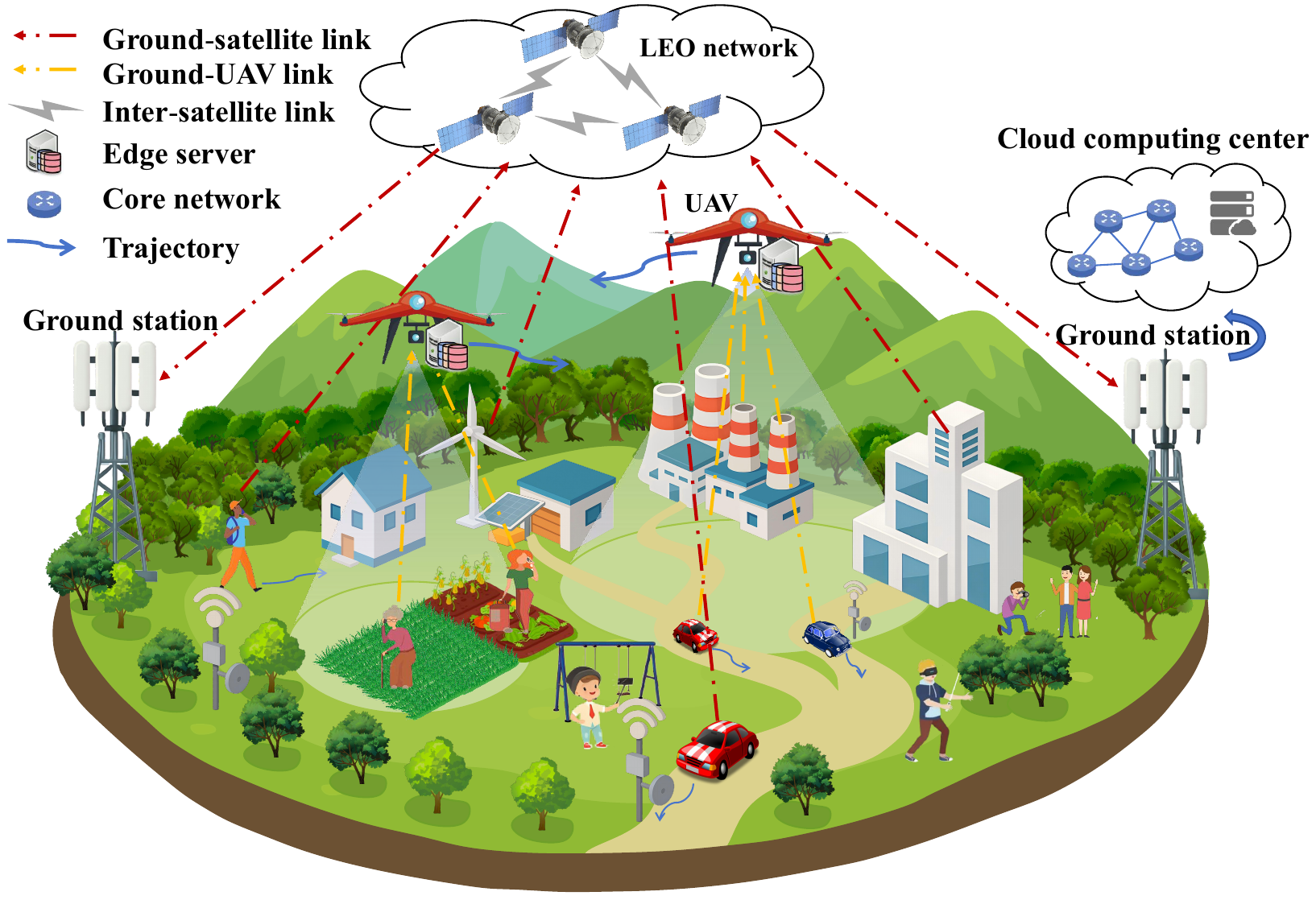}
	\caption{The architecture of the proposed SAGIN-MEC system in a remote area. In this hierarchical system, a set of UAVs in the air layer are deployed as mobile MEC servers to provide low-latency computing services for ground UDs. Meanwhile, LEO satellites in the space layer act as relays to forward tasks from UDs to a remote GS for cloud computing, thereby ensuring wide-area service coverage.}
	\label{fig_systemModel}
\end{figure}

%
%

\subsection{System Model}
\label{sec_system_model}

\subsubsection{System Overview}
\label{sec:system_overview}

\par As shown in Fig. \ref{fig_systemModel}, we consider an SAGIN-MEC system in the remote area where terrestrial infrastructures are unavailable~\cite{Gao2024}. Moreover, the proposed SAGIN-MEC system is structured by a hierarchical architecture, which consists of a ground layer, an air layer, and a space layer~\cite{cheng2019}. Specifically, \textit{at the ground layer}, a set of UDs $\mathcal{I}=\{1, \ldots, i, \ldots, I\}$ are randomly distributed in the target area, periodically generating various IoT tasks such as agricultural data collection and health monitoring. A GS $g$ is located far away from the considered area to provide cloud computing services. However, due to the lack of terrestrial infrastructures, the UDs can only establish connections with the GS through the space layer. \textit{At the air layer}, a set of UAVs $\mathcal{U}=\{1, \ldots, u, \ldots, U\}$ equipped with MEC servers offer proximate and flexible computing services to UDs within their coverage areas. \textit{At the space layer}, a set of LEO satellites, denoted by $\mathcal{N}=\{1, \ldots, n, \ldots, N\}$, is deployed to facilitate task forwarding from the UDs to the cloud. In other words, the satellites function as relays to offload tasks from UDs to the GS via inter-satellite links (ISLs). Moreover, we consider the SAGIN-MEC system operates in a discrete-time and quasi-static manner. Specifically, we discretize the system time into a set of $T$ time slots, denoted by $\mathcal{T}=\{1, \ldots, t, \ldots, T\}$, where each slot has an identical duration of $\tau$. Note that $\tau$ is sufficiently small so that the network topology remains constant within each time slot while varying across different time slots~\cite{Zhang2021}.

%
%
\subsubsection{Basic Models}
	
\par The basic models of the system are given as follows.

\textit{\textbf{UD Mobility Model.}} The horizontal coordinate of each UD $i\in\mathcal{I}$ is denoted as $\mathbf{q}_{i}(t)$. The temporal-dependent movement is captured via the Gauss-Markov model, similar to~\cite{Batabyal2015}. The location of each UD $i$ evolves as
\begin{equation}
	\label{eq_UD_position}
    \begin{aligned}
	\mathbf{q}_{i}(t+1)=\mathbf{q}_{i}(t) + \mathbf{v}_i(t) \tau,
    \end{aligned}
\end{equation}
\noindent where $\mathbf{v}_i(t)$ is the velocity vector derived from the Gauss-Markov model.

\par \textit{\textbf{UAV Mobility Model.}} We employ a three-dimensional Cartesian coordinate for each UAV $u\in\mathcal{U}$, i.e., $\mathbf{q}_{u}(t)=[q^x_u(t),q^y_u(t),H_u]^T$, where the UAV flies at a fixed altitude $H_u$~\cite{Wang2022a}. Therefore, the coordinate of UAV $u$ evolves as
\begin{subequations}
    \label{eq_AAV_position}
    \begin{alignat}{2}
        &x_u(t+1) = x_u(t) + \tau v_u(t) \cos \left(\theta_u(t)\right), \, \forall t \in \mathcal{T},\\
        &y_u(t+1)= y_u(t) + \tau v_u(t) \sin \left(\theta_u(t)\right), \, \forall t \in \mathcal{T},
    \end{alignat}
\end{subequations}

\noindent where $\theta_u(t) \in (-\pi, \pi]$ represents the instantaneous direction angle of UAV $u$, $v_u(t) \in [0,v_U^{\text{max}}]$ denotes the instantaneous velocity of UAV $u$, and $v_U^{\text{max}}$ is the maximum velocity constraint for the UAVs. 

\par Furthermore, the position of each UAV satisfies the physical constraints as follows:
\begin{subequations}
	\label{eq_AAV_mob_constraint}
	\begin{alignat}{2}
		& 0 \leq x_{u}(t) \leq x^{\max}, \quad &&\forall u \in \mathcal{U},\ t \in \mathcal{T}, \label{eq_AAV_mob_x}\\
		& 0 \leq y_u(t) \leq y^{\max}, \quad &&\forall u \in \mathcal{U},\ t \in \mathcal{T}, \label{eq_AAV_mob_y}\\
		&||\mathbf{q}_{u}(t)-\mathbf{q}_{u^{\prime}}(t)||\geq d_{\text{U}}^{\text{safe}}, &&\forall u, u^{\prime} \in \mathcal{U},\ u\neq u^{\prime},\ t \in \mathcal{T}. \label{eq_AAV_safe}
	\end{alignat}
\end{subequations}

\noindent Constraints~\eqref{eq_AAV_mob_x} and \eqref{eq_AAV_mob_y} ensure that each UAV flies within the boundaries of the considered area. Constraint \eqref{eq_AAV_safe} enforces a safety distance between UAVs to prevent collisions.

\begin{remark}
We adopt a two-dimensional (2D) UAV mobility model with a fixed altitude, which is a widely used simplification to focus on horizontal trajectory optimization. However, our proposed MADDPG-COCG algorithm can be extended to a three-dimensional (3D) model by incorporating the altitude of UAV as an additional continuous action~\cite{Sharma2019}. This extension would require more complex channel and energy consumption models, which will be systematically investigated in our future work.
\end{remark}

\par \textit{\textbf{UD Model.}} Each UD $i\in\mathcal{I}$ is characterized by a tuple $<\mathbf{q}_{i}(t), f_i^{\text{loc}}, E_i^{\max}>$, wherein $f_i^{\text{local}}$ denotes the computing resources of UD $i$ (in cycles/s) and $E_i^\text{max}$ denotes the energy constraint of UD $i$. Additionally, we consider that each UD generates a task per time slot~\cite{Wang2022}. The task generated by UD $i$ in time slot $t$ is defined by $\vartheta_i(t)=<\varpi_i(t), \eta_i(t), T_{i}^{\max}(t)>$, where $\varpi_i(t)$ is the task size (in bits), $\eta_i(t)$ represents the computation density of the task, and $T_{i}^{\max}(t)$ denotes the maximum tolerable delay of the task. 

\par \textit{\textbf{UAV Model.}} Each UAV $u\in\mathcal{U}$ is characterized by $<\mathbf{q}_{u}(t), f_u^{\max}, E_u^{\max}>$, where $f_u^{\max}$ denotes the computing resources (in cycles/s), and $E_u^\text{max}$ is the energy constraint of UAV $u$.

\par \textit{\textbf{Satellite Model.}} Each satellite $n$ is characterized by $<\mathbf{q}_n(t),H_n>$,  where $\mathbf{q}_{n}(t)$ and $H_n$ represent the horizontal coordinate in time slot $t$ and the altitude of satellite $n$, respectively.

%
%
\subsection{Communication Model}
\label{sec_communicationModel}

\par Each UD establishes communication either with a UAV through a ground-to-air link or with a satellite via a ground-to-space link. The corresponding communication models are presented in detail in the following subsection.

\subsubsection{UD-UAV Communication} 

\par We consider that the communication link between UD and UAV operates at C-band \cite{kassem2017analysis}. Furthermore, UAVs utilize orthogonal frequency division multiple access to concurrently serve multiple UDs~\cite{ji2023}. Hence, the transmission rate from UD $i$ to UAV $u$ is calculated as
\begin{equation}
    \label{eq_trans_rate_AAV}
    \begin{aligned}
        R_{i,u}(t)=B_{i,u}(t)\log_{2}(1+p_{i,u}(t)h_{i,u}(t)/\sigma^2),
    \end{aligned}
\end{equation}

\noindent where $B_{i,u}(t)$ denotes the bandwidth allocated to UD $i$ from UAV $u$, determined by equally dividing the total bandwidth $B_u^\text{total}$ of UAV $u$ among all UDs it serves, $p_{i,u}(t)$ represents the transmit power from UD $i$ to UAV $u$, $\sigma^2$ is the background noise power. Additionally, $h_{i,u}(t)$ means instantaneous channel gain between UD $i$ and UAV $u$, which is given as
\begin{equation}
    \label{eq_trans_rate_AAV}
    \begin{aligned}
        h_{i,u}(t)=10^{-L_{i,u}(t)/10},
    \end{aligned}
\end{equation}

\noindent where $L_{i,u}(t)$ denotes the path loss between UD $i$ and UAV $u$. Considering that the UD-UAV communication often experiences occasional obstruction cuased by obstacles, we calculate $L_{i,u}(t)$ by employing the probabilistic line-of-sight (LoS) channel model~\cite{Shi2018}, as follows:
\begin{equation}
    \begin{aligned}
    \label{eq_pathloss_AAV}
    L_{i,u}(t) &= 20\log(4 \pi \epsilon d_{i,u}(t)/c) \\
    &+ P_{i,u}(t) \rho^{\text{L}} + (1-P_{i,u}(t)) \rho^{\text{N}},
    \end{aligned}
\end{equation}

\noindent where $\rho^{\text{L}}$ and $\rho^{\text{N}}$ represent the additive path loss for LoS and none-line-of-sight (NLoS) links, respectively~\cite{AlHourani2014}. Additionally, $d_{i,u}(t)=(\|\mathbf{q}_i(t)-\mathbf{q}_u(t)\|^2+H_u^2)^{1/2}$ indicates the distance between UD $i$ and UAV $u$, $\epsilon$ denotes the carrier frequency of the C-band, and $c$ is the speed of light. Besides, $P_{i,u}(t)$ means the probability of the LoS link, i.e.,
\begin{equation}
    \label{eq_P_LoS_AAV}
    \begin{aligned}
        P_{i,u}(t)=\frac{1}{1+\varepsilon_1\exp\big(-\varepsilon_2(\frac{180\theta_{i,u}(t)}{\pi}-\varepsilon_1)\big)},
    \end{aligned}
\end{equation}

\noindent where $\varepsilon_1$ and $\varepsilon_2$ are the environment-dependent variables~\cite{Yaliniz2016}. Additionally, $\theta_{i,u}(t)=\arcsin{(\frac{H_u}{\|\mathbf{q}_i(t)-\mathbf{q}_u(t)\|})}$ denotes the elevation angle between UD $i$ and UAV $u$.

\begin{remark}
We focus on tractable single-cell resource management by neglecting inter-cell interference (ICI) in this work. However, as ICI is critical in dense multi-UAV scenarios, our MADDPG-COCG algorithm can be readily extended by including interference information in the states of agents, thus enabling cooperative interference management. This extension will be systematically considered in our future work.
\end{remark}

\subsubsection{UD-satellite Communication}

\par Similar to~\cite{Zhou2021}, we consider that the UD-satellite communication operates at Ka-band. Then, the transmission rate from UD $i$ and satellite $n$ is given as 
\begin{equation}
    \label{eq_trans_rate_sate}
    \begin{aligned}
        R_{i,n}(t)=B_{i,n}(t)\log_2\big(1+p_{i,n}(t)h_{i,n}(t)/\sigma_{\text{US}}^2\big),
    \end{aligned}
\end{equation}

\noindent where $B_{i,n}(t)$ represents the bandwidth between UD $i$ and satellite $n$, and $p_{i,n}(t)$ denotes the transmit power from UD $i$ to satellite $n$. Furthermore, $h_{i,n}(t)$ is the instantaneous channel gain between UD $i$ and satellite $n$, which is expressed as
\begin{equation}
    \label{eq_trans_rate_AAV}
    \begin{aligned}
        h_{i,n}(t)=10^{-\frac{L_{i,n}(t)}{10}},
    \end{aligned}
\end{equation}

\noindent where $L_{i,n}(t)$ denotes the path loss between UD $i$ and satellite $n$. Considering that the UD-satellite communication is sensitive to the environmental factors such as rainfall, $L_{i,n}(t)$ is calculated by integrating the rain attenuation and the path loss fading~\cite{Shi2018}, as follows:
\begin{equation}
    \label{eq_trans_loss_sate}
    \begin{aligned}
        L_{i,n}(t)= 20\log(4 \pi \epsilon^{\text{Ka}} d_{i,n}(t)/c)+\varrho,
    \end{aligned}
\end{equation}

\noindent where $d_{i,n}(t)=(\|\mathbf{q}_i(t)-\mathbf{q}_n(t)\|^2+H_n^2)^{1/2}$ denotes the distance between UD $i$ and satellite $n$, $\epsilon^{\text{Ka}}$ means the carrier frequency of the Ka-band. Additionally, $\varrho$ indicates the rain attenuation, which is modeled as a Weibull-based stochastic process~\cite{Kanellopoulos2014}.

%
%

\subsection{Computation Model}
\label{sec_DelayModel}

\par For task $\vartheta_i(t)$ generated by UD $i$ at time slot $t$, UD $i$ can offload a portion of the task to a UAV for edge computing or to a satellite for cloud computing, which depends on the UD association decision. More specifically, we define a variable $\chi_i(t)\in \mathcal{U} \cup \mathcal{N}$ to represent the association decision, where $\chi_i(t)=u$ indicates UD $i$ is associated with UAV $u$, and $\chi_i(t)=n$ represents that UD $i$ is associated with satellite $n$. In other words, $\chi_i(t)=u$ and $\chi_i(t)=n$ mean that task $\vartheta_i(t)$ is offloaded to UAV $u$ and to satellite $n$, respectively. Additionally, we introduce a variable $\lambda_i(t)\in[0,1]$ to denote the task offloading ratio.

\subsubsection{Local Computing Model}
\par If a portion of task $\vartheta_i(t)$ is processed by UD $i$ locally, the task completion delay and energy consumption can be calculated as follows.

\par \textbf{Task Completion Delay.} The task completion delay for local computing is given as 
\begin{equation}
    \label{eq_time_local}
    \begin{aligned}
        T_{i}^{\text{loc}}(t)=\eta_i(t)(1-\lambda_{i}(t))\varpi_i(t)/f_i^{\text{loc}},
    \end{aligned}
\end{equation}

\noindent where $(1-\lambda_{i}(t))$ denotes the portion of task $\vartheta_i(t)$ that is processed locally by UD $i$.

\par \textbf{UD Energy Consumption.} The corresponding computing energy consumption of UD $i$ is calculated as
\begin{equation}
    \label{eq_energy_local}
    \begin{aligned}
        E_{i}^{\text{loc}}(t)=\eta_i(t)\varsigma_i({f_i^{\text{loc}}})^2\left(1-\lambda_{i}(t)\right)\varpi_i(t),
    \end{aligned}
\end{equation}

\noindent where $\varsigma_i\geq0$ represents the effective capacitance of UD $i$, which depends on the CPU chip architecture~\cite{Miettinen2010}.

\subsubsection{Edge Computing Model}

\par If task $\vartheta_i(t)$ is processed by UAV $u$ ($\chi_{i}(t)=u$) for edge computing, the UAV allocates computing resources to execute the task. Note that we omit the result feedback delay, as the processing results of most mobile applications are typically much smaller than the input data~\cite{zhang2024energy}. 

\par \textbf{Task Completion Delay.} The task completion delay for UAV offloading mainly consists of transmission delay and computation delay, which is given as
\begin{equation}
    \label{eq_trans_delay_AAV}
    \begin{aligned}
        T_{i,u}(t)=\underbrace{\lambda_{i}(t)\varpi_i(t)/R_{i,u}(t)}_{\text{Transmission delay}}+\underbrace{\eta_i(t)\lambda_{i}(t)\varpi_i(t)/f_{u,i}(t)}_{\text{Computation delay}},
    \end{aligned}
\end{equation}

\noindent where $f_{u,i}(t)$ denotes the computing resources allocated by UAV $u$ for task $\vartheta_i(t)$.

\par \textbf{UD Energy Consumption.} Task transmission is the primary source of energy consumption for each UD, expressed as
\begin{equation}
    \label{eq_time_local}
    \begin{aligned}
        E_{i,u}(t)=p_{i,u}(t)\lambda_{i}(t)\varpi_i(t)/R_{i,u}(t).
    \end{aligned}
\end{equation}

\par \textbf{UAV Energy Consumption.} The corresponding energy consumption of UAV includes the computation and the flight energy, which is given as
\begin{equation}
    \label{eq_energy_AAV}
    \begin{aligned}
        E_{u}(t)=\sum_{i\in\mathcal{I}}\big(\mathbb{I}_{\{\chi_i(t)=u\}}E_{i,u}^{\text{com}}(t)\big)+E_u^{\text{pro}}(t),
    \end{aligned}
\end{equation}

\noindent where $\mathbb{I}_{\{\chi\}}$ denotes an indicator function such that $\mathbb{I}_{\{\chi\}}=1$ if $\chi$ holds, and $\mathbb{I}_{\{\chi\}}=0$ otherwise. Moreover, $E_{i,u}^{\text{com}}(t)$ represents the energy consumption of UAV $u$ to process task $\vartheta_i(t)$, which is given as
\begin{equation}
    \label{eq_comp_energy_AAV}
    \begin{aligned}
        E_{i,u}^{\text{com}}(t)=\eta_i(t)\lambda_{i}(t)\varpi_i(t)\mu_u,
    \end{aligned}
\end{equation}

\noindent where $\mu_u$ represents the energy expended by UAV $u$ per CPU cycle~\cite{Jiang2023}. Furthermore, $E_u^{\text{pro}}(t)$ indicates the UAV propulsion energy consumption, which is calculated as~\cite{Zeng2019}
\begin{sequation}
    \begin{aligned}
	\label{eq_AAV_flight}		
            E_{u}^\text{pro}(t)&=\Big(\underbrace{\delta_1\big(1+3 (v_{u}(t))^2/{v_u^{\text{tip}}}^2\big)}_{\text {Blade profile power}}+\underbrace{\delta_4 (v_{u}(t))^3}_{\text {Parasite power}}\\&+\underbrace{\delta_2 \sqrt{\sqrt{\delta_3+(v_{u}(t))^4/4}}-(v_{u}(t))^2/2}_{\text{Induced power}}\Big)\tau,
    \end{aligned}
\end{sequation}

\noindent where $v_u(t)$ denotes the velocity of UAV $u$ at time slot $t$, and $v_u^{\text{tip}}$ represents the rotor tip speed of UAV $u$. Additionally, $\delta_1$, $\delta_2$, $\delta_3$ and $\delta_4$ are constants that depend on the aerodynamic parameters of the UAV~\cite{Yang2022}.

\subsubsection{Cloud Computing Model}

\par If a portion of task $\vartheta_i(t)$ is processed by the cloud, UD $i$  first offloads this portion of task to satellite $n$ with which it is associated ($\chi_{i}(t)=n$), and the satellite then forwards the task to the GS for cloud computing. More specifically, if there is no GS within the coverage area of satellite $n$, satellite $n$ will route the task through ISLs along the shortest path to the satellite $n^{\prime}$ that can establish a connection to the GS. Note that we omit the forwarding energy consumption of satellites, as it is significantly smaller compared to the energy consumption of UAVs, and satellites are typically equipped with more abundant and stable energy resources \cite{cheng2019}. We also ignore the computation delay and energy consumption of clouding computing, as the cloud possesses powerful computing capabilities and abundant energy resources \cite{dinh2013survey}.

\par \textbf{Task Completion Delay.} For the portion of task $\vartheta_i(t)$ processed by cloud, the task completion delay comprises the uploading delay from the UD to the satellite, forwarding delay between satellites, the downloading delay from the satellite to the GS, and the propagation delay, which is given as
\begin{equation}
    \label{eq_trans_delay_sate}
    \begin{aligned}
        &T_{i,n}(t)=\underbrace{\lambda_{i}(t)\varpi_i(t)/R_{i,n}(t)}_{\text{Uploading delay}} + \underbrace{j_{n,n^{\prime}}^\text{hop}(t)\lambda_{i}(t)\varpi_i(t)/R_\text{ISL}}_{\text{Forwarding delay}}\\
        &+ \underbrace{\lambda_{i}(t)\varpi_i(t)/R_\text{SG}}_{\text{Downloading delay}} +\underbrace{2\left(d_{i,n}(t)+d_{n,n^{\prime}}(t)+d_{n^{\prime},g}(t)\right)/c}_{\text{Propagation delay}},
    \end{aligned}
\end{equation}

\noindent where $R_\text{ISL}$ and $R_\text{SG}$ represent the transmission rate over the ISLs and the downlink transmission rate from a satellite to the GS, respectively, both of which are fixed values~\cite{cheng2019}. Additionally, $j_{i,n}^\text{hop}$ denotes the number of hops between satellite $n$ and $n^{\prime}$. Furthermore, $d_{n,n^{\prime}}(t)$ is the total distance of the ISL chain from satellite $n$ and satellite $n^{\prime}$. Besides, $d_{n^{\prime},g}(t)$ is the distance between satellite $n^{\prime}$ and the GS\footnote{Note that if satellite $n$ is directly connected to the GS, we can know $n=n^{\prime}$, $j_{n,n^{\prime}}^\text{hop}=0$, and $d_{n,n^{\prime}}(t)=0$.}.

\par \textbf{UD Energy Consumption.} Similarly, the energy consumption of UD $i$ is primarily due to task transmission, which is calculated as 
\begin{equation}
    \label{eq_trans_energy_sate}
    \begin{aligned}
        E_{i,n}(t)=p_i(t)\lambda_{i}(t)\varpi_i(t)/R_{i,n}(t).
    \end{aligned}
\end{equation}

%
%

\subsection{UD Cost Model}
\label{sec_CostModel}

\par Due to the delay sensitivity of tasks and the limited battery capacity of UDs, energy consumption and task completion delay are key performance metrics for assessing the quality of experience for UDs. Therefore, we present the UD cost model by integrating the task completion and UD energy consumption. Specifically, since we consider partial offloading, the completion delay for task $\vartheta_i(t)$ is calculated as the maximum between the local computing delay and the delay for edge (or cloud) computing, i.e.,
\begin{equation}
    \label{eq_total_delay}
    \begin{aligned}
        T_i(t)=\max\big({T_{i}^\text{loc}(t), \sum_{k\in\mathcal{U}\cup\mathcal{N}}\mathbb{I}_{\{\chi_i(t)=k\}}T_{i,k}(t)}\big),   
    \end{aligned}
\end{equation}

\noindent where $k$ represents a UAV or satellite. Then, the total energy consumed by UD $i$ for task $\vartheta_i(t)$ comprises the energy for local computation and task transmission, given by
\begin{equation}
    \label{eq_energy_UD}
    \begin{aligned}
        E_i(t)=E_i^\text{loc}(t)+\sum_{k\in\mathcal{U}\cup\mathcal{N}}\mathbb{I}_{\{\chi_i(t)=k\}}E_{i,k}(t).
    \end{aligned}
\end{equation}

\noindent Based on \eqref{eq_total_delay} and  \eqref{eq_energy_UD}, the total cost of the UDs at time slot $t$ can be calculated as
\begin{equation}
    \label{eq_cost_task}
    \begin{aligned}
        C(t)&=w^T\sum_{i\in \mathcal{I}}T_i(t) + w^E\sum_{i\in\mathcal{I}}E_i(t),
    \end{aligned}
\end{equation}

\noindent where $w^T$ and $w^E$ represent the weight coefficients for task completion delay and UD energy consumption, respectively.

%
%
\section{Problem Formulation and Analysis}
\label{sec_problemFormulation}

\subsection{Problem Formulation}

\par The objective of this work is to minimize the total cost of UDs by jointly optimizing the task offloading ratio $\mathbf{\Lambda}=\{\lambda_{i}(t)\}_{i\in \mathcal{I}, t \in \mathcal{T}}$, UAV trajectory planning $\mathbf{Q}=\{\mathbf{q}_{u}(t)\}_{u\in \mathcal{U}, t \in \mathcal{T}}$, computing resource allocation $\mathbf{F}=\{f_{u,i}(t)\}_{u\in \mathcal{U}, i \in \mathcal{I}, t \in \mathcal{T}}$, and UD association $\mathbf{X}=\{\chi_{i}(t)\}_{i\in \mathcal{I}, t \in \mathcal{T}}$. Therefore, UCMOP can be formulated as
\begin{subequations}
	\label{eq_problem}
	\begin{alignat}{2}
		\mathbf{P}: \quad &\min_{\mathbf{\Lambda},\mathbf{Q},\mathbf{F}, \mathbf{X}}  \sum_{t=1}^T C(t), \label{utility}\\
		\text{s.t.} \quad & \lambda_{i}(t) \in[0,1], \ \forall i\in \mathcal{I}, 
        \ t\in \mathcal{T},\label{pro_c1}\\
            & \chi_{i}(t)\in\mathcal{U}\cup \mathcal{N}, \ \forall i\in \mathcal{I}, \ t\in 
        \mathcal{T},\label{pro_c2}\\
            & T_i(t)\leq T_i^\text{max}(t), \ \forall i\in \mathcal{I}, \ t\in 
        \mathcal{T},\label{pro_c3}\\
            & 0 \leq f_{u,i}(t) \leq f_u^\text{max}, \ \forall u\in \mathcal{U}, 
        \ i\in \mathcal{I}, \ t\in \mathcal{T},\label{pro_c4}\\
            & \sum_{i \in \mathcal{I}}\mathbb{I}_{\{\chi_i(t)=u\}}f_{u,i}(t) \leq f_u^\text{max}, \ 
        \forall  i \in \mathcal{I}, u \in \mathcal{U},\ t\in \mathcal{T},\label{pro_c5}\\
            & \sum_{t \in \mathcal{T}}E_u(t) \leq E_u^\text{max}, \ \forall u \in 
        \mathcal{U},\ t\in \mathcal{T},\label{pro_c6}\\
            & \sum_{t \in \mathcal{T}}E_i(t) \leq E_i^\text{max}, \ \forall i \in 
        \mathcal{I},\ t\in \mathcal{T},\label{pro_c7}\\ 
            & \eqref{eq_AAV_mob_x} \sim \eqref{eq_AAV_safe},\label{pro_c8}
	\end{alignat}
\end{subequations}

\noindent where constraint \eqref{pro_c1} limits the value of task offloading ratio. Constraint \eqref{pro_c2} indicates that each UD can be associated with one UAV or satellite. Moreover, constraint \eqref{pro_c3} ensures that each task completes within its maximum tolerable delay. Furthermore, constraints \eqref{pro_c4} and \eqref{pro_c5} constrain the computing resource allocation of each UAV. Constraints \eqref{pro_c6} and \eqref{pro_c7} restrict the energy consumption. Besides, constraint \eqref{pro_c8} poses limits on the mobility of each UAV.

\subsection{Problem Analysis}
\label{sec_problem_analy}

\par Directly solving problem $\textbf{P}$ is challenging for several reasons:
    \begin{itemize}
    \item \textit{NP-hard and Non-convex Problem.} Problem $\mathbf{P}$ includes both discrete variables (i.e., UD association $\mathbf{X}$), continuous variables (i.e., task offloading ratio $\Lambda$, UAV trajectory planning $\mathbf{Q}$, and computing resource allocation $\mathbf{F}$), along with a non-linear objective function. Consequently, problem $\mathbf{P}$ is an mixed-integer nonlinear programming (MINLP) problem, which is both non-convex and NP hard~\cite{boyd2004convex}, \cite{sun2024multi_objective}.
    
    \item \textit{Dynamic and Partially Observable Environment.} The SAGIN-MEC system operates in a dynamic environment characterized by random task arrivals, real-time UD mobility, fluctuating MEC server capacities, and time-varying wireless channels \cite{sun2025tjcct}. Additionally, the lack of prior knowledge about environmental changes further complicates the prediction of future states. This combination of dynamic features and partial observability introduces significant challenges in designing adaptive and real-time decision-making frameworks.
    
    \item \textit{Heterogeneous Decision-making Nodes.} The SAGIN-MEC system comprises heterogeneous nodes, namely UDs and UAVs, each with distinct decisions and resource constraints. Specifically, UDs determine their task offloading ratios to optimize delay and energy consumption, while UAVs plan their trajectories to meet the UD requirements while adhering to their own energy constraints. However, since the decisions of different nodes are interdependent and collectively influence the UD costs, coordinating decisions across these heterogeneous nodes poses a significant challenge.

    \item \textit{Varying-dimensional and Hybrid Decision Variables.} The number of tasks offloaded to UAVs varies dynamically across time slots, thus resulting in fluctuations in the dimensionality of resource allocation decisions. Moreover, the decision variables span hybrid discrete-continuous domains, which complicates the optimization process \cite{zhang2025multi-objective-diffusion}. Additionally, this variability adds another layer of complexity, making it challenging to ensure convergence and computational efficiency.
\end{itemize}

%
%

\begin{figure*}[t] 
	\centering
        \setlength{\abovecaptionskip}{2pt}%
	\setlength{\belowcaptionskip}{2pt}%
	\includegraphics[width =7in]{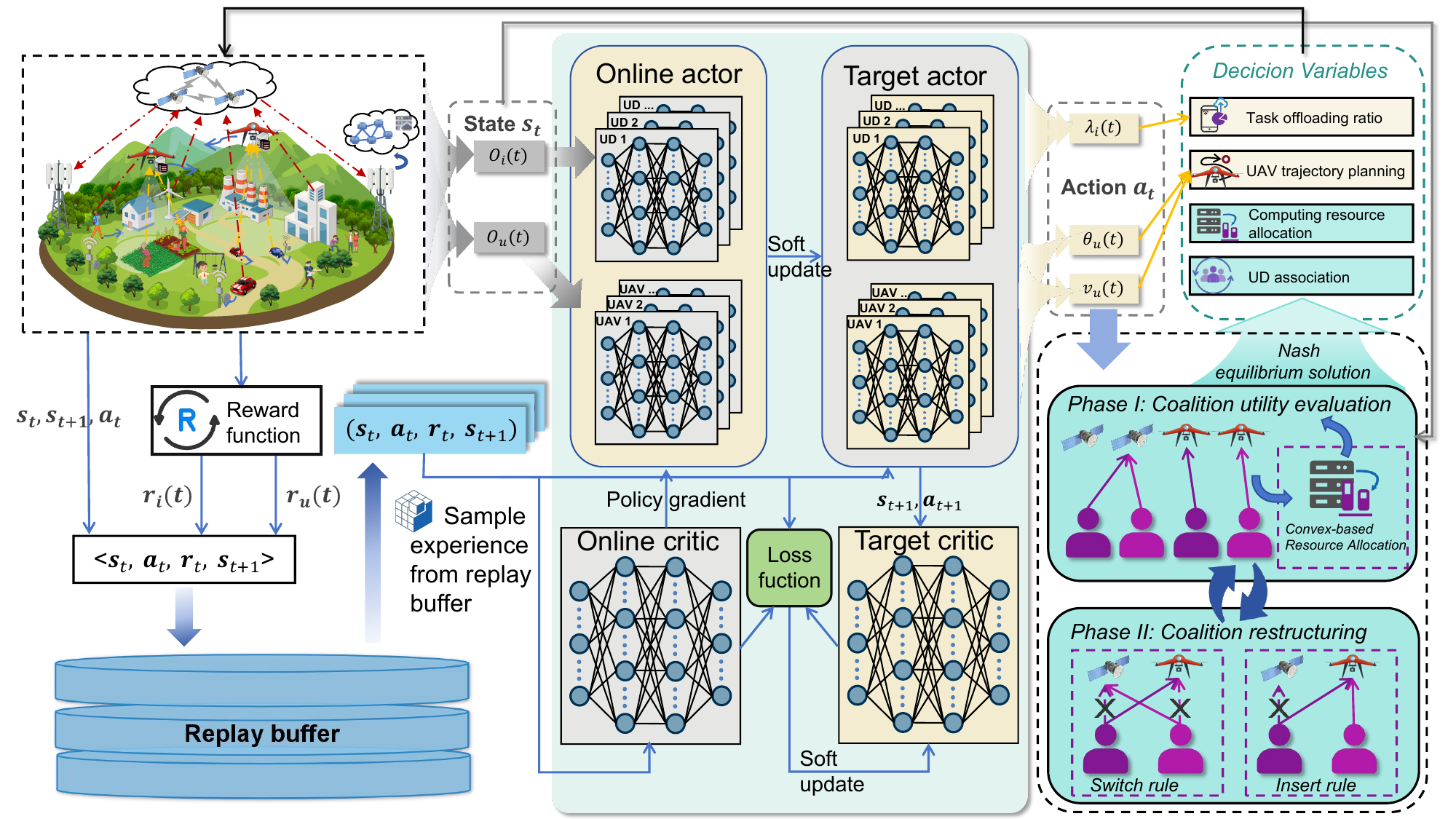}
    \caption{The proposed MADDPG-COCG algorithm addresses hybrid and varying-dimensional action spaces by decoupling the decision variables. The MADDPG component learns continuous decisions, namely, the task offloading ratio and UAV trajectory planning. To enhance learning stability, the COCG module deterministically handles the remaining decisions: deriving a closed-form solution for computing resource allocation via convex optimization , and obtaining a stable UD association structure via a coalitional game}  
	\label{fig_MADDPG-COCG}
     \vspace{-2.0em}
\end{figure*}

\section{The Proposed MADDPG-COCG}
\label{sec_algorithm}

    \par In this section, we first present the motivations for proposing MADDPG-COCG. Then, we introduce MADDPG-COCG in detail. The framework of the proposed MADDPG-COCG is illustrated in Fig. \ref{fig_MADDPG-COCG}.

\subsection{Motivations} 
\label{Motivation}

    \par The motivations for proposing MADDPG-COCG are presented as follows.
    
    \begin{itemize}
    \item \textit{Leveraging DRL for Long-term Optimization in Dynamic Environments.} Traditional optimization methods often focus on short-term objectives, which restricts their ability to achieve cumulative performance improvements over extended periods. In contrast, DRL-based methods are inherently designed to optimize long-term rewards by continuously learning from interactions with the environment. This capability makes DRL well-suited for adapting to the dynamic nature of SAGIN-MEC systems and optimizing cumulative UD costs over multiple time slots.

    \item \textit{Introducing MADRL for Coordinated Decision Making.} Despite the advantages of DRL, traditional DRL algorithms face inherent limitations when applied to heterogeneous decision-making nodes with interdependent decisions. Traditional DRL algorithms struggle to effectively capture the interdependencies of the decision variables and fail to learn decentralized and coordinated actions. This motivates the adoption of MADRL algorithm since it enables both UAVs and UDs to act as agents and learn optimal policies collaboratively through decentralized execution and centralized training. Therefore, MADRL can efficiently capture the interactions and dependencies among agents, thus ensuring scalable and adaptive decision-making in dynamic and partially observable environments.

    \item \textit{Proposing the MADDPG-COCG to enhance the MADRL.} MADRL algorithms often face significant challenges in achieving convergence when dealing with the varying-dimensional and hybrid decision variables. Therefore, to enhance the learning efficiency and stability of MADRL with such complex decision variables, we propose an MADDPG-COCG algorithm. Specifically, we employ MADDPG to ensure stable learning of the continuous decisions of task offloading ratio and UAV trajectory planning for heterogeneous nodes in the partially observable SAGIN-MEC system. Moreover, we design a COCG method to enhance the MADDPG by handling the varying-denominational and hybrid decision variables. On the one hand, we leverage convex optimization to derive the closed-form solution for computing resource allocation by using its ability to handle varying dimensions of resource allocation decisions. On the other hand, we introduce a coalitional game to enable distributed and mutually beneficial association decisions~\cite{Chen2021}.
\end{itemize}

\subsection{Partially Observable Markov Game (POMG) for UCMOP}
\label{sec_POMG}

\par To address the limited observability and interactions among UAVs and UDs, the formulated UCMOP can be modeled as a POMG. The POMG is defined by a tuple $(\mathcal{M}, \mathcal{S}, \mathcal{O}, \mathcal{A}, \mathcal{P}, \mathcal{R})$, where $\mathcal{M}=\mathcal{I}\cup\mathcal{U}$ represents the set of agents that compromises both UD agents and UAV agents, $\mathcal{S}$ denotes the set of global environment states, $\mathcal{O}=\mathcal{O}^\text{UD} \cup \mathcal{O}^\text{UAV}$is the set of observations, $\mathcal{A}=\mathcal{A}^\text{UD} \cup \mathcal{A}^\text{UAV}$ represents the joint action space, $\mathcal{P}$ denotes the state transition function, and $\mathcal{R}=\mathcal{R}^\text{UD} \cup \mathcal{R}^\text{UAV}$ is the joint reward function.

\par Under the framework of the POMG, the environment state in time slot $t$ is denoted as $s(t)\in\mathcal{S}$. Subsequently, each agent $m$ obtains its observation $o_{m}(t)\in\mathcal{O}$ through its observation function and selects an action $a_{m}(t)\in\mathcal{A}$ based on its policy. Then, by executing the joint actions $a(t)$ of all agents, the environment provides each agent a reward $r_{m}(t)\in\mathcal{R}$ and transitions to the next state $s(t+1)$ according to the state transition probability function $\mathcal{P}(s(t+1) \mid s(t),a(t))$. Accordingly, the key elements in the POMG are detailed as follows.

\subsubsection{State Space}
\label{State}
\par The environment state can be defined as $\mathcal{S}=\{\mathbf{q}_m(t),E_m(t),\varpi_i(t),\eta_i(t)\mid \forall m\in \mathcal{M}, i\in \mathcal{I},t\in\mathcal{T}\}$, where $\mathbf{q}_m(t)$ represents the position of agent $m$, $E_m(t)$ denotes the remaining energy of agent $m$, $\varpi_i(t)$ indicates the size of tasks generated by UD $i$, and $\eta_i(t)$ is the computation density of the task generated by UD $i$.

\subsubsection{Observation Space}
\label{Observation}
\par Each agent can only observe a portion of the global environment state due to the limited radio range. Therefore, the observations of UDs and UAVs are given as $\mathcal{O}^\text{UD}=\{o_i(t)\mid \forall i \in \mathcal{I}, t\in\mathcal{T}\}$ and $\mathcal{O}^\text{UAV}=\{o_u(t) \mid \forall u \in \mathcal{U}, t\in\mathcal{T}\}$, respectively, where $o_i(t)=\{\mathbf{q}_i(t), \varpi_i(t), \eta_i(t), E_i(t)\}$ and $o_u(t)=\{\mathbf{q}_m(t), E_u(t)\}$.

\subsubsection{Action Space}
\label{Action}
\par The joint action space $\mathcal{A}$ consists of the set of decisions for the formulated UCMOP. Specifically, the action space of UDs is denoted as $\mathcal{A}^\text{UD}=\{a_i(t)\mid\forall i \in \mathcal{I},t\in\mathcal{T}\}$, where $a_i(t)=\{\lambda_i(t),\chi_{i}(t)\}$. Moreover, the action space of a UAV is denoted as $\mathcal{A}^\text{UAV}=\{a_u(t)\mid\forall u \in \mathcal{U},t\in\mathcal{T}\}$, where $a_u(t)=\{\theta_u(t),v_u(t),f_{u,i}(t)\mid \forall i \in\mathcal{I},\mathbb{I}_{\{\chi_i(t)=u\}} = 1\}$.

\subsubsection{Reward Function}
\label{Reward}
\par The reward is a function of observations and actions, which evaluates the effectiveness of agent decisions. To balance individual and system-wide objectives~\cite{Wang2022b}, the reward is designed as follows.

\textbf{UD reward.} The reward for UDs is given as $\mathcal{R}^\text{UD}=\{r_i(t) \mid \forall i \in \mathcal{I},t\in\mathcal{T}\}$, where the reward of each UD is calculated as
\begin{equation}
    \label{eq_reward_UD}
    r_i(t)=w^S \left(-C(t)+ r^p(t)\right) -w^I r_i^I(t),
\end{equation}

\noindent where $w^S$ and $w^I$ represent the weighting factors for the system reward and the individual reward, respectively. Furthermore, $r_i^I(t)$ represents the individual reward of UD $i$, which is calculated as $r_i^I(t) = w^TT_i(t)+w^EE_i(t)$. Additionally, $r^p(t)$ denotes the sum of penalties for all tasks that exceed their deadlines, which can be calculated as
\begin{equation}
    r^p(t)=-\sum_{i \in \mathcal{I}}\xi_i^d(t)r_i^d(t),
\end{equation}

\noindent where $\xi_i^d(t)$ is a binary indicator that indicates whether a task exceeds its deadline, and $r_i^d(t)$ denotes the corresponding penalty.

\textbf{UAV reward.} The reward for UAVs can be represented as $\mathcal{R}^\text{UAV}=\{r_u(t)\mid\forall u \in \mathcal{U},t\in\mathcal{T}\}$, where
\begin{equation}
    \label{eq_reward_UAV}
    r_u(t)=w^S \left(-C(t)+ r^p(t)\right) +w^I r_u^U(t).
\end{equation}

\noindent $r_u(t)$ denotes the individual reward of UAV $u$ at time slot $t$, which can be calculated as 
\begin{equation}
    \label{eq_r_indi_UAV}
    r_u^U(t)=\sum_{i\in  \mathcal{I}}\mathbb{I}_{\{\chi_i(t)=u\}}r_i^I(t) - \xi_u^b(t)r_u^b(t) - \xi_u^c(t)r_u^c(t),
\end{equation}

\noindent where $\xi_u^b(t)$ and $\xi_u^c(t)$ are binary indicators that denote whether the UAV exceeds its boundary or experiences a collision, respectively. Additionally, $r_u^b(t)$ and $r_u^c(t)$ represent the penalty for boundary violations and collisions, respectively. It is worth noting that unlike the penalty for task deadline violations, the penalties for UAV boundary violations and collisions are more closely related to the decisions of UAV trajectory planning. Therefore, these penalties are included in the individual reward of the UAV, rather than in the system reward.

\subsubsection{POMG Analysis}
\label{POMG_analysis}
\par In the POMG model, the joint action space involves four types of decision variables, i.e., task offloading ratio, UAV trajectory planning, computing resource allocation, and UD association. However, although the MADRL excels at handling the complex decisions, it struggles with varying-dimensional and hybrid discrete-continuous decision variables since these complexities introduce significant convergence and stability issues, as mentioned in Section \ref{sec_problem_analy}. Therefore, to achieve a more efficient and stable learning process, we split the decision variables in the action space. Specifically, we employ the MADDPG to learn continuous and temporal decisions of task offloading ratio and UAV trajectory planning for heterogeneous nodes, which is detailed in Section \ref{sec_MADDPG}. Moreover, we design the COCG method to deterministically handle the varying-dimensional decision of computing resource allocation and discrete decision of UD association, which is detailed in Section \ref{sec_COCG}.

\subsection{MADDPG Algorithm}
\label{sec_MADDPG}

\par Based on the discussion in Section \ref{sec_POMG}, the MADRL algorithm learns the decisions of task offloading ratio $\Lambda$ and UAV trajectory planning $\mathcal{Q}$. Accordingly, the action space of the POMG can be reformulated as $\mathcal{A}=\mathcal{A}^\text{UD}\cup\mathcal{A}^\text{UAV}$, where $\mathcal{A}^\text{UD}=\{\lambda_i(t)\mid\forall i \in \mathcal{I},t\in\mathcal{T}\}$, and $\mathcal{A}^\text{UAV}=\{\theta_u(t),v_u(t)\mid\forall u \in \mathcal{U},t\in\mathcal{T},\mathbb{I}_{\{\chi_i(t)=u\}} = 1\}$. To effectively determine the continuous actions of task offloading ratio and UAV trajectory planning, the DDPG algorithm is particularly suitable due to its capacity to learn deterministic policies in continuous action spaces~\cite{zhang2025a}. By integrating DDPG into the MADRL algorithm, the resulting MADDPG algorithm enables both UDs and UAVs to optimize their continuous actions cooperatively by leveraging centralized training with decentralized execution. This makes MADDPG particularly suited for the joint optimization of task offloading ratio and UAV trajectory planning in partially observable SAGIN-MEC systems, thereby ensuring improved adaptability and enhanced user experience. Therefore, we adopt MADDPG to address the optimization of task offloading ratio and UAV trajectory planning.  

\subsubsection{Actor–Critic Framework}

\par The MADDPG has an actor-critic structure, which consists of $M$ critic networks and $M$ actor networks. Specifically, MADDPG utilizes a centralized training strategy, which allows the critic network to access global state information, including the observations and actions of all agents in the environment. This global perspective helps stabilize the learning process by considering the interdependencies and interactions between agents. Moreover, in the execution phase, each agent independently uses its own actor network, which only relies on its local observations. Such decentralized execution ensures that agents can operate efficiently in a distributed manner without requiring access to global information. The critic network and actor network of agent $m\in\mathcal{M}$ are as follows.

\par \textbf{Critic network.} In time slot $t$, the critic network takes the joint observations $o(t)=\{o_m(t)\mid\forall m\in \mathcal{M}\}$ and joint actions $a(t)=\{a_m(t)\mid \forall m\in \mathcal{M}\}$ of all agents as inputs. To enhance the stability of the training process, each critic network consists of two sub-networks, i.e., an online subnet ${Q}_{\psi_m}$ and a target subnet ${Q}_{\psi^\prime_m}$, which are parameterized by $\psi_m$ and $\psi^\prime_m$ respectively.
    
\par \textbf{Actor network.} The actor network receives the local observation $o_m(t)$ of an agent $m$ as input and outputs the corresponding action $a_m(t)$. Additionally, each actor network also consists of two subnets, i.e., an online subnet ${\pi}_{\phi_m}$ and a target subnet ${\pi}_{\phi^{\prime}_m}$, which are parameterized by $\phi_m$ and $\phi^\prime_m$, respectively.

\subsubsection{Network Training}

\par The training process of MADDPG involves iteratively updating the critic and actor networks to stabilize multi-agent learning and enhance overall performance.

\par \textbf{Actor Network Update.} The actor network is updated to maximize the expected return. To this end, the policy gradient is computed by using the deterministic policy gradient theorem. Specifically, we denote the policy objective function as $\mathcal{J}_{{\pi}_m}({\phi_m})$, and the gradient of the policy objective function with respect to $\phi_m$ is calculated as
\begin{equation}
    \label{eq_gradient_actor}
    \begin{aligned}
        \nabla_{\phi_m} \mathcal{J}_{{\pi}_{m}} ={}& \mathbb{E}_{\mathbf{s}, \mathbf{a} \sim \mathcal{D}} \left[ \nabla_{a_m} Q_m(\mathbf{s}, \mathbf{a} | \psi_m) \Big|_{a_i = \pi_i(o_i)} \right. \\
        & \left. \cdot \nabla_{\phi_m} \pi_m(o_m|\phi_m) \right],
    \end{aligned}
\end{equation}

\noindent where $\nabla_{a_m}$ represents the gradient vector of the policy objective function with respect to $\psi_m$, $Q_m(s(t),a_m(t)|\psi_m)$ denotes the state action value of agent $m$, which is calculated by the online subnet of the critic network $Q_{\psi_m}$, and $\pi_m(o_m(t)|\phi_m)$ is the output of actor network $\pi_m$ of agent $m$. Then, these gradients are back propagated to the online subnet of the actor network to update $\phi_m$, which is given as
\begin{equation}
    \label{eq_update_actor}
    \begin{aligned}
        \phi_m \leftarrow \phi_m+\zeta_\pi \nabla_{\phi_m} \mathcal{J}_{{\pi}_{m}},
    \end{aligned}
\end{equation}

\noindent where $\zeta_\pi\in (0,1]$ denotes the learning rate of the online actor network.

\par \textbf{Critic Network Update.} The critic network is updated by minimizing the loss function, which is typically the mean squared temporal difference (TD) error. Therefore, the mean squared error (MSE) can be given as
\begin{equation}
    \label{eq_error_critic}
    \begin{aligned}
        \text{MSE}(\psi_m)=\mathbb{E}_{\psi_m}[(Y_m(t)-Q_m(s(t),a_m(t)|\psi_m))^2],
    \end{aligned}
\end{equation}

\noindent where $Y_m(t)$ denotes the target value, given as
\begin{equation}
    \label{eq_target_critic}
    \begin{aligned}
        Y_m(t) = r_m(t) + \gamma Q^\prime_{m}(\mathbf{s}^\prime, \pi^\prime_{1}(o^\prime_{1}), \dots, \pi^\prime_{M}(o^\prime_{M}) | \psi^\prime_{m})
    \end{aligned}
\end{equation}

\noindent where ${\pi^\prime}_m(\cdot)$ and ${Q^\prime}_m(\cdot)$ are the outputs of the target subnet for the actor and critic network, respectively. Then, the gradient of $\text{MSE}(\psi_m)$ can be calculated as
\begin{sequation}
    \label{eq_gradient_critic}
    \begin{aligned}
        \nabla_{\psi_m}\text{MSE}(\psi_m) =& -2\mathbb{E}_{\psi_m}[r_m(t)+\gamma{Q^\prime}(S(t+1),a(t+1)|\psi^\prime_m)\\ &-{Q}(S(t),a(t)|{\psi}_m)]\nabla_{\psi_m}{Q}(S(t),a(t)|{\psi}_m).
    \end{aligned}
\end{sequation}

\noindent Accordingly, the critic network can be updated as
\begin{equation}
    \label{eq_update_critic}
    \begin{aligned}
        \psi_m \leftarrow \psi_m - \zeta_Q \nabla_{\psi_m} \text{MSE}(\psi_m),
    \end{aligned}
\end{equation}
\noindent where $\zeta_Q$ is the learning rate of the critic network.

\textbf{Target Subnet Update.} To stabilize the training process, the target subnets of critic network and actor networks are updated gradually to track the learned online networks. Specifically, the subnet of each critic network is upadated as follows:
\begin{equation}
    \label{eq_update_targetc}
    \begin{aligned}
        {\psi^\prime}_m\leftarrow\zeta_t{\psi^\prime}_m+(1-\zeta_t){\psi^\prime}_m,
    \end{aligned}
\end{equation}

\noindent where $\zeta_t$ means the update rate of the target subnet. Moreover, the subnet of each actor network is updated as follows:
\begin{equation}
    \label{eq_update_targeta}
    \begin{aligned}
        {\phi^\prime}_m\leftarrow\zeta_t{\phi^\prime}_m+(1-\zeta_t){\phi^\prime}_m.
    \end{aligned}
\end{equation}

\subsection{COCG Method}
\label{sec_COCG}

\par To address the challenges posed by the varying-dimensional and hybrid action space, we introduce the COCG method to enhance the MADDPG algorithm by deterministically deciding the computing resource allocation and UD association. Specifically, for the varying-dimensional computing resource allocation, we leverage convex optimization to derive a closed-form solution. Moreover, for the discrete UD association, we adopt a game-theoretic method, namely the coalitional game, to obtain a stable and mutually beneficial association structure in a distributed manner. The details are presented in the following subsections.

\subsubsection{Convex Optimization-based Computing Resource Allocation} 

\par Given the decisions of task offloading ratio $\bar{\mathbf{\Lambda}}$, UAV trajectory planning $\bar{\mathbf{Q}}$, and UD association $\bar{\mathbf{X}}$, while eliminating the irrelevant constant terms, UCMOP can be transformed into the computing resource allocation problem, which can be given as
\begin{subequations}
	\label{eq_problem1.1}
	\begin{alignat}{2}
		\mathbf{P1}: \quad & \min_{\mathbf{F}} \sum_{u \in \mathcal{U}} \sum_{i \in \mathcal{I}}\mathbb{I}_{\{\chi_i(t)=u\}}\eta_i(t)\lambda_i(t)\varpi_i(t)/f_{u,i}(t), \label{utility} \\
		\text{s.t.} \quad & \sum_{i\in \mathcal{I}}\mathbb{I}_{\{\chi_i(t)=u\}}f_{u,i}(t) \geq 0 , \ \forall u\in \mathcal{U}, i \in \mathcal{I},\label{pro1.1_c1}\\
            & \sum_{i\in \mathcal{I}}\mathbb{I}_{\{\chi_i(t)=u\}}f_{u,i}(t) \leq f_u^\text{max}. \ 
        \forall  u \in \mathcal{U}, \label{pro1.1_c2}
	\end{alignat}
\end{subequations}

\par Problem $\mathbf{P1}$ is a convex optimization problem, a property established in Theorem \ref{lemma_f_convex}. Consequently, we can apply convex optimization techniques~\cite{boyd2004convex} to solve it, and Theorem \ref{therem_opt_respource} provides the optimal computing resource allocation.

\begin{theorem}
\label{lemma_f_convex}
Problem $\mathbf{P1}$ is a convex optimization problem.
\end{theorem}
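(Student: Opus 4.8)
The plan is to show that Problem $\mathbf{P1}$ minimizes a convex objective over a convex feasible set, which is sufficient to establish convexity. Since the time index $t$ is fixed within the resource allocation subproblem (the decisions $\bar{\mathbf{\Lambda}}$, $\bar{\mathbf{Q}}$, and $\bar{\mathbf{X}}$ are given), the indicator $\mathbb{I}_{\{\chi_i(t)=u\}}$, the computation density $\eta_i(t)$, the offloading ratio $\lambda_i(t)$, and the task size $\varpi_i(t)$ all act as fixed nonnegative constants. Thus the only optimization variables are the allocations $f_{u,i}(t)$, which enter the objective only through terms of the form $\text{const}/f_{u,i}(t)$.

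First I would address the feasible region. Constraints \eqref{pro1.1_c1} and \eqref{pro1.1_c2} are linear inequalities in the variables $f_{u,i}(t)$ (each is a nonnegative weighted sum of the allocation variables bounded below by $0$ and above by the constant $f_u^\text{max}$). A set defined by finitely many linear inequalities is a polyhedron, hence convex, so the feasible set of $\mathbf{P1}$ is convex. This step is routine and I would state it briefly.

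Second, I would establish convexity of the objective. For each pair $(u,i)$ with $\mathbb{I}_{\{\chi_i(t)=u\}}=1$, the summand is $c_{u,i}/f_{u,i}(t)$ with $c_{u,i}=\eta_i(t)\lambda_i(t)\varpi_i(t)\geq 0$. The scalar function $f\mapsto 1/f$ is convex on the open domain $f>0$ (its second derivative $2/f^3$ is positive there), and multiplying by the nonnegative constant $c_{u,i}$ preserves convexity. Since each term depends on a distinct variable $f_{u,i}(t)$, the objective is a sum of univariate convex functions, and a nonnegative sum of convex functions is convex; equivalently, its Hessian is diagonal with nonnegative entries $2c_{u,i}/f_{u,i}(t)^3$ and is therefore positive semidefinite on the feasible region. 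Together with the convex feasible set, this shows $\mathbf{P1}$ is a convex program.

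The one subtlety I would flag, rather than a genuine obstacle, is the domain of the reciprocal: $1/f_{u,i}(t)$ is convex only for $f_{u,i}(t)>0$, whereas constraint \eqref{pro1.1_c1} permits the boundary value zero. I would handle this by noting that when $\lambda_i(t)\varpi_i(t)>0$ a zero allocation yields infinite delay and is implicitly excluded by the feasibility of problem $\mathbf{P}$ (specifically the delay constraint \eqref{pro_c3}), so the relevant domain is the open positive orthant where convexity holds; alternatively the objective can be taken with the standard extended-value convention ($c_{u,i}/0=+\infty$), under which $c_{u,i}/f$ remains a closed convex function on $f\geq 0$. Everything else is an immediate application of the composition and summation rules for convex functions from standard references such as \cite{boyd2004convex}.
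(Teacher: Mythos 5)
Your proof is correct and takes essentially the same approach as the paper's: the objective is a sum of terms $c_{u,i}/f_{u,i}(t)$ with nonnegative constants, each convex for $f_{u,i}(t)>0$ (positive second derivative, giving a diagonal positive semidefinite Hessian), and the constraints are linear, so the feasible set is convex and $\mathbf{P1}$ is a convex program. Your additional care about the boundary case $f_{u,i}(t)=0$, handled via the extended-value convention or implicit exclusion by the delay constraint, is a point of rigor that the standard argument leaves implicit but does not change the substance.
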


\begin{proof}
The detailed proof is provided in Appendix~A.
\end{proof}
\vspace{-0.5em}
\begin{theorem}
\label{therem_opt_respource}
The optimal computing resource allocation for problem $\mathbf{P1}$ is given as $\mathbf{F}^*=\{f_{u,i}^*(t)\mid\forall u\in \mathcal{U}, i\in \mathcal{I}, t\in\mathcal{T}\}$, where
\begin{equation}
    \label{eq_resource_allocation}
        \begin{aligned}
            f_{u,i}^*(t) = \frac{\sqrt{\frac{\eta_i(t) \lambda_i(t) \varpi_i(t)}{f_u^{\max}}}f_u^{\max}}{\sum_{i\in \mathcal{I}}\mathbb{I}_{\{\chi_i(t)=u\}} \sqrt{\frac{\eta_j(t) \lambda_j(t) \varpi_j(t)}{f_u^{\max}}}}.
        \end{aligned}
\end{equation}
\end{theorem}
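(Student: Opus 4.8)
The plan is to exploit the convexity of $\mathbf{P1}$ established in Theorem \ref{lemma_f_convex}, which guarantees that the KKT conditions are both necessary and sufficient for a global optimum. The first observation is that the objective in \eqref{utility} separates additively over the UAVs, and the constraints \eqref{pro1.1_c1}--\eqref{pro1.1_c2} couple only those variables $f_{u,i}(t)$ sharing the same UAV index $u$. I would therefore decompose $\mathbf{P1}$ into $|\mathcal{U}|$ independent per-UAV subproblems. Fixing a UAV $u$, writing $\mathcal{I}_u=\{i\in\mathcal{I}:\chi_i(t)=u\}$ for the set of associated UDs, and abbreviating $a_i(t)=\eta_i(t)\lambda_i(t)\varpi_i(t)$, the subproblem is to minimize $\sum_{i\in\mathcal{I}_u} a_i(t)/f_{u,i}(t)$ subject to $\sum_{i\in\mathcal{I}_u} f_{u,i}(t)\le f_u^{\max}$ and $f_{u,i}(t)\ge 0$.

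Next I would form the Lagrangian of each subproblem, attaching a single multiplier $\nu\ge 0$ to the capacity constraint \eqref{pro1.1_c2}. Before invoking stationarity, I would settle the boundary behavior. Since each $a_i(t)>0$, the objective is strictly decreasing in every $f_{u,i}(t)$, so any unused budget can be spent to strictly lower the cost; by complementary slackness this forces the capacity constraint to be active, i.e. $\sum_{i\in\mathcal{I}_u} f_{u,i}^*(t)=f_u^{\max}$. For the same reason the term $a_i(t)/f_{u,i}(t)\to\infty$ as $f_{u,i}(t)\to 0^+$, so the non-negativity constraints \eqref{pro1.1_c1} can never bind and the optimum is interior.

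With the boundary cases resolved, I would set the partial derivative of the Lagrangian to zero, $-a_i(t)/f_{u,i}(t)^2+\nu=0$, giving $f_{u,i}(t)=\sqrt{a_i(t)/\nu}$; thus the optimal allocation is proportional to $\sqrt{a_i(t)}$. Substituting this into the active capacity constraint $\sum_{i\in\mathcal{I}_u}\sqrt{a_i(t)/\nu}=f_u^{\max}$ determines the multiplier via $\sqrt{\nu}=\big(\sum_{j\in\mathcal{I}_u}\sqrt{a_j(t)}\big)/f_u^{\max}$. Eliminating $\nu$ yields $f_{u,i}^*(t)=\sqrt{a_i(t)}\,f_u^{\max}/\sum_{j\in\mathcal{I}_u}\sqrt{a_j(t)}$, which is precisely \eqref{eq_resource_allocation} once the indicator notation is reintroduced and each radical is rewritten through $\sqrt{a_i(t)}=\sqrt{a_i(t)/f_u^{\max}}\cdot\sqrt{f_u^{\max}}$, so that the factors of $f_u^{\max}$ inside the radicals cancel.

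The algebra is the routine water-filling-type manipulation and is not the difficult part. The step deserving the most care is the boundary analysis: arguing that the budget constraint must be tight while the non-negativity constraints stay slack, so that the stationary point I compute is genuinely the interior minimizer selected by KKT. Because Theorem \ref{lemma_f_convex} certifies convexity, this unique stationary point is the global optimum, and the closed form \eqref{eq_resource_allocation} follows immediately.
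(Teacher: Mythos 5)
Your proposal is correct and follows essentially the same route as the paper's Appendix~B proof: invoke the convexity established in Theorem~\ref{lemma_f_convex}, decompose per UAV, apply KKT with a single multiplier on the (necessarily tight) capacity constraint, and solve the stationarity condition $f_{u,i}(t)=\sqrt{a_i(t)/\nu}$ to recover the square-root-proportional allocation of \eqref{eq_resource_allocation}. Your explicit boundary argument (budget tight, non-negativity slack since the objective diverges as $f_{u,i}(t)\to 0^+$, assuming $\eta_i(t)\lambda_i(t)\varpi_i(t)>0$) is exactly the care the derivation requires, so no gap remains.
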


\begin{proof}
The detailed proof is provided in Appendix~B.
\end{proof}
\vspace{-0.5em}

\subsubsection{Game Theory-based UD Association}

\par Given the optimal computing resource allocation, while removing irrelevant constant terms, UCMOP can be transformed into a UD association problem, which can be expressed as
\begin{subequations}
	\label{eq_problem1.2}
	\begin{alignat}{2}
		\mathbf{P2}: \quad & \min_{\mathbf{X}} \sum_{i\in \mathcal{I}} C_i^\text{off}(\mathbf{X}),
  \label{utility1} \\
		\text{s.t.} \quad & \eqref{pro_c2} \text{ and } \eqref{pro_c3} \label{p1.2_pro},
	\end{alignat}
\end{subequations}

\noindent where $C_i^\text{off}(\mathbf{X})$ represents the task offloading costs of UD $i$, which can be calculated as 
\begin{equation}
    \label{eq_off_C}
    \begin{aligned}
        C_i^\text{off}(\mathbf{X})=
        \begin{cases}
            C_i^\text{U},\chi_i=u,\forall u \in \mathcal{U},\\
            C_i^\text{N},\chi_i=n,\forall n \in \mathcal{N},
        \end{cases}
    \end{aligned}
\end{equation}

\noindent where $C_i^\text{U}$ and $C_i^\text{N}$ represent the task offloading costs when UD $i$ selects to offload the task to a UAV or a satellite, respectively, which can be calculated as
\begin{subequations}
	\label{eq_problem_C_US}
	\begin{alignat}{2}
		 C_i^\text{U}=\max\big({T_{i,u}^\text{Off},T_i^\text{LC}}\big)w^T + E_{i,u}^\text{Off}w^E,
  \label{eq_eq_off_C_UAV} \\
		C_i^\text{S}=\max\left({T_{i,n}^\text{Off},T_i^\text{LC}}\right)w^T + E_{i,n}^\text{Off}w^E .\label{eq_eq_off_C_satellite}
	\end{alignat}
\end{subequations}

\par In SAGIN-MEC system under consideration, each UAV or satellite may serve multiple UDs simultaneously due to the limited computing resources. However, this creates a competitive environment where UDs should offload their tasks while contending for the limited available computing resources, leading to inefficient resource utilization. To address this issue, the coalitional game provides a powerful and distributed framework for incentivizing UDs to collaborate and form coalitions, thereby sharing the resources of UAVs or satellites for mutual benefit \cite{sun2023bargain-match}. Specifically, the UDs can choose to associate with the same UAV or satellite by forming a coalition to minimize individual offloading costs. Moreover, the coalitional game supports dynamic coalition formation to enable UDs to join or leave coalitions as system conditions change. Additionally, coalitions can be restructured dynamically to adapt to network dynamics, UD mobility, UAV trajectories, or demand changes. Therefore, we present a coalitional game-based method for UD association, which is detailed as follows.

\par \textbf{Coalitional Game Formulation.} We model the process of the UD association in time slot $t$ as a coalitional game which is defined as a triplet $\Pi=\left(\mathcal{I},\mathcal{K}, \mathcal{X},\mathcal{U}\right)$, where 
\begin{itemize}
\item $\mathcal{I}=\{1,\ldots, I\}$ denotes a set of players, i.e., UDs.
\item $\mathcal{K}=\mathcal{U}\cup \mathcal{N}$ represents the set of MEC servers, i.e., UAVs and satellites.
\item $\mathcal{X}=\{\chi_i\mid \forall i \in\mathcal{I}, \chi_i\in\mathcal{K}\}$ is the set of UD associations.
\item $\mathcal{U}=\{U_i \mid \forall i \in \mathcal{I} \}$ represents the utility functions of UDs, which is used to evaluate the effectiveness of the UD association decision, i.e., $U_i=-C_i^\text{Off}$.
\end{itemize}

\par Each UD aims to minimize its cost by selecting an optimal UD association decision. Therefore, the coalitional game for UD association can be mathematically modeled as a distributed optimization problem as follows:
\begin{subequations}
	\label{eq_problem1.3}
	\begin{alignat}{2}
		\mathbf{P2}: \quad & \min_{\chi_i} \sum_{i\in \mathcal{I}} C_i^\text{off}(\chi_i,\chi_{-i}),
  \label{utility2} \\
		\text{s.t.} \quad & \eqref{pro_c2} \text{ and } \eqref{pro_c3},
        \label{p1.3_pro}
	\end{alignat}
\end{subequations}

\noindent where $\chi_{-i}$ is the set of decisions for all UDs, excluding $i$.  

\textbf{The Solution to the Coalitional Game.} To solve problem $\mathbf{P2}$, we first introduce the Nash equilibrium (NE), a stable situation where no UD is motivated to unilaterally alter its association decision, as given in Definition~\ref{def_NE}.
 
\begin{definition}
\label{def_NE}
    The profile of UD association decision $x^*=\left(\chi_1^*,\chi_2^*,\ldots,\chi_i^*\right)$ is an NE for $\Pi$ if and only if 
\begin{equation}
    \label{eq_nash}
    U_i(\chi_i^*,\chi_{-i}^*)\geq U_i(\chi_i,\chi_{-i}^*), \forall i\in\mathcal{I}, \forall \chi_i\in\mathcal{X}, \chi_i\neq \chi_i^*,
\end{equation}
\end{definition}

\par Then we elaborate on the coalition formation process. Specifically, each UD aims to minimize the cost by independently selecting either a UAV or a satellite for association. UDs that choose the same MEC server $k\in \mathcal{K}$ are considered to join the coalition of MEC server $k$, which can be represented as $ G_k = \{1,\ldots,i\}_{i\in \mathbb{I}_{\{\chi_i(t)=k\}}}$. Therefore, the UDs can be divided into $n$ disjoint coalitions due to the discrete feature of the UD association, which is given as follows:
\begin{equation}
    \label{eq_coalition_partition}
    \begin{aligned}
       \mathcal{G} =& G_1 \cup G_2 \cup \cdots \cup G_k, \\ & G_k \cap G_{k^\prime} = \emptyset, \, \forall k, k^\prime \in \mathcal{K}, \ k \neq k^\prime,
    \end{aligned}
\end{equation}

\noindent where $\Theta=\{G_1 \cup G_2 \cup \cdots \cup G_k,\}$ represents a coalition partition. For example, if solution $\Theta=\{\{1,3\}_1, \{2,4,5\}_2\}$ is achieved, the element $\{1,3\}_1$ indicates that UD 1 and UD 3 select UAV $1$ for association. Accordingly, the  coalition utility can be calculated as the sum of the utilities for all UDs in the coalition, which is given as
\begin{equation}
    \label{eq_coalition_utility}
    \begin{aligned}
        U_k^G=\sum_{i\in G_k} U_i(\chi_i,\chi_{-i}).
    \end{aligned}
\end{equation}

\par Moreover, to compare the performance relationship between two coalitions, we introduce the concept of preference coalition in Definition \ref{def_pre_coalition} \cite{Chen2021}.

\begin{definition}
\label{def_pre_coalition}
Given two coalitions $G_k$ and $G_k^\prime$, the coalition of MEC server $k$ (i.e., $G_k$) outperforms that of MEC server $k^\prime$ (i.e., $G_{k^\prime}$) if the former has higher coalition utility. Therefore, the preference coalition can be described as
\begin{equation}
    \label{eq_preference_coalition}
    \begin{aligned}
        G_k \succeq G_{k^\prime} \Leftrightarrow U^G_k \geq U^G_{k^\prime},
    \end{aligned}
\end{equation}

\noindent where $\succeq$ means that the performance of coalition $G_k$ is superior to that of coalition $G_{k^\prime}$. In other words, UDs prefer to join coalition $G_k$ rather than $G_{k^\prime}$.
 \end{definition}

\par Guided by the preference relation, each UD can decide to join or leave a coalition by adhering the switch rule and insert rule as follows.

\textit{Switch rule:} UDs $i$ ($i \in G_k$) and $i^\prime$ ($i^\prime \in G_{k^\prime}$) prefer to switch their coalitions, i.e., $i \in G_{k^\prime}$ and $i^\prime \in G_k$, when the following conditions are satisfied:
\begin{sequation}
    \label{eq_switch_rule}
    \begin{aligned}
        \sum_{j \in \{G_k \setminus \{i \cup i^\prime\}\}} U_j(\chi_j, \chi_{-j}) + \sum_{j\prime \in \{G_{k^\prime} \setminus \{i^\prime \cup i\}\}} U_j(\chi_j, \chi_{-j}) \\\geq U^G_k + U^G_{k^\prime}.
    \end{aligned}
\end{sequation}

\textit{Insert rule:} UD $i$ prefers to leave coalition $G_k$ and join coalition $G_{k^\prime}$ if the following condition is satisfied:
\begin{equation}
    \label{eq_insert_rule}
    \begin{aligned}
        \sum_{j \in \{G_k \setminus \{i\}\}} U_j(\chi_j, \chi_{-j}) + \sum_{j \in \{G_{k^\prime} \cup \{i\}\}} U_j(\chi_j, \chi_{-j}) \\\geq U^G_k + U^G_{k^\prime}.
    \end{aligned}
\end{equation}

\par The aforementioned rules ensure that the total utility of the two new coalitions is greater than that of the original ones after performing the switch or insert operations. As a result, the UD association decision can be obtained by iteratively update the association decisions of the UDs through applying switch or insert rules until the NE is reached. Moreover, the UD association result is an NE, which is given as in Theorem \ref{theorem_ns}.

\begin{theorem}
\label{theorem_ns}
The UD association decision obtained by the coalitional game is an NE.
\vspace{-0.5em}
\end{theorem}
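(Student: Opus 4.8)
The plan is to prove that the coalition formation process, governed by the switch and insert rules, necessarily terminates at a partition that satisfies the Nash equilibrium condition of Definition~\ref{def_NE}. I would structure the argument as a convergence-plus-stability proof. First, I would show that the sequence of coalition partitions generated by repeatedly applying the switch and insert rules is guaranteed to converge in a finite number of iterations. The key observation is that each rule is only executed when it strictly increases the total system utility $\sum_{k\in\mathcal{K}} U_k^G = \sum_{i\in\mathcal{I}} U_i(\chi_i,\chi_{-i})$; this follows directly from the inequalities in~\eqref{eq_switch_rule} and~\eqref{eq_insert_rule}, which require the post-operation total utility of the affected coalitions to exceed the pre-operation total. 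Since the number of UDs $I$ and the number of MEC servers $|\mathcal{K}| = U + N$ are both finite, the set of admissible coalition partitions satisfying~\eqref{eq_coalition_partition} is finite.

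Second, I would combine this monotonicity with finiteness to establish termination. Because every switch or insert operation strictly increases the monotonically nondecreasing and bounded-above system utility, and because the partition space is finite, no partition can be revisited; hence the process must reach a terminal partition $\Theta^*$ after finitely many operations, at which point no further switch or insert operation can strictly improve the total utility. I would denote the associated profile $x^* = (\chi_1^*,\ldots,\chi_I^*)$.

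Third, I would argue that this terminal profile $x^*$ satisfies the NE condition~\eqref{eq_nash}. The idea is to proceed by contradiction: suppose $x^*$ is not an NE, so some UD $i$ has a deviation $\chi_i \neq \chi_i^*$ with $U_i(\chi_i,\chi_{-i}^*) > U_i(\chi_i^*,\chi_{-i}^*)$. Such a unilateral deviation corresponds precisely to UD $i$ leaving its current coalition and joining the coalition of the newly selected server, which is exactly an admissible insert operation (or, when paired with another UD's symmetric move, a switch operation). A profitable deviation for UD $i$ that improves its own utility would satisfy the insert-rule inequality~\eqref{eq_insert_rule}, meaning the operation would have been executed and $\Theta^*$ would not be terminal --- contradicting maximality. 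Therefore no such deviation exists and $x^*$ is an NE.

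The main obstacle I anticipate is bridging the gap between the \emph{individual-utility} formulation of the NE in~\eqref{eq_nash} and the \emph{coalition-total-utility} formulation of the switch and insert rules in~\eqref{eq_switch_rule}--\eqref{eq_insert_rule}. A unilateral deviation improving UD $i$'s own utility need not, in general, improve the summed utility of the two affected coalitions, because the deviation may impose externalities on the other coalition members (for instance, an additional UD joining a UAV coalition reduces the per-UD bandwidth and computing-resource shares in~\eqref{eq_trans_rate_AAV} and~\eqref{eq_resource_allocation}, thereby lowering incumbents' utilities). I would therefore need to reconcile these two notions carefully --- either by arguing that the individual cost $C_i^\text{off}$ captured in the utility fully reflects congestion so that an individually improving move is also coalitionally improving, or by noting that the NE in this game is defined with respect to the rule-based dynamics and showing the terminal partition is stable precisely against the rule-admissible deviations. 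Establishing that the rules' total-utility criterion correctly certifies individual stability is the delicate step, and I would make explicit any assumption (such as transferable or aligned utility within a coalition) needed to close it.
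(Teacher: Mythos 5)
Your overall skeleton---a potential-function argument (the total utility is nondecreasing under switch/insert operations), finiteness of the partition space, hence termination at a partition admitting no further rule-admissible move, and then stability of that terminal partition---is the standard route for results of this kind and is the same structure the paper follows. The convergence half is essentially fine, up to one small point you gloss over: the rules \eqref{eq_switch_rule} and \eqref{eq_insert_rule} are stated with \emph{non-strict} inequalities, so to rule out cycling among equal-utility partitions you must either execute a move only when the inequality is strict or impose a consistent tie-breaking rule; your claim that each executed operation ``strictly increases'' the system utility is an assumption you add, not something the rules as written give you.

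The real problem is the step you yourself flag, and it is not a cosmetic worry---it is a genuine gap that your contradiction argument does not survive as written. Definition~\ref{def_NE} defines the NE through each UD's \emph{individual} utility, whereas the switch and insert rules certify improvement of the \emph{summed} utility of the two affected coalitions. These two orderings are not equivalent in this game, because a deviating UD imposes congestion externalities on the incumbents of the coalition it joins: the bandwidth share and the allocation \eqref{eq_resource_allocation} both shrink for existing members when a new UD arrives. Consequently, a unilateral deviation that strictly raises $U_i$ can strictly lower the summed utility of the two affected coalitions; such a deviation violates \eqref{eq_nash} at the terminal partition, yet fails \eqref{eq_insert_rule} and therefore is never executed by the dynamics. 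Hence ``no rule-admissible move remains'' does not imply ``no individually profitable unilateral deviation remains,'' and your third step breaks exactly there. Closing the gap requires what you anticipated but did not supply: either a proof, from the specific structure of $C_i^{\text{off}}$, that any individually improving move also satisfies the insert criterion (false in general, so it would need the particular cost model), or an explicit redefinition of stability with respect to the rule-admissible operations rather than arbitrary unilateral deviations. As submitted, your argument establishes convergence to a partition that is stable against coalition-utility-improving switch and insert operations, which is a weaker property than the NE claimed in the theorem.
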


\begin{proof}
The detailed proof is provided in Appendix~C.
\end{proof}
\vspace{-0.5em}

\par The process of game theory-based UD association is described in Algorithm~\ref{al_cfg}. First, the coalition partition is initialized, where the UDs located within the coverage area of a UAV are assigned to the coalition of that UAV and the UDs outside the coverage of any UAVs are assigned to the coalition of the nearest satellite (line~\ref{cfg:line:init}). Then, in each iteration, a UD $i$ is selected from its current coalition $G_k$, and it explores a new coalition $G_{k^{\prime}}$ for potential association (line~\ref{cfg:line:selecti1}). Then, the coalition structure is iteratively refined through a distributed process where each UD autonomously seeks to improve its own utility. Subsequently, another UD $i^{\prime}$ is selected from $G_{k^{\prime}}$ to facilitate a utility comparison with UD $i$ (line~\ref{cfg:line:selecti2}). Next, if the exchange rule is satisfied, the coalitions $G_k$ and $G_k^\prime$ are swapped (lines~\ref{cfg:line:switch_start} to \ref{cfg:line:switch_end}). Finally, UD $i$ decides whether to join a new coalition based on the insertion rule (lines~\ref{cfg:line:insert_start} to \ref{cfg:line:insert_end}). Repeat the above steps until no UD deviates from its current coalition.

\begin{algorithm}[]
    \caption{Game Theory-based UD Association}
    \label{al_cfg}
    \KwIn{The offloading ratio $\lambda_i$ of UDs, and the locations  of UAVs $\mathbf{q}_{u}(t)$ and UDs $\mathbf{q}_{i}(t)$. }
    \KwOut{UD association $\mathbf{X}^{*}$}
        Initialize coalition partition $\mathcal{G}$;\label{cfg:line:init} 
    \Repeat{The partition converges to a stable state}{
        Randomly select UD $i$ and denote the coalition that it belongs to as $G_k$;\\\label{cfg:line:selecti1}
        Randomly select another coalition $G_{k^\prime}$ from the nominee set $\mathcal{K}_i$, $G_k \neq G_{k^\prime}$ of $i$;\\ \label{cfg:line:selecti2} 
        Calculate the coalition utility $U^G_{k}$ and $U^G_{k^\prime}$;\\
        Randomly select a UD $i^\prime$ from coalition $G_{k^\prime}$;\\
        \eIf{Switch rule (28) is satisfied}{
            $G_{k^\prime} \leftarrow \{G_{k^\prime} \setminus i^\prime\} \cup i$; \\\label{cfg:line:switch_start}
            $G_k \leftarrow \{G_k \setminus i\} \cup i^\prime$; \\ \label{cfg:line:switch_end}
        }{
            Satisfies insert rule (29)\\
            $G_{k^\prime} \leftarrow G_{k^\prime} \cup i$; \\\label{cfg:line:insert_start}
            $G_k \leftarrow G_k \setminus i$;
        }\label{cfg:line:insert_end}
    }
\end{algorithm}

\subsection{Main Steps of MADDPG-COCG Algorithm and Performance Analysis}

\par In this section, the main steps and complexity of the MADDP-COCG algorithm is presented.

\subsubsection{Main Steps of MADDPG-COCG Algorithm}

\par The overall procedure of the proposed MADDPG-COCG algorithm is summarized in Algorithm \ref{alg:maddpg_cocg}. The proposed MADDPG-COCG algorithm determines the final decisions for task offloading, UAV trajectory, resource allocation, and UD association by integrating the COCG method with MADDPG. During the training phase, agents (UAVs and UDs) interact with the environment. Each agent selects an action based on its local observation and policy. The environment then provides a joint reward and the next observation after executing these actions along with the deterministic decisions from the COCG method. Then the experience is stored in the replay buffer. For network updates, a batch of experiences is randomly sampled from the buffer. The critic network is updated by minimizing the temporal difference error, followed by a periodic update of the actor network. Finally, the target networks are softly updated to ensure training stability.

\begin{algorithm}[]
		\raggedright
		\caption{MADDPG-COCG Algorithm}
		\label{alg:maddpg_cocg}
		\LinesNumbered
            Initialize an experience replay buffer $\mathcal{D}$; \\
            \For{\rm{m} = $1$ \KwTo $M$}{
                Initialize the online networks $Q_{\psi_m}$ and $\pi_{\phi_m}$ with parameters $\psi_m$ and $\phi_m$; \\
                Initialize the target networks $Q_{\psi^\prime_m}$ and $\pi_{\phi^\prime_m}$ with parameters $\psi^\prime_m$ and $\phi^\prime_m$; \\
            }
		
		\For{\rm{each episode}}{
			Reset the initial environment state $\boldsymbol{s}(0)$;\\

			\Repeat{environment is terminated or $step \geq T$}{
				$step \gets 0$;\\
                    \For{\rm{m} = $1$ \KwTo $M$}{
                       Receive the observation $o_m(step)$; \\
                       Select action $a_m(step)$ according to the policy $\pi_m(o_m(step))|\phi_m)$ of agent $m$; \\
                    }
                    Solve computing resource allocation and UDs association problem by calling \textbf{Algorithm~\ref{al_cfg}}; \\
                    Execute the joint action $a(step)$;\\
                    The environment returns the joint reward $r(step)$ and the next state $s(step+1)$, according to the results of \textbf{Algorithm~\ref{al_cfg}}; \\
                    Add the experience tuple $(s(step), a(step), r(step), s(step+1))$ to the relay buffer $\mathcal{D}$; \\
                    Randomly choose a batch $\mathcal{B}$; \\
                    Update the parameters of the online subnet for the critic network according to Eq.~\eqref{eq_update_critic}; \\
                    \If{$step\mod d$}
				{
					Update the parameters of the actor network based on Eq.~\eqref{eq_update_actor};\\
					Soft-update the online target subnets based on Eqs.~\eqref{eq_update_targetc} and \eqref{eq_update_targeta};}
                    
			}		
		}
\end{algorithm}

\subsubsection{Complexity Analysis}
\par The complexity of the proposed MADDPG-COCG algorithm is analyzed in two stages: training and execution~\cite{zhang2025multi-objective-diffusion}.

\textbf{Training Phase:} During the training phase, the computational complexity is primarily determined by experience collection and network updates at each time step. Experience collection requires $\mathcal{O}(M|\phi_m| + \text{Iter}_{cg} I (U+N))$ for action selection (forward pass) and COCG execution for $M$ agents. Network updates have a complexity of $\mathcal{O}(\frac{1}{d}MB|\phi_m| + MB|\psi_m|)$ for the actor and critic networks. The overall space complexity is dominated by the network parameters and the replay buffer, totaling $\mathcal{O}(M(|\phi_m|+|\psi_m|) + D(|s|+|a|+M))$.

\textbf{Execution Phase:} The computational complexity during execution per time step is $\mathcal{O}(M|\phi_m| + \text{Iter}_{cg} I (U+N))$, which comprises the forward pass of $M$ actor networks and the execution of the COCG method. The corresponding space complexity is $\mathcal{O}(M|\phi_m|)$ to store the actor network parameters for all agents.

\section{Simulation Results}
\label{sec_simulation}

\subsection{Simulation Setup}
\label{simulation_set_up}

\par In this section, we introduce the simulation setup, including the system scenario, key parameters, benchmarks, and performance metrics.

\subsubsection{Scenarios} 

\par We consider an SAGIN-MEC system where 3 UAVs and 1 LEO satellite are deployed to provide computing services for 15 UDs in a $1000\times 1000 \ \text{m}^2$ rectangular area. Moreover, the time horizon is set to $100$ s, and it is divided into $T = 100$ time slots, each with a equal duration of $\tau = 1$ s.

\subsubsection{Parameters}

\par For the satellite, the orbital altitude is set as $1000$ km~\cite{Chai2023}. Moreover, for the UAVs, the initial positions are set as $q_1^I=[150-250, 150-250]$, $q_2^I=[750-850,150-250]$ and $q_3^I=[450-550,750-850]$, respectively, and the fixed altitude is set as $H$ = 100 m~\cite{Sun2024TJCCT}.The other parameters are set to the default values in Table~\ref{tab_simuParameter}. 
\begin{table}[t]
	\setlength{\abovecaptionskip}{3pt}%
	\setlength{\belowcaptionskip}{0pt}%
	\caption{Simulation parameters}
	\label{tab_simuParameter}
	\renewcommand*{\arraystretch}{1.2}
	\begin{center}
		\begin{tabular}{p{.07\textwidth}|p{.23\textwidth}|p{.12\textwidth}}
			\hline
			\hline
			\textbf{Symbol}&\textbf{Description}&\textbf{Default value}\\
			
			\hline
				    $v_U^\text{max}$&Maximum velocity constraint for the UAVs &25 m/s \cite{Zeng2019}\\
			\hline
				    $\varpi_i(t)$&Task size& [1, 5] Mb \\
			\hline
				    $\eta_{i}(t)$&Computation density & [1000, 1500] cycles/Byte \\
			\hline
   				    $T_{i}^{\text{max}}(t)$&Maximum tolerable delay & [1, 5] s \\
			\hline
                    $f_{i}^{\text{loc}}$&Computing resources of UD $i$ & 0.3 GHz \\
			\hline
                     $f_{u}^{\text{max}}$&Computing resources of UAV $u$&[2, 4] GHz\\

			\hline
                     $p_{i,u}$, $p_{i,n}$&Transmit power of UD $i$& [20, 25] dBm\\	
			\hline
			         $B_u^\text{total}$&Total bandwidth of UAV $u$& 10 MHz\\
			\hline
				   $B_{i,n}(t)$ &Bandwidth between UD $i$ and satellite $n$ &1 MHz\\
			  \hline 	
				  $\sigma^2$& Noise power&$1.58 \times 10^{-13}$ W \\
			  \hline
				 $\epsilon_1$, $\epsilon_2$& Environment-dependent variables for LoS probability & 4.88, 0.43\\
			  \hline
			      $\rho^L$, $\rho^N$ &Additive path loss for LoS and NLoS links & 0.1 dB, 21 dB\\
			  \hline
			  	$\delta_1$, $\delta_2$, $\delta_3$, $\delta_4$&Nakagami fading parameters & 4, 2, 3, 1 \cite{Zhang2018Dense}\\
			  \hline
				  $E_u^{\text{max}}$& Energy constraint of UAV $u$&36 kJ\\
			  \hline
				  $E_i^{\text{max}}$& Energy constraint of UD $i$&1 (Wh/GHz)\\
			  \hline
				  $\gamma$&Discount factor&0.95\\
			   \hline
			         $\bar{d}$& Frequency of policy updates& 5\\
			   \hline
			  	 $\zeta_{\pi}$, $\zeta_{Q}$&Learning rates of the actor network and critic network&$5\times10^{-4}$, $5\times10^{-4}$\\
                 \hline
			  	 $\zeta_{t}$ &Soft update rate of the target networks&$5\times10^{-3}$\\
			   \hline
		\end{tabular}
	\end{center}
\end{table}

\subsubsection{Benchmarks} 
\par This work evaluates the proposed MADDPG-COCG algorithm in comparison with the following baselines. 
\begin{itemize}
        \item \textit{Equal computing resource allocation (ECRA)}: The available computing resources of each UAV are equally allocated to the requested UDs, while the decisions of UD association, task offloading ratio, and UAV trajectory panning are determined based on the proposed MADDPG-COCG algorithm.
        
        \item \textit{Nearby offloading (NO)}: Each UD is associated with the nearby server, while the decisions of computing resource allocation, task offloading ratio, and UAV trajectory panning are determined based on the proposed MADDPG-COCG algorithm.
        
        \item \textit{Conventional MADDPG \cite{Du2024MADDPG}}: The UAV agents learn trajectory planning decisions, while the UD agents determine the decisions of UD association, task offloading ratio, and computing resource allocation. Note that, to address the issue of varying dimensions in the action and state spaces caused by changes in the number of associated UDs when UAVs make resource allocation decisions, the UD agents handle resource request decisions, and the UAVs allocate the computing resources based on the requests submitted by the UD agents.
        
        \item \textit{Multi-agent proximal policy optimization (MAPPO) \cite{Li2025Collaborative}}: The MAPPO is employed to optimize the computing resource allocation, UD association, task offloading ratio, and UAV trajectory planning.
\end{itemize}

\subsubsection{Performance Indicators}

To evaluate the overall performance of the proposed MADDPG-COCG algorithm, we adopt the following metrics:

\begin{itemize}
    \item \textit{Aggregated UD Cost}: $ \sum_{t\in\mathcal{T}} C(t)$, which indicates the aggregated cost of all UDs.

    \item \textit{Average Task Completion Delay}: $\frac{1}{T}\sum_{t\in\mathcal{T}}\frac{1}{I}\sum_{i\in\mathcal{I}}T_i(t)$, which indicates the average delay for completing each task.

    \item \textit{Average UD Energy Consumption}: $\frac{1}{T}\sum_{t\in\mathcal{T}}\frac{1}{I}\sum_{i\in\mathbb{I}}E_{i}(t)$, which indicates the average energy consumption of UDs per unit time.

    \item \textit{Average UAV Energy Consumption}: $\frac{1}{T}\sum_{t\in\mathcal{T}}\frac{1}{I}\sum_{u\in\mathcal{U}}(E_u(t)  +\sum_{i\in\mathbb{I}_{\{\chi_i(t)=u\}}}E_{u}(t))$, which indicates the average energy consumption of UAVs per unit time.
\end{itemize}

\subsection{Evaluation Results}
\label{Numerical Results}

\par In this section, we first examine the system performance of the proposed MADDPG-COCG algorithm with default parameter settings. Then, we analyze the performance impact of varying parameters on both the proposed MADDPG-COCG algorithm and the benchmarks. Finally, we evaluate the effectiveness of the MADDPG-COCG.

\begin{figure*}[!hbt] 
	\centering
		\setlength{\abovecaptionskip}{2pt}%
		\setlength{\belowcaptionskip}{2pt}%
	\subfigure[Aggregated UD cost]
	{
		\begin{minipage}[t]{0.23\linewidth}
			\raggedleft
			\includegraphics[width=1.75in]{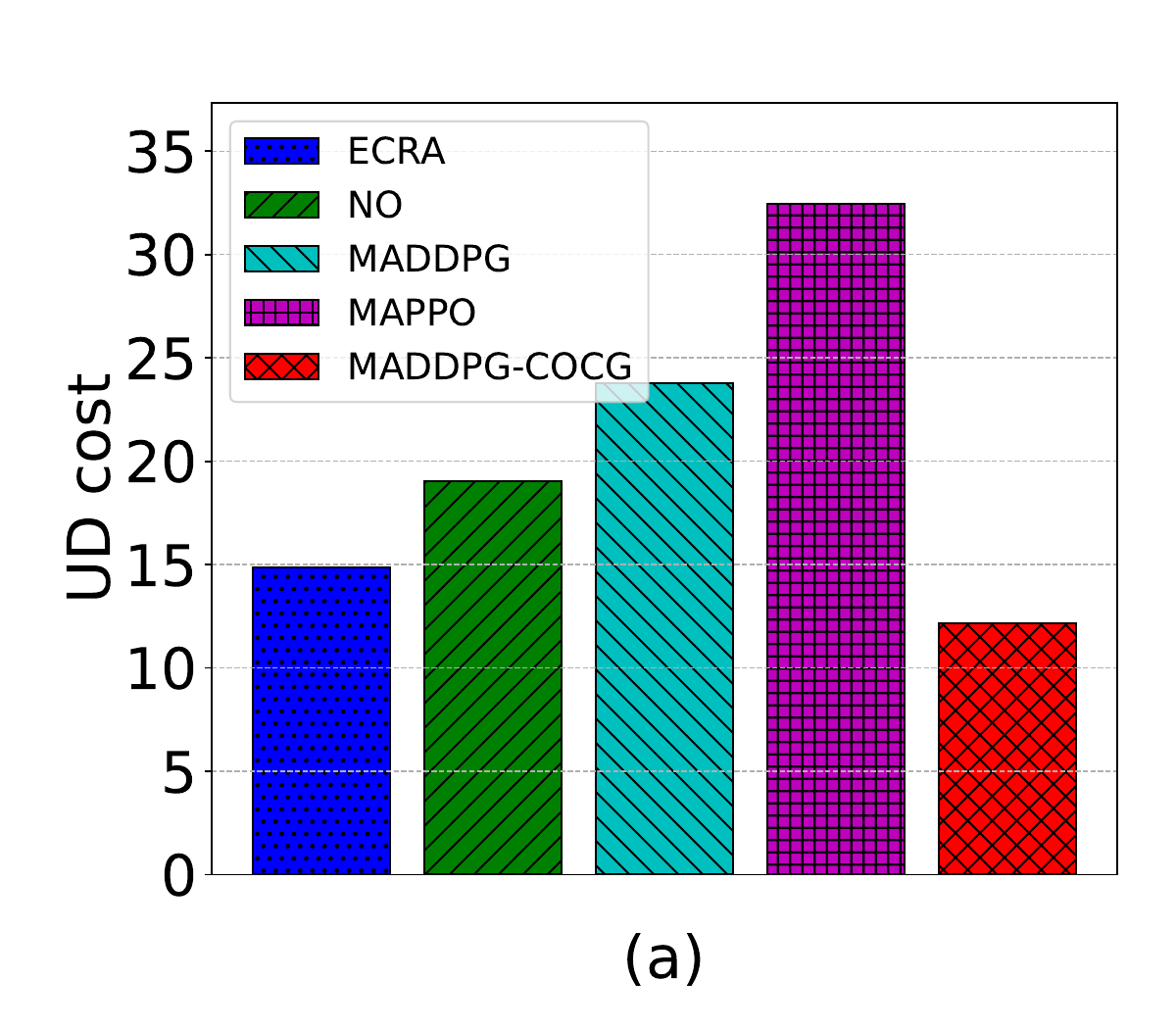}
		\end{minipage}
	}
	\subfigure[Average task completion delay]
	{
		\begin{minipage}[t]{0.23\linewidth}
			\centering
			\includegraphics[width=1.75in]{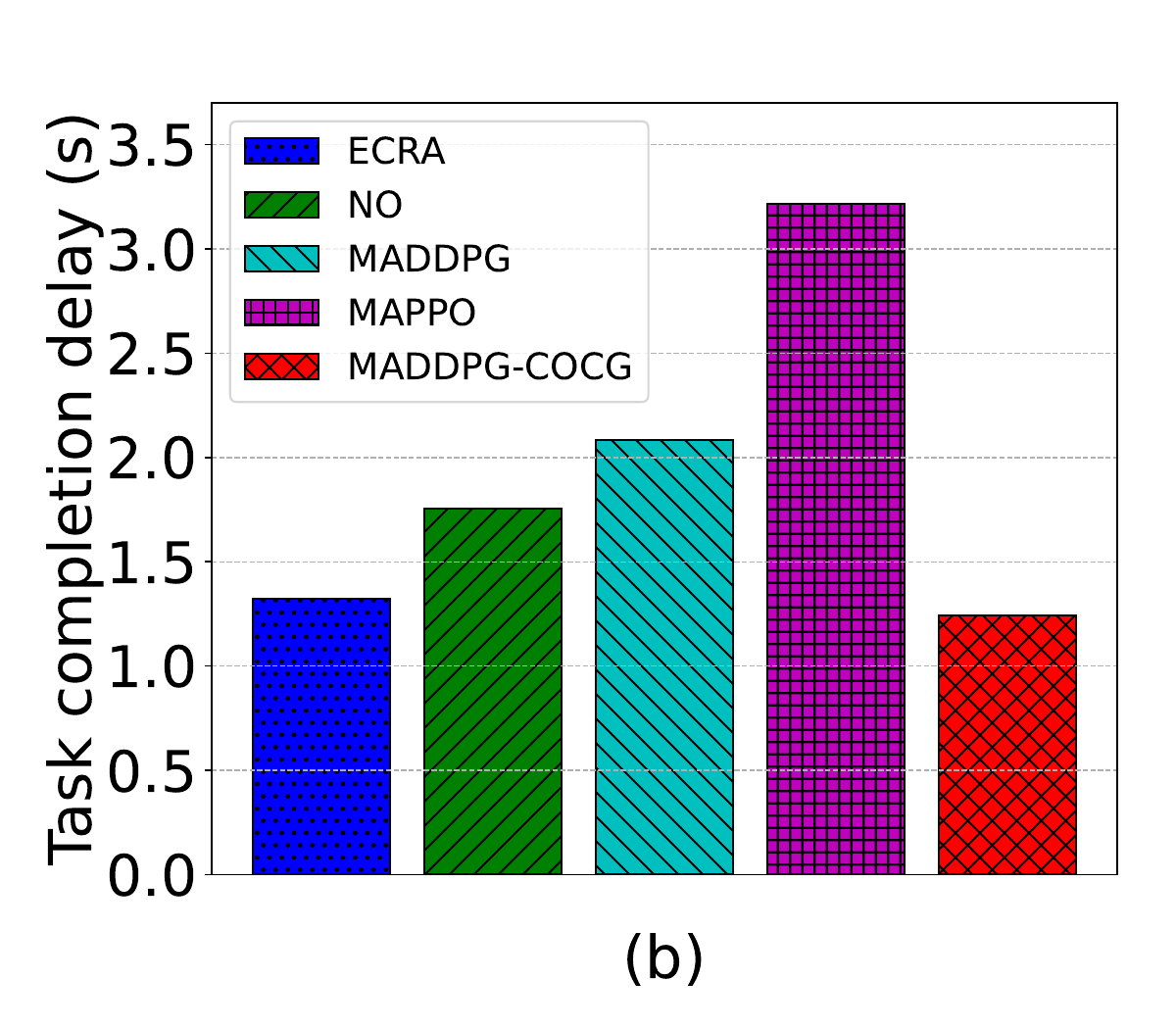}	
		\end{minipage}
	}
	\subfigure[Average UD energy consumption]
	{
		\begin{minipage}[t]{0.23\linewidth}
			\centering
			\includegraphics[width=1.75in]{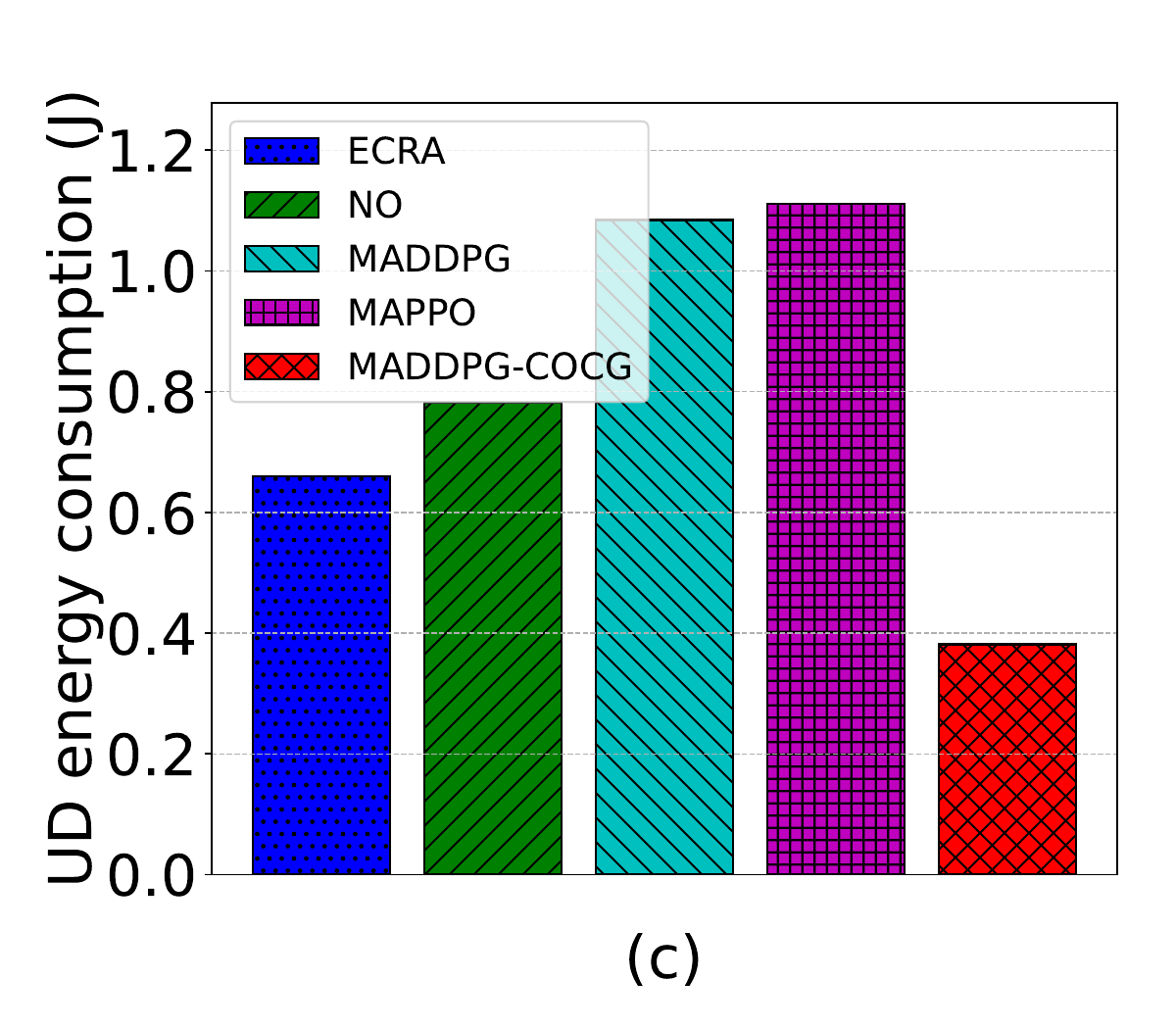}
		\end{minipage}
	}
        \subfigure[Average UAV energy consumption]
	{
		\begin{minipage}[t]{0.23\linewidth}
			\centering
			\includegraphics[width=1.75in]{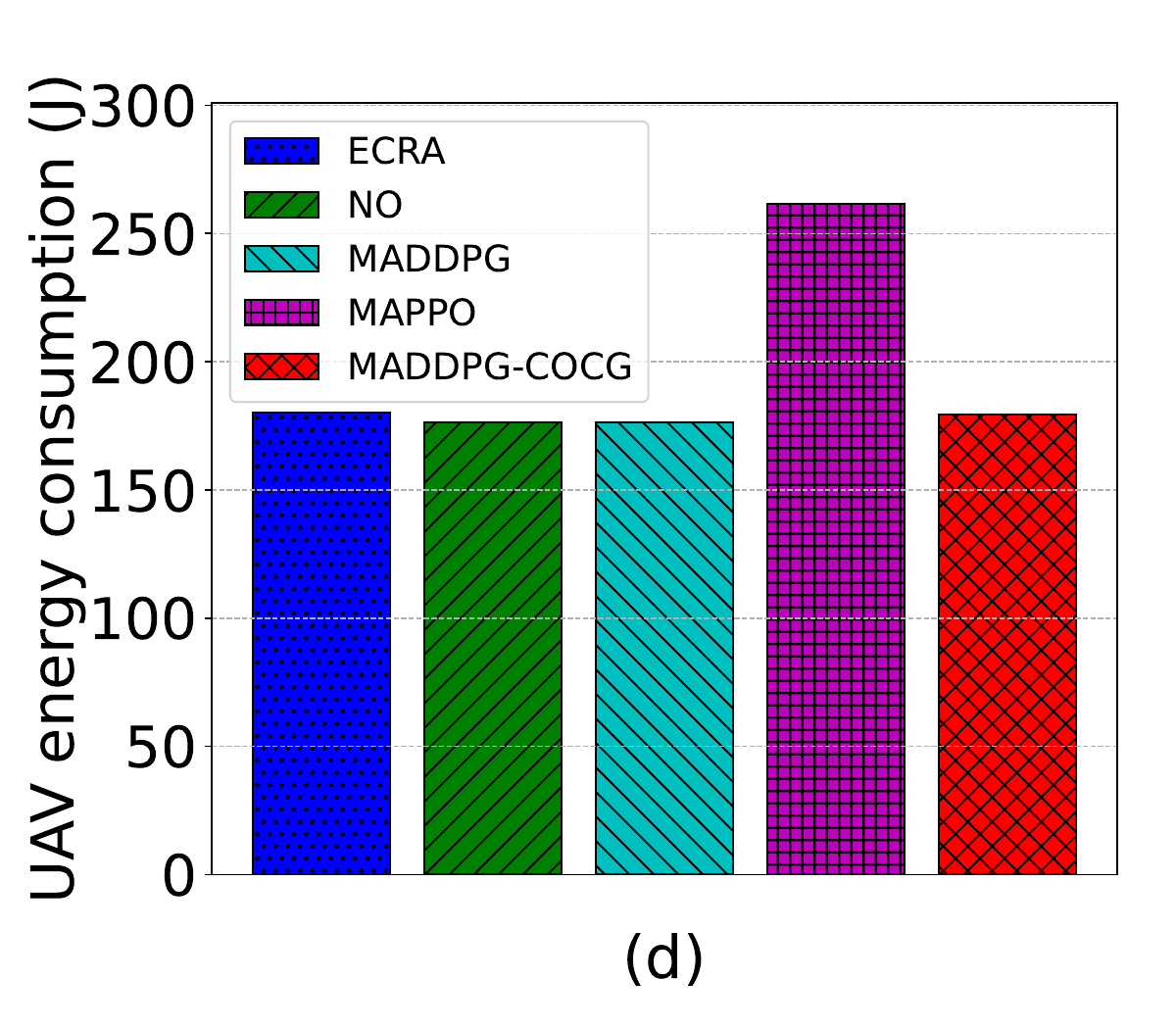}
		\end{minipage}
	}
	\centering
	\caption{System performance comparison among different algorithms.}
	\label{fig_Performance}
     \vspace{-1.5em}
\end{figure*}

\subsubsection{Algorithm Performance Evaluation}

\par Fig. \ref{fig_Performance} presents the performance comparison among the different algorithms under the default parameter settings. Specifically, it can be observed from Figs. \ref{fig_Performance}(a), \ref{fig_Performance}(b), and \ref{fig_Performance}(c) that the proposed MADDPG-COCG algorithm significantly outperforms the benchmarks in terms of aggregated UD cost, average task completion delay, and average UD energy consumption. Moreover, Fig.~\ref{fig_Performance}(d) exhibits the MADDPG-COCG algorithm incurs a slight increase in UAV energy consumption compared to ECRA, NO, and MADDPG, while remaining considerably more energy-efficient than MAPPO. This is because in the considered SAGIN-MEC system, most tasks are delay-sensitive, and mobile UDs are constrained by limited battery capacities due to their physical size. Therefore, the formulated UCMOP adopts a user-centric design that aims to minimize UD cost by jointly considering UD energy consumption and task completion delay, while ensuring UAVs operate within feasible energy constraints. In other words, the objective of MADDPG-COCG is to improve user-centric performance under UAV energy constraints. This reveals a practical trade-off, where a modest expenditure of UAV energy achieves substantial improvements in service quality for users. Accordingly, the proposed MADDPG-COCG algorithm achieves superior UD performance in terms of cost, delay, and energy consumption, while maintaining relatively efficient UAV energy usage comparable to other benchmarks.

\vspace{-1.0em}
\begin{figure}[!hbt] 
    \centering
    \setlength{\abovecaptionskip}{2pt}%
    \setlength{\belowcaptionskip}{2pt}%
    \includegraphics[width =3.5in]{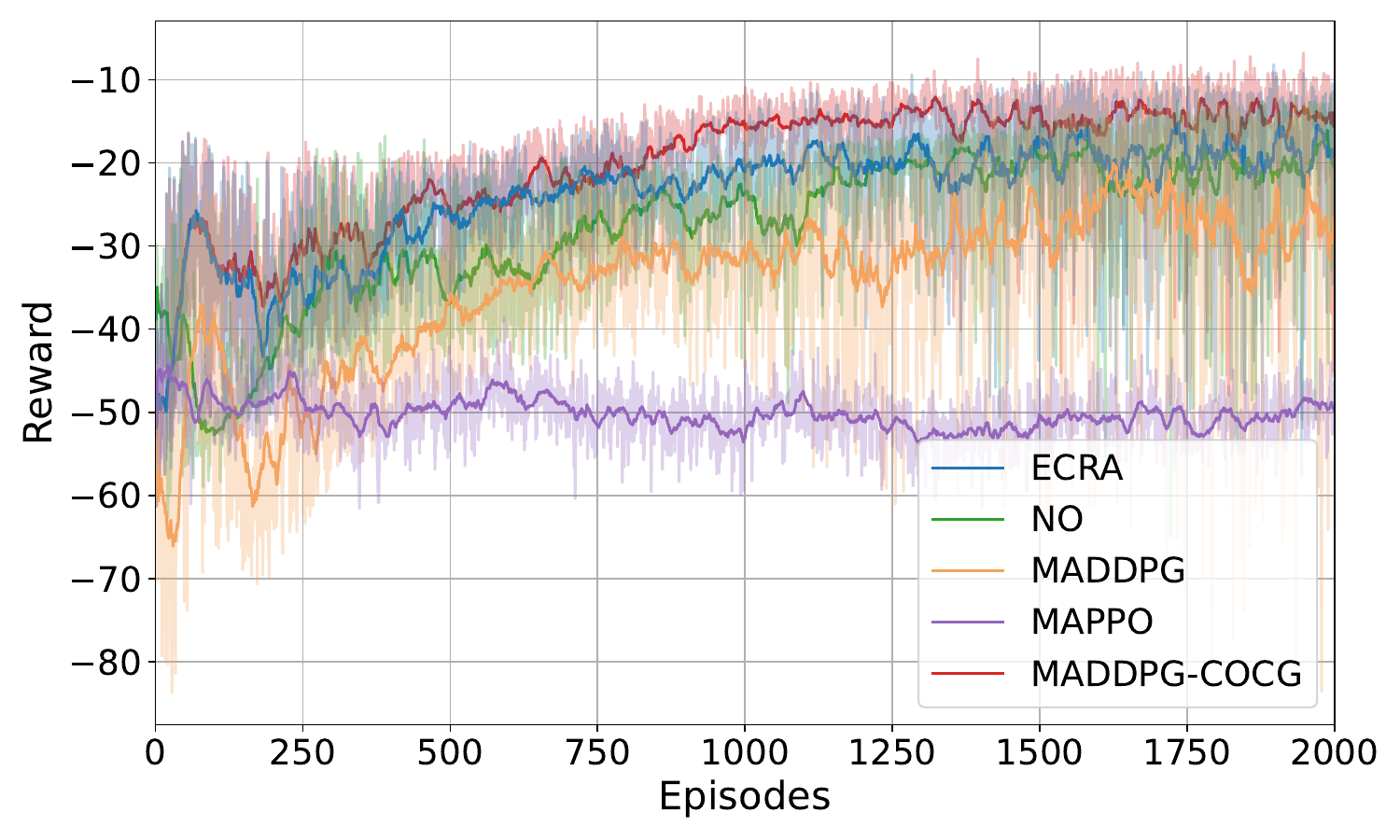}
    \caption{Training performance.}
    \label{fig_reward}
    \vspace{-0.5em}
\end{figure}
\subsubsection{Convergence and Stability Performance}

\par Fig. \ref{fig_reward} illustrates the convergence performance of different algorithms during the training. As the training progresses, the five algorithms exhibit gradual performance improvement, with the rewards stabilizing after approximately 1500 episodes. Specifically, ECRA and NO show suboptimal convergence, achieving 30\% lower rewards than the proposed MADDPG-COCG algorithm after stabilization. This is due to the fixed decisions of computing resource allocation and UD association without full optimization. Moreover, MADDPG shows relatively high variance and unstable learning performance throughout the training phase. This is mainly because the original MADDPG, which handles the hybrid and dynamic action space directly, suffers from unstable exploration and inefficient policy learning. Additionally, MAPPO fails to converge effectively, which can be attributed to its on-policy learning mechanism and insufficient sample efficiency in dynamic multi-agent environments. In summary, these results validate that the proposed MADDPG-COCG algorithm effectively addresses the challenges of hybrid and variable action space decision-making, achieving superior stability, robustness, and convergence compared to baselines.

\begin{figure*}[!hbt] 
	\centering
		\setlength{\abovecaptionskip}{2pt}%
		\setlength{\belowcaptionskip}{2pt}%
	\subfigure[Aggregated UD cost]
	{
		\begin{minipage}[t]{0.23\linewidth}
			\raggedleft
			\includegraphics[width=1.75in]{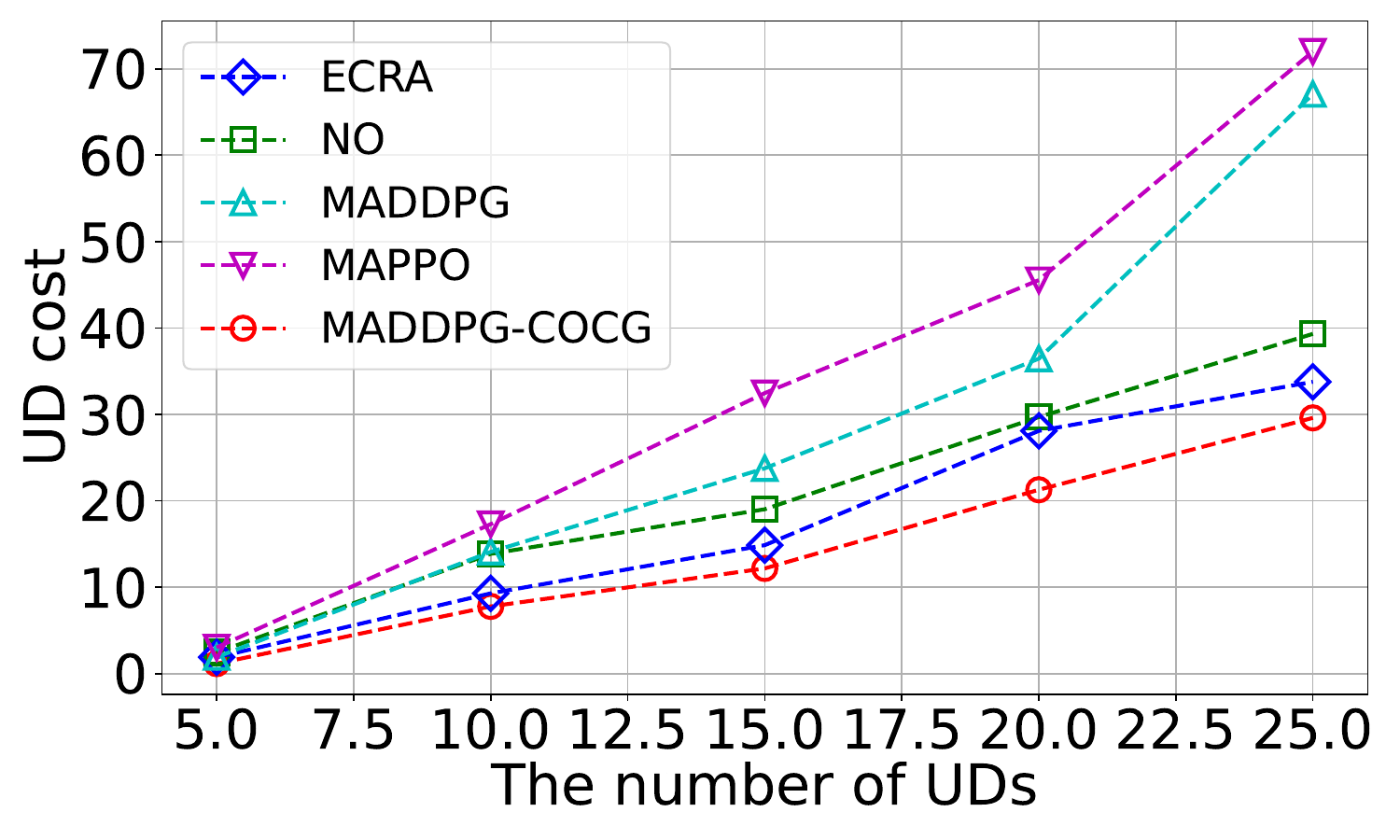}
		\end{minipage}
	}
	\subfigure[Average task completion delay]
	{
		\begin{minipage}[t]{0.23\linewidth}
			\centering
			\includegraphics[width=1.75in]{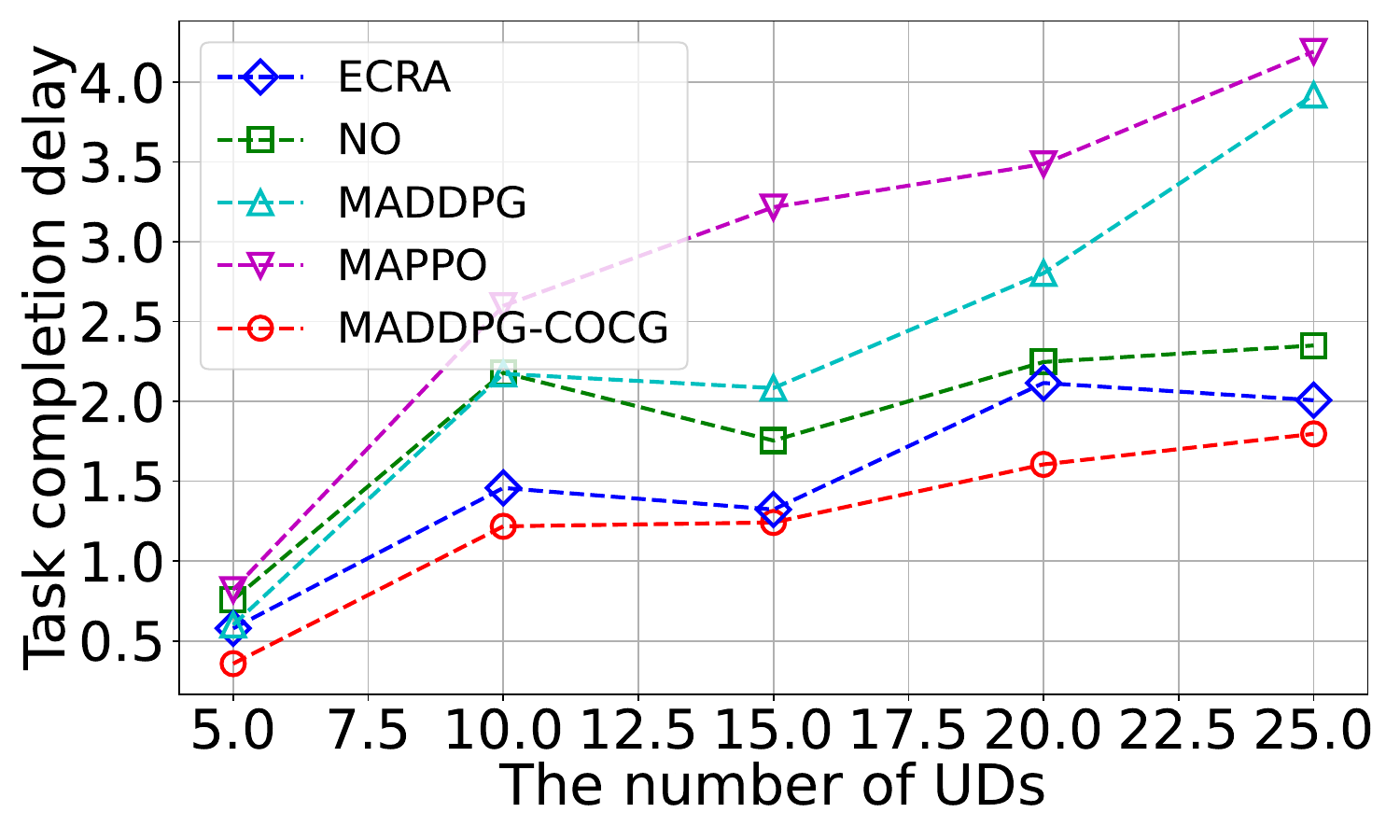}	
		\end{minipage}
	}
	\subfigure[Average UD energy consumption]
	{
		\begin{minipage}[t]{0.23\linewidth}
			\centering
			\includegraphics[width=1.75in]{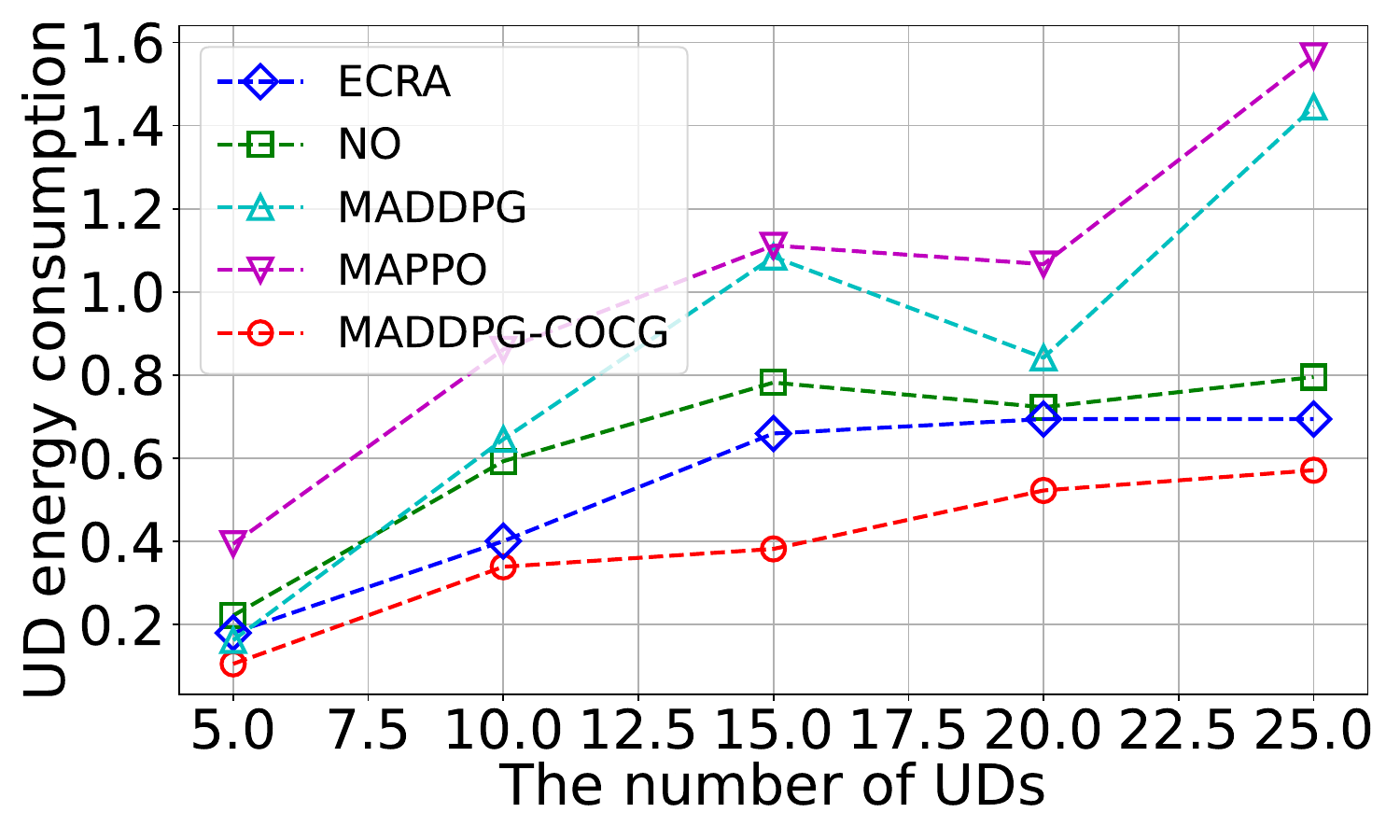}
		\end{minipage}
	}
        \subfigure[Average UAV energy consumption]
	{
		\begin{minipage}[t]{0.23\linewidth}
			\centering
			\includegraphics[width=1.75in]{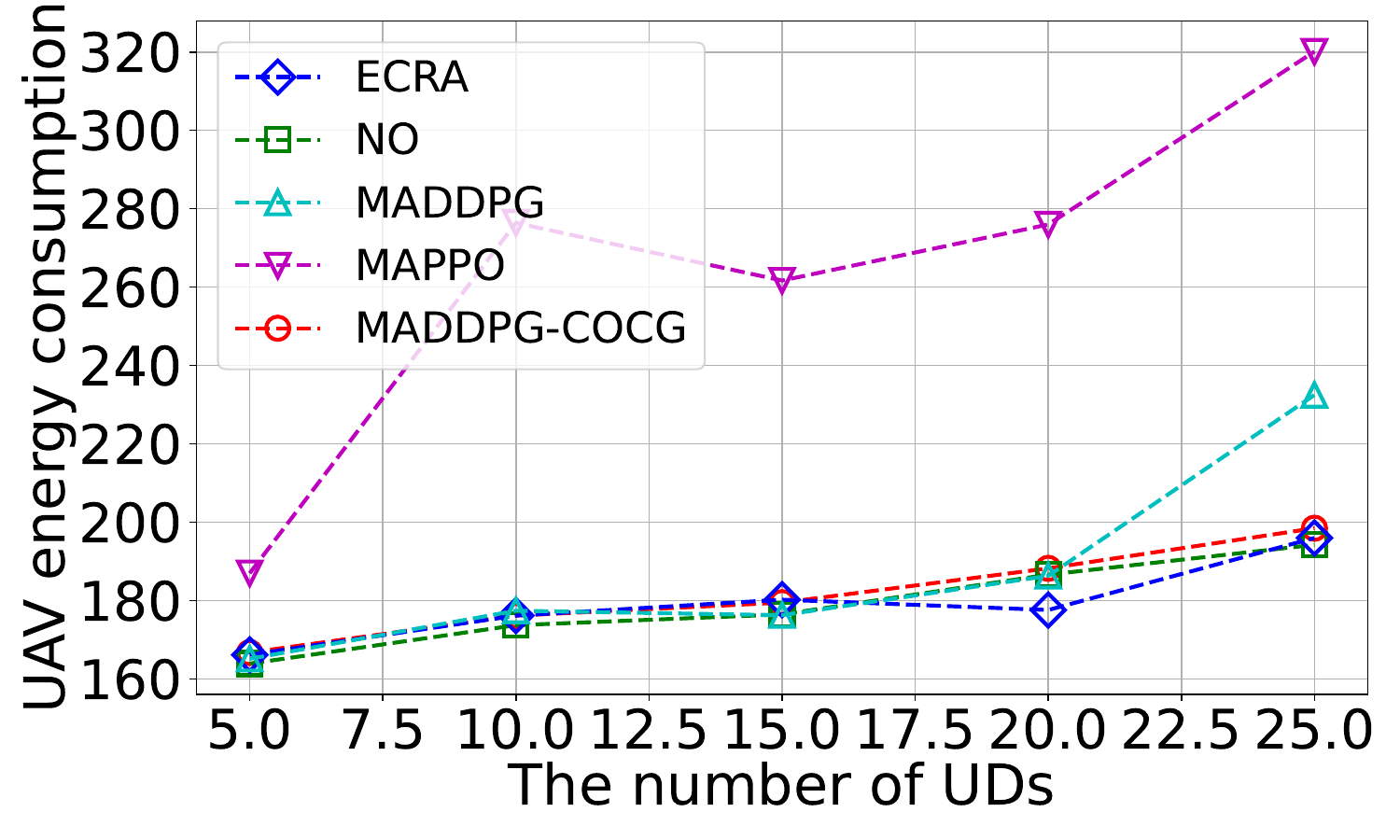}
		\end{minipage}
	}
	\centering
	\caption{System performance with the number of UDs.}
	\label{fig_UD}
	\vspace{-1.5em}
\end{figure*}

\subsubsection{Impact of System Settings}

\par We evaluate the performance of the MADDPG-COCG algorithm and benchmarks under different system settings, including the number of UDs, task sizes, and computing resources of UAVs.

\par \textbf{Impact of UD numbers.} 
Figs. \ref{fig_UD}(a)-(d) illustrate the effect of varying UD numbers on aggregated UD cost, average task completion delay, and average energy consumption for both UDs and the UAV. As the number of UDs increases, all metrics for all algorithms rise, a natural consequence of growing computational and offloading demands. Notably, the performance of MADDPG and MAPPO degrades and shows significant fluctuations. This is primarily due to the challenges traditional DRL algorithms face in large, mixed discrete-continuous action spaces. The growing number of agents introduces environmental dynamics and exponentially increases the state-action space, hindering efficient exploration and slowing convergence.

\par In contrast, our proposed MADDPG-COCG algorithm consistently outperforms the benchmarks in terms of aggregated UD cost, task delay, and UD energy consumption, while remaining competitive in UAV energy consumption. This superiority stems from its efficient hybrid strategy, which integrates convex optimization for resource allocation, a coalitional game for UD association, and MADDPG for joint task offloading and trajectory planning. Notably, while MADDPG-COCG consumes slightly more UAV energy in high-density scenarios than ECRA and NO, this trade-off is justified. The extra energy is used for more dynamic trajectory adjustments and resource engagement, which significantly reduces task delay and UD energy consumption, thereby improving overall service quality. Furthermore, MADDPG-COCG shows greater stability, evidenced by smoother performance curves with less variance. This indicates superior policy robustness and generalization across different system scales. In summary, these results confirm the effectiveness and scalability of our MADDPG-COCG algorithm, especially in dense UD environments.

\begin{figure*}[!hbt] 
	\centering
		\setlength{\abovecaptionskip}{2pt}%
		\setlength{\belowcaptionskip}{2pt}%
	\subfigure[Aggregated UD cost]
	{
		\begin{minipage}[t]{0.23\linewidth}
			\raggedleft
			\includegraphics[width=1.75in]{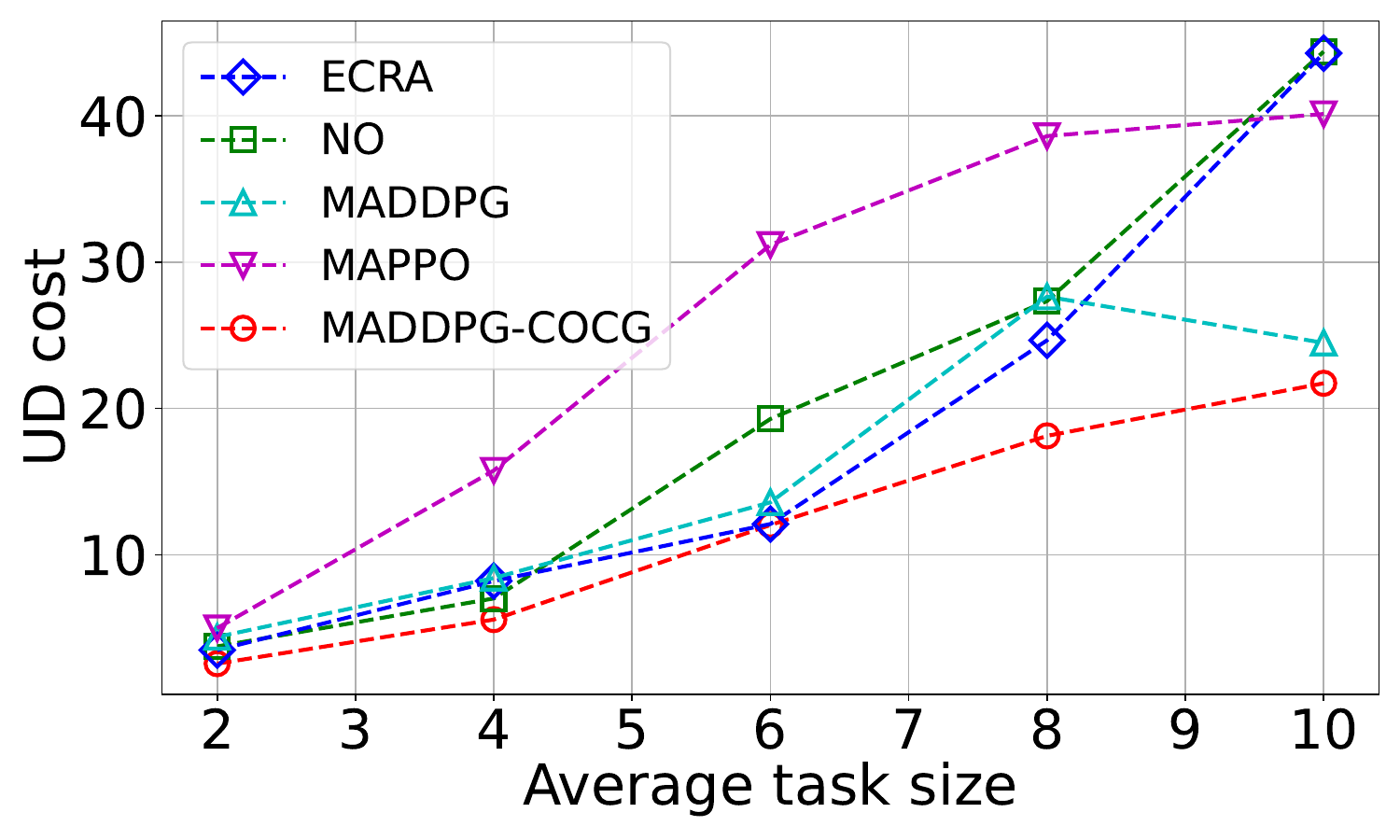}
		\end{minipage}
	}
	\subfigure[Average task completion delay]
	{
		\begin{minipage}[t]{0.23\linewidth}
			\centering
			\includegraphics[width=1.75in]{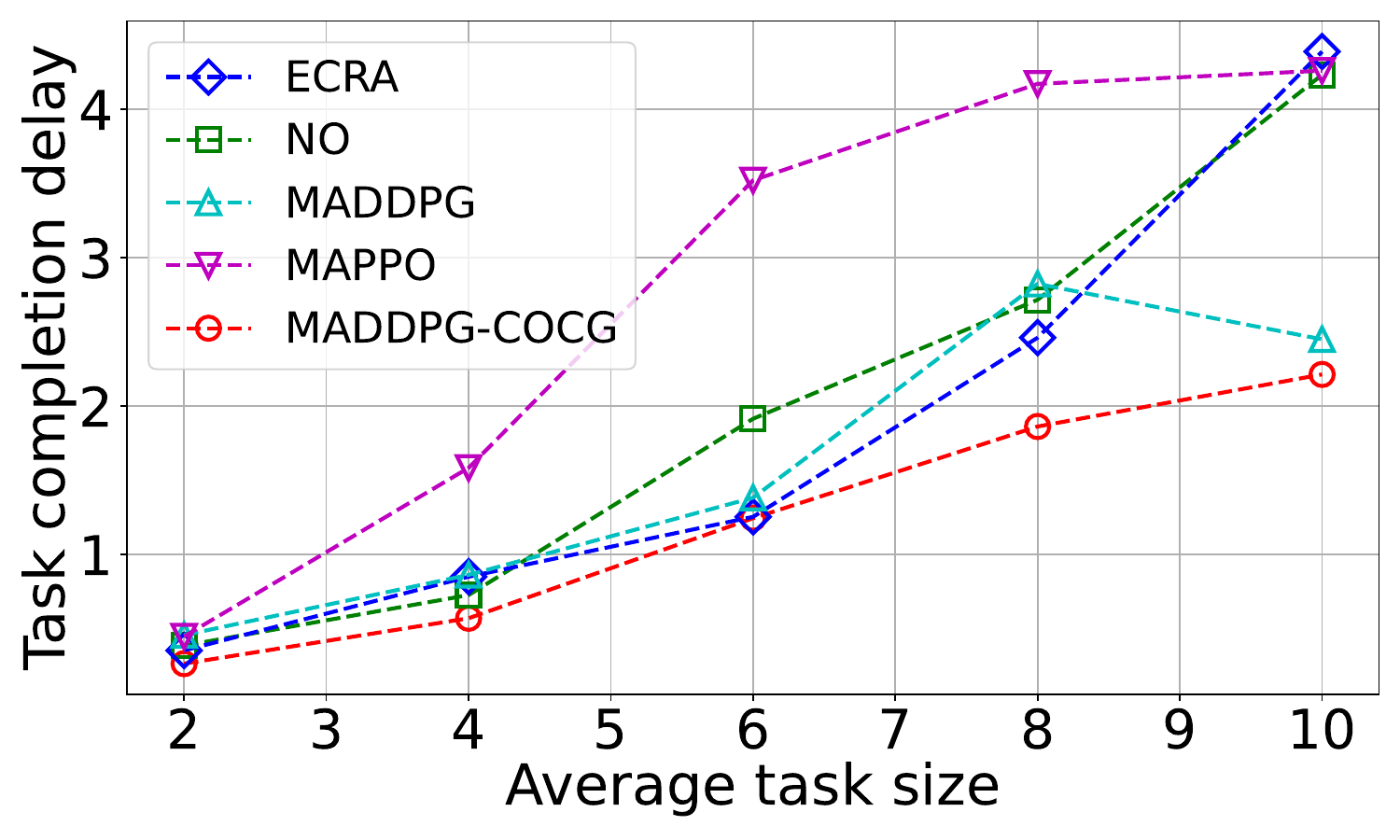}	
		\end{minipage}
	}
	\subfigure[Average UD energy consumption]
	{
		\begin{minipage}[t]{0.23\linewidth}
			\centering
			\includegraphics[width=1.75in]{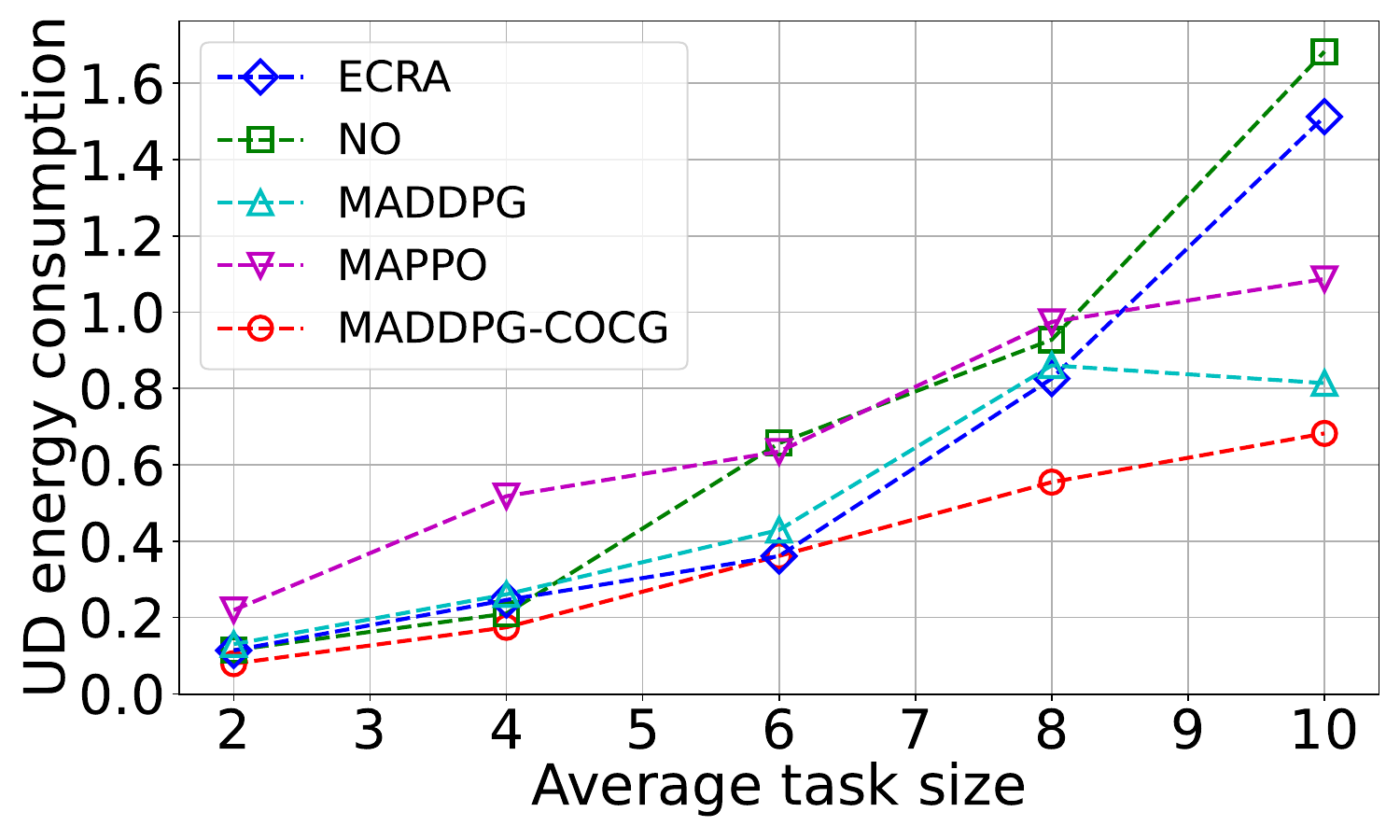}
		\end{minipage}
	}
        \subfigure[Average UAV energy consumption]
	{
		\begin{minipage}[t]{0.23\linewidth}
			\centering
			\includegraphics[width=1.75in]{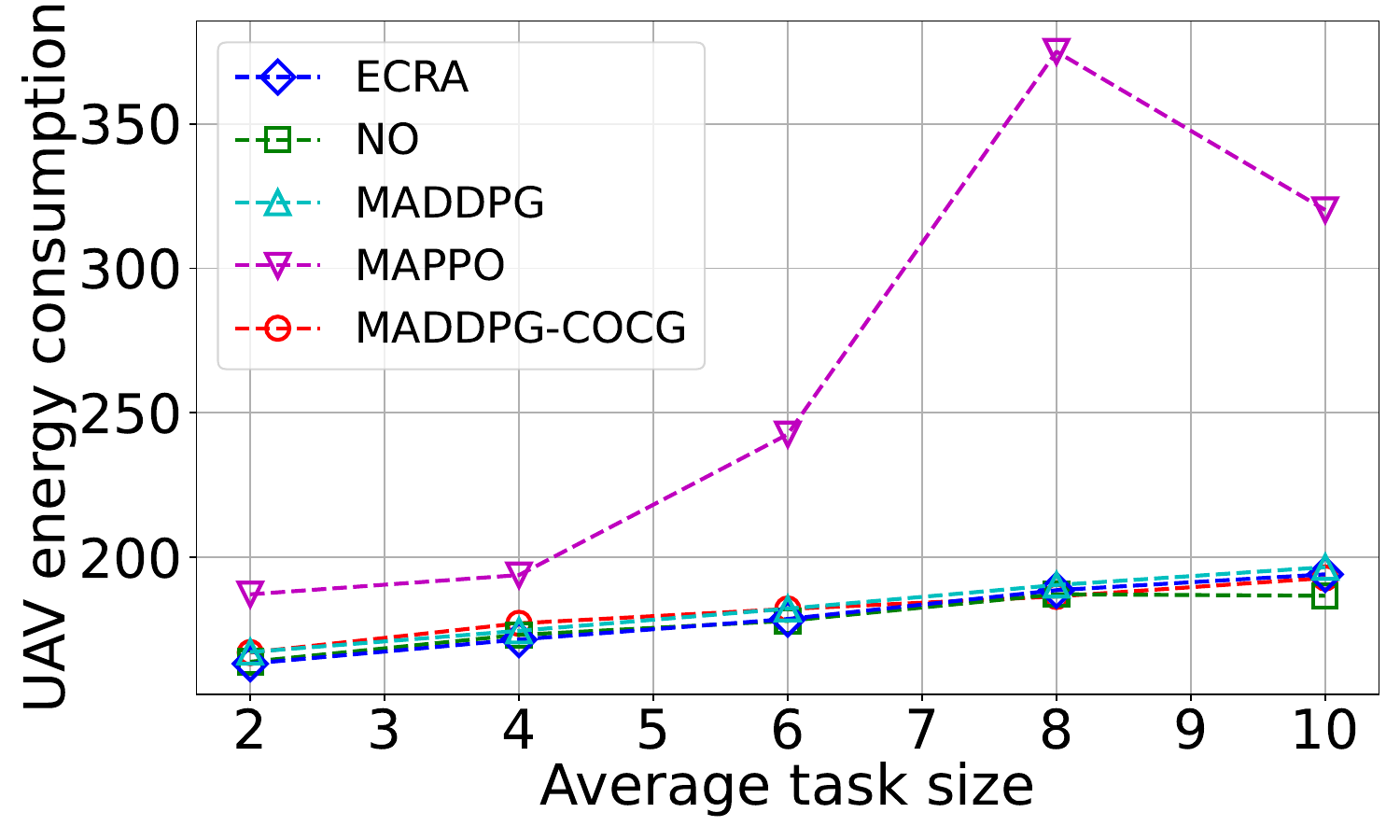}
		\end{minipage}
	}
	\centering
	\caption{System performance with average task size.}
	\label{fig_task}
	\vspace{-1.5em}
\end{figure*}

\par \textbf{Impact of Average Task Size.}
Figs. \ref{fig_task}(a)-(d) illustrate the impact of average task size on aggregated UD cost, average task completion delay, and the energy consumption of UDs and the UAV. As observed in the figures, all these metrics exhibit a clear upward trend for all algorithms as the task data volume increases. This is an expected outcome, as heavier workloads naturally lead to higher overheads in communication, computation, and energy for both UDs and the UAV. However, MAPPO and MADDPG show particularly inferior performance and higher sensitivity to task size, with steep growth and notable instability in task delay and UAV energy. This is attributed to the difficulty their policies have in adapting to high-load MEC scenarios due to complex, high-dimensional action spaces. Furthermore, the performance of ECRA and NO also degrades; NO's nearest offloading becomes suboptimal under high traffic, while ECRA's fixed resource allocation fails to adapt to varying task demands.

\par In contrast, our proposed MADDPG-COCG algorithm consistently outperforms all benchmarks on UD cost, task delay, and UD energy consumption. This effectiveness is due to its synergistic components: coalitional game-based association balances the task load, closed-form resource allocation adapts to varying workloads, and the MADDPG policy dynamically adjusts offloading and trajectory to meet UD demands. Although MADDPG-COCG's UAV energy consumption is slightly higher than NO's for large tasks (over 8 MB), this is a deliberate trade-off. The additional UAV energy is productively spent on flight and computation to significantly reduce service delay and enhance UD energy efficiency. Crucially, MADDPG-COCG exhibits the most stable performance across all metrics. In summary, the results show that our algorithm maintains low delay and high UD energy efficiency in high-load environments, at the cost of a minor increase in UAV energy consumption.

\begin{figure*}[!hbt] 
	\centering
	\setlength{\abovecaptionskip}{2pt}%
	\setlength{\belowcaptionskip}{2pt}%
	\subfigure[Aggregated UD cost]
	{
		\begin{minipage}[t]{0.23\linewidth}
			\raggedleft
			\includegraphics[width=1.75in]{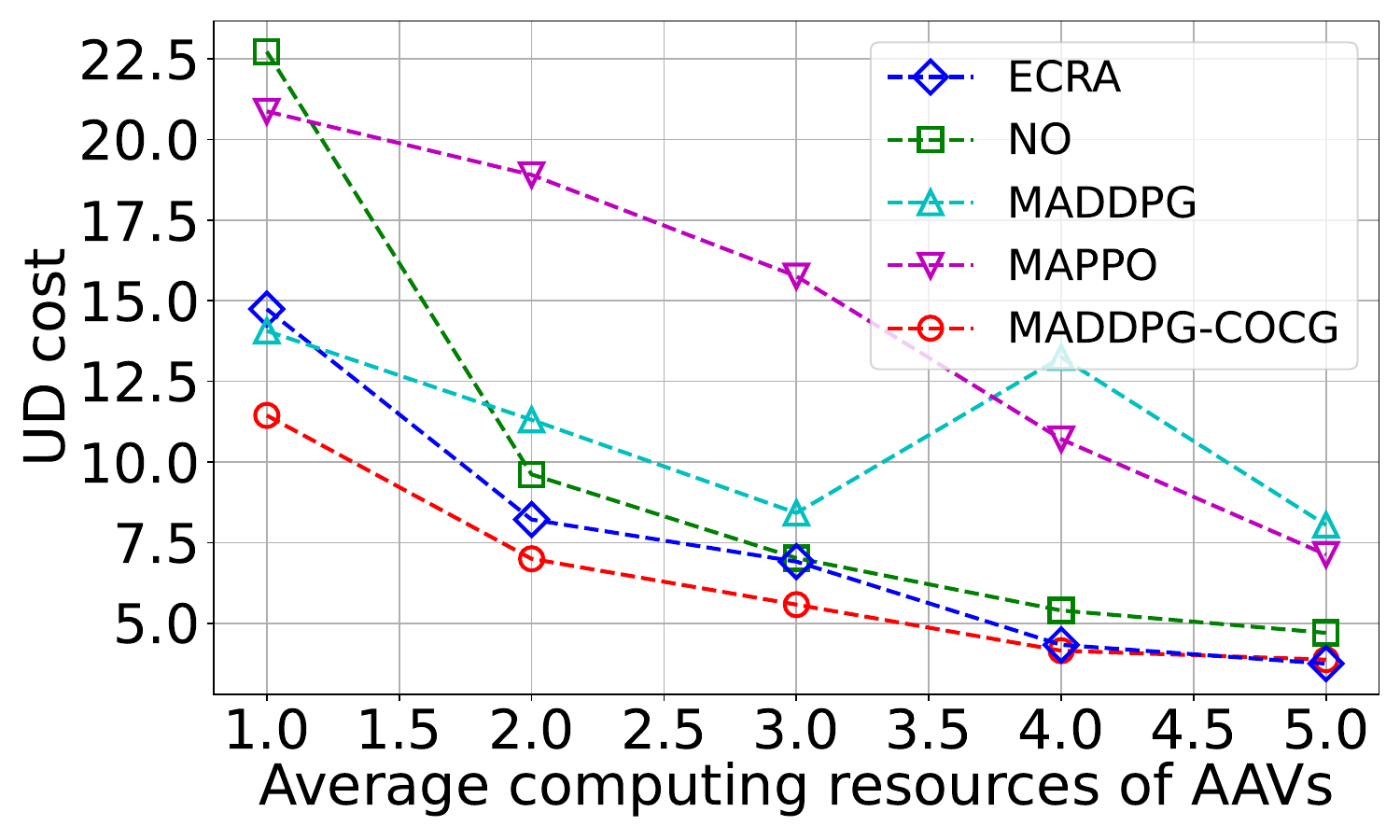}
		\end{minipage}
	}
	\subfigure[Average task completion delay]
	{
		\begin{minipage}[t]{0.23\linewidth}
			\centering
			\includegraphics[width=1.75in]{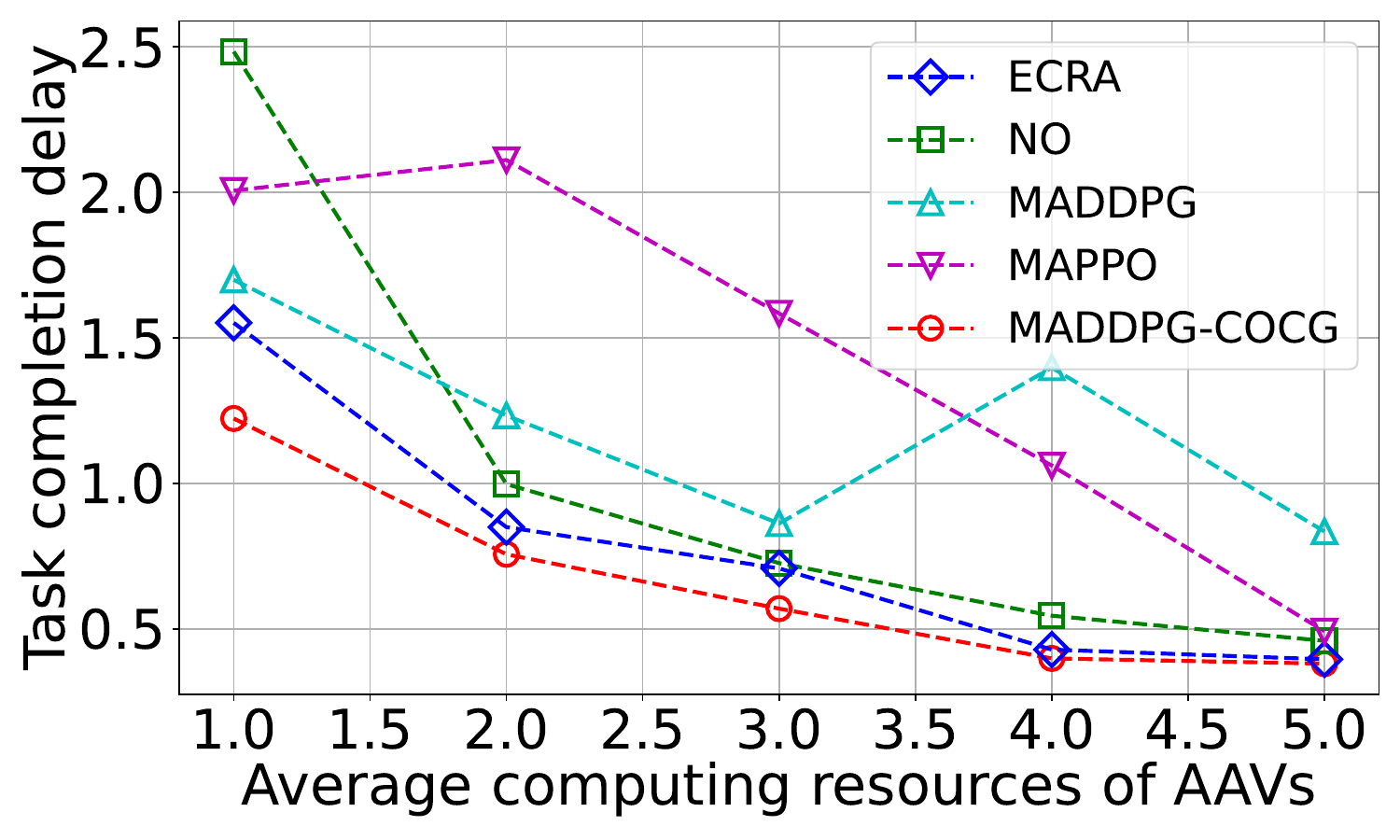}	
		\end{minipage}
	}
	\subfigure[Average UD energy consumption]
	{
		\begin{minipage}[t]{0.23\linewidth}
			\centering
			\includegraphics[width=1.75in]{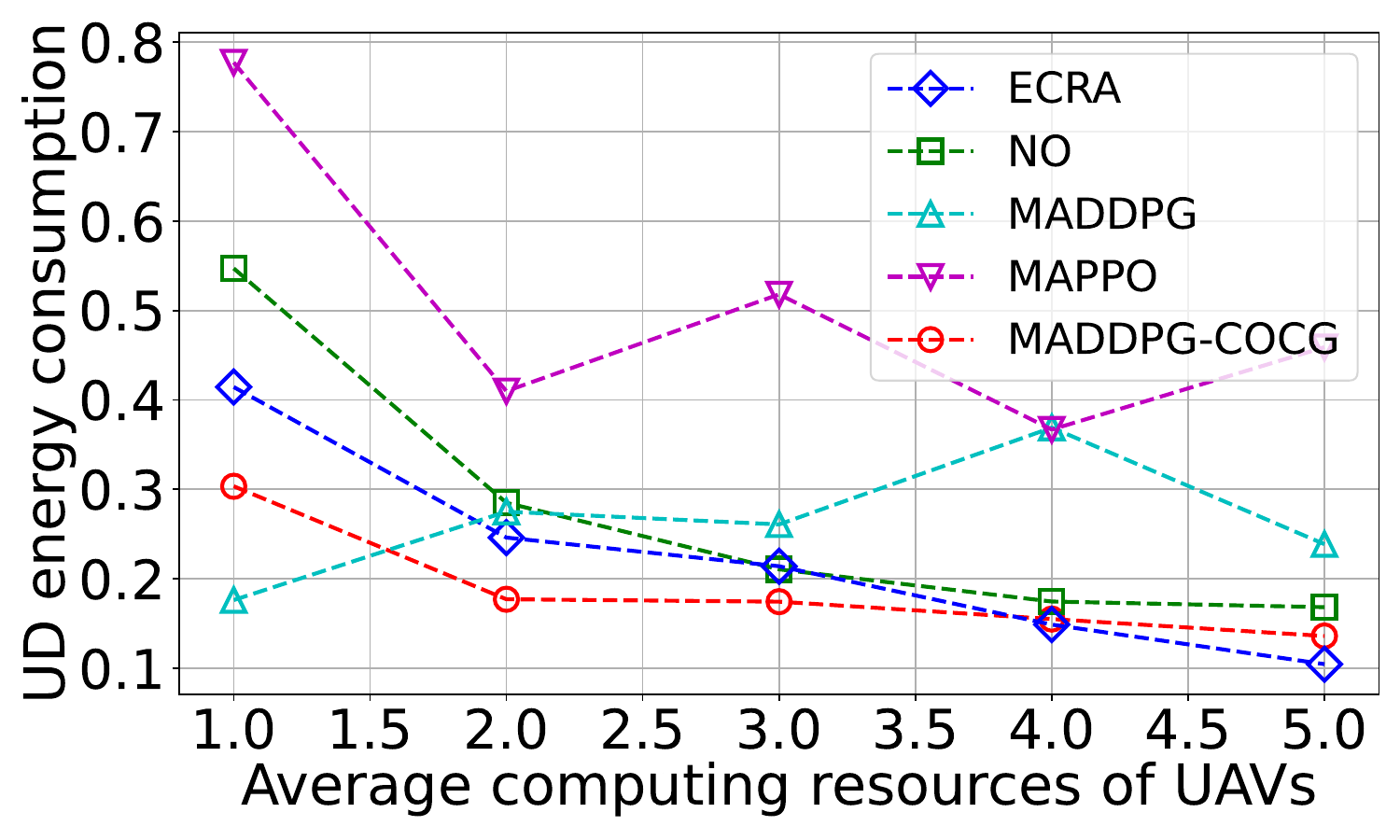}
		\end{minipage}
	}
        \subfigure[Average UAV energy consumption]
	{
		\begin{minipage}[t]{0.23\linewidth}
			\centering
			\includegraphics[width=1.75in]{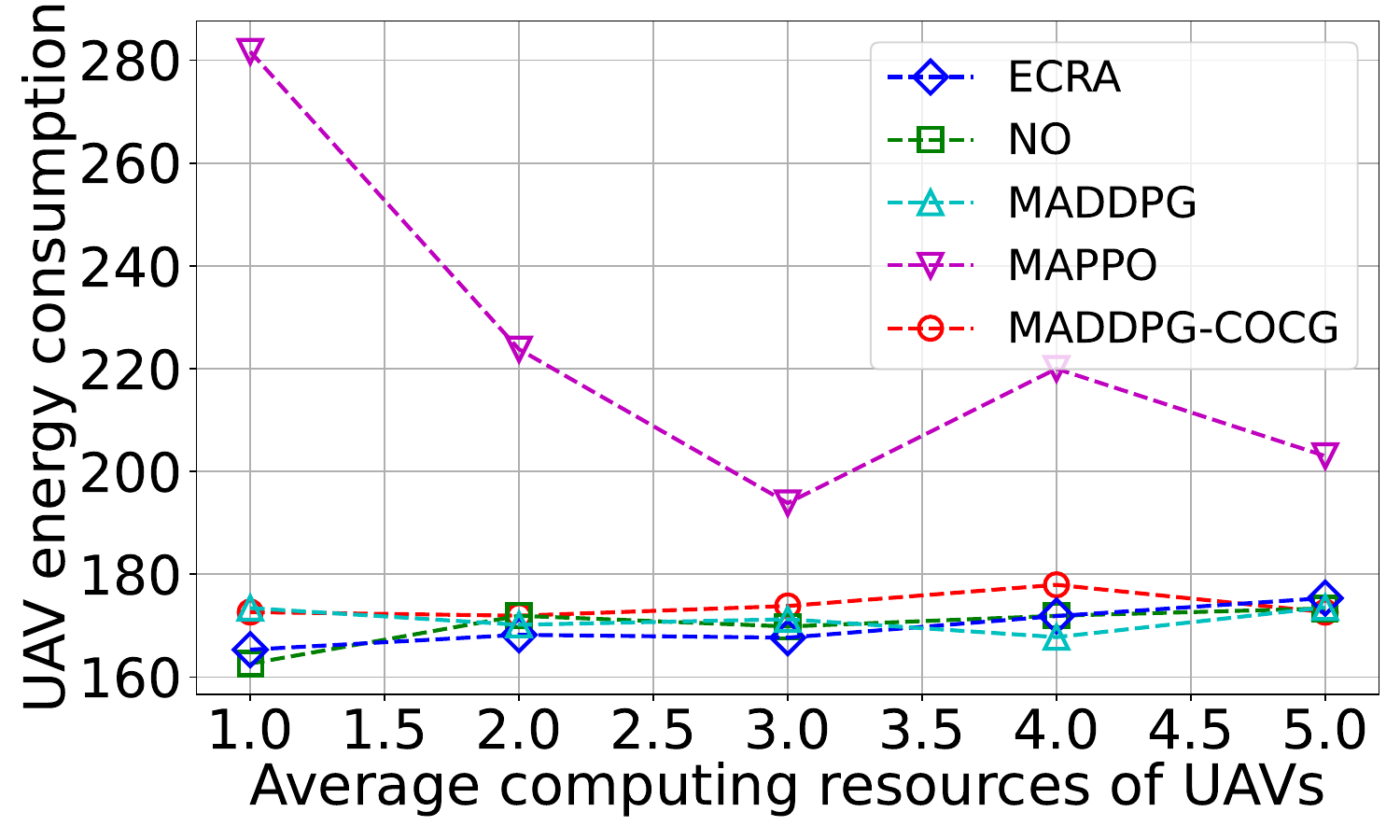}
		\end{minipage}
	}
	\centering
	\caption{System performance with average computing resources of UAVs.}
	\label{fig_res}
	\vspace{-1.5em}
\end{figure*}

\par \textbf{Impact of Average UAV Computing Resources.} Figs. \ref{fig_res}(a)-(d) show how UAV computing resources affect system performance. Initially, as UAV computing power increases, metrics like UD cost, task delay, and UD energy consumption for most algorithms show a decreasing trend. This is because greater processing capacity allows the UAV to handle more offloaded tasks efficiently, directly reducing computation delays and saving UD battery life. However, this improvement is not limitless. As resources become abundant, more UDs are incentivized to offload, causing the system's performance bottleneck to shift from computation to the UAV's energy budget. Consequently, the performance curves begin to stabilize. In contrast, MADDPG and MAPPO exhibit low and unstable performance. Their policies struggle to effectively manage the trade-off between leveraging increased computational power and conserving limited UAV energy, leading to poor convergence and high sensitivity to resource variations.

\par Our proposed MADDPG-COCG algorithm, however, demonstrates superior performance and stability. It not only follows the beneficial decreasing trend but also achieves a significantly lower and more stable plateau than all benchmarks. This is because its hybrid design intelligently manages the shifting system dynamics. The coalitional game for UD association optimizes the offloading decisions, preventing the UAV from being overwhelmed, while the convex optimization module ensures a near-optimal allocation of the available computing resources. Therefore, even when UAV energy becomes the primary constraint, our algorithm finds a more efficient equilibrium, balancing the gains from powerful computation with sustainable energy expenditure. This leads to the best overall performance in UD cost, delay, and energy efficiency, confirming the algorithm's robustness and adaptability to varying resource availability.

\par Compared to the benchmarks, the proposed MADDPG-COCG algorithm reduces UD costs by 26.10\% to 40.44\%, decreases task completion delay by 18.62\% to 49.64\%, and lowers UD energy consumption by 21.1\% to 50.76\% when in highly resource-constrained conditions. This superior performance stems from several key factors. First, the COCG method for UD association and computing resource allocation not only reduces the action space of the problem, but also enables dynamic and balanced task distribution, along with fast and accurate adaptation to workload variations. Moreover, the MADDPG effectively optimizes continuous decisions of task offloading ratios and UAV trajectory planning under a reduced action space, thus enhancing learning stability and decision efficiency. Although MADDPG-COCG incurs slightly higher UAV energy consumption under sufficient computing resources compared to NO, this is a reasonable compromise to achieve  improvements in both delay and UD energy efficiency.
\vspace{-1.0em}
\begin{figure}[!hbt] 
	\centering
         \setlength{\abovecaptionskip}{2pt}%
	\setlength{\belowcaptionskip}{2pt}%
	\includegraphics[width=3.3in]{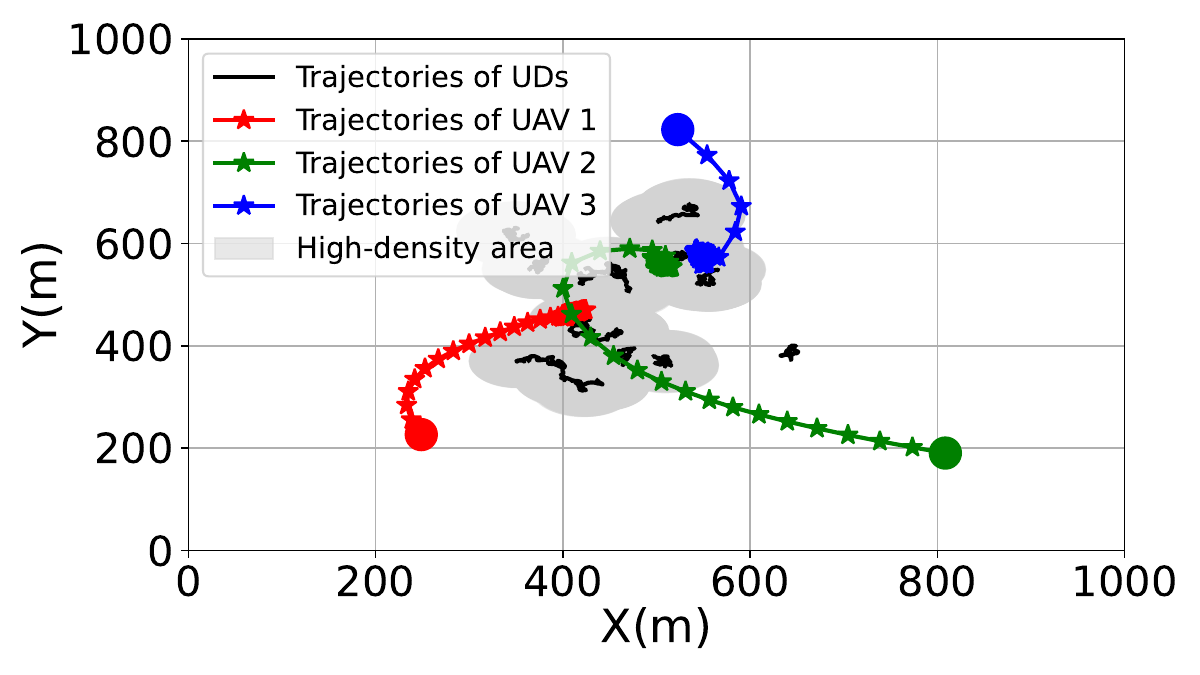}
	\caption{Trajectories of the UDs and UAVs.}
	\label{fig_tra}
       \vspace{-0.5em}
\end{figure}

\subsubsection{Trajectory Results}  

\par Fig. \ref{fig_tra} visualizes the learned UAV trajectories, which dynamically adapt to the spatial distribution of UDs. The UAVs navigate towards areas with high data demand to provide better communication conditions for more users. Notably, they travel at high speeds when traversing sparse areas and then decelerate upon reaching UD clusters to accommodate offloading requests, all while maintaining safe separation distances to prevent collisions. These results demonstrate that the proposed MADDPG-COCG algorithm plans trajectories that effectively balance service efficiency with flight safety.

%
%

\section{Conclusion}
\label{sec_conclusion}

\par In this work, we have addressed the UD cost minimization problem in SAGIN-MEC systems for LAE. We have proposed a hierarchical SAGIN-MEC architecture that integrates space, air, and ground layers to provide ubiquitous computing services. Moreover, we have formulated the UCMOP to minimize the UD costs by optimizing task offloading ratio, UAV trajectory planning, computing resource allocation, and UD association. To solve the formulated NP-hard problem, we propose the MADDPG-COCG algorithm, which leverages MADDPG to address the dynamic and heterogeneous characteristics of the SAGIN-MEC environment, while employing the COCG method to enhance the MADDPG by handling the hybrid and dimension-varying decision variables. Simulation results validate the effectiveness of our proposed MADDPG-COCG algorithm. Specifically, the MADDPG-COCG algorithm significantly outperforms the key benchmarks in terms of the user-centric performance indicators of the aggregated UD cost, task completion delay, and UD energy consumption, with a slight increase in UAV energy consumption. The MADDPG-COCG algorithm also demonstrates superior convergence, stability, and scalability, thus confirming its robustness and applicability for SAGIN-MEC systems.

\ifCLASSOPTIONcaptionsoff
\newpage
\fi

\bibliographystyle{IEEEtran}
\bibliography{references.bib}

\begin{IEEEbiography}[{\includegraphics[width=1in,height=1.23in,clip,keepaspectratio]{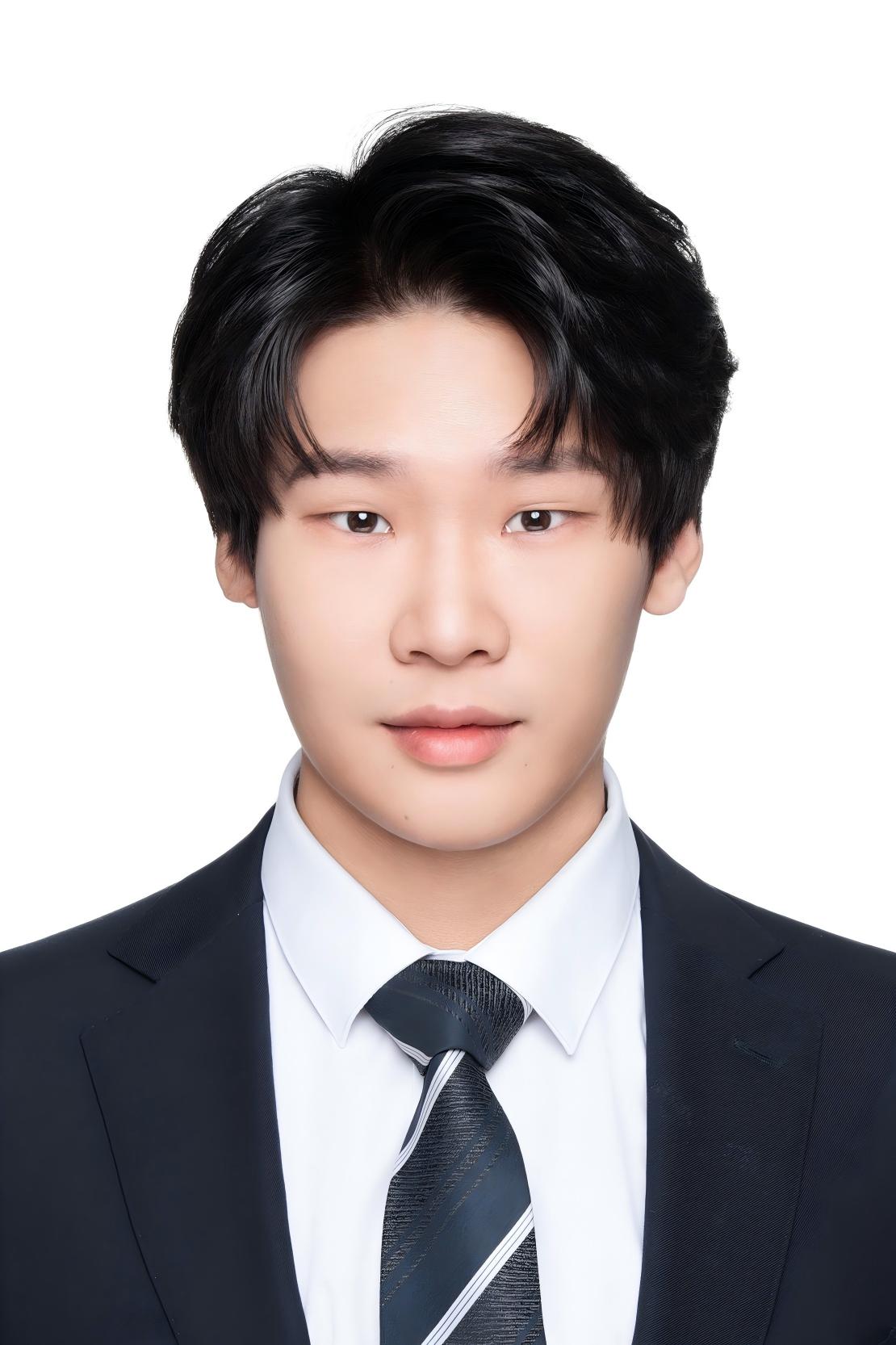}}]{Weihong Qin} received a BS degree in Software Engineering from Jilin University, Changchun, China, in 2023. He is currently working toward the Ph.D. degree in Computer Science and Technology at Jilin University, Changchun, China. His research interests include mobile edge computing and optimizations.
\end{IEEEbiography}

\begin{IEEEbiography}[{\includegraphics[width=1in,height=1.23in,clip,keepaspectratio]{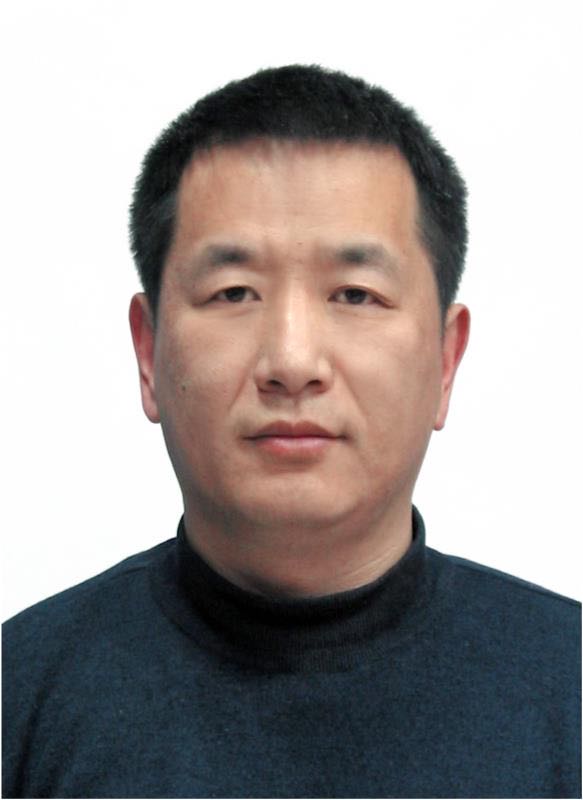}}]{Aimin Wang}received the Ph.D. degree in communication and information system from Jilin University,  Changchun, China, in 2004. He is currently a Professor with Jilin University. His research interests are wireless sensor networks and QoS for multimedia transmission.
\end{IEEEbiography}

 \begin{IEEEbiography}[{\includegraphics[width=1in,height=1.23in,clip,keepaspectratio]{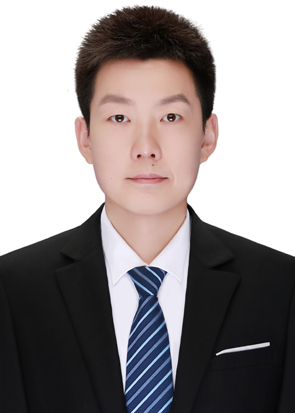}}]{Geng Sun} (Senior Member, IEEE) received the B.S. degree in communication engineering from Dalian Polytechnic University, and the Ph.D. degree in computer science and technology from Jilin University, in 2011 and 2018, respectively. He was a Visiting Researcher with the School of Electrical and Computer Engineering, Georgia Institute of Technology, USA. He is a Professor in the College of Computer Science and Technology at Jilin University. Currently, he is working as a visiting scholar at the College of Computing and Data Science, Nanyang Technological University, Singapore. He has published over 100 high-quality papers, including IEEE TMC, IEEE JSAC, IEEE/ACM ToN, IEEE TWC, IEEE TCOM, IEEE TAP, IEEE IoT-J, IEEE TIM, IEEE INFOCOM, IEEE GLOBECOM, and IEEE ICC. He serves as the Associate Editors of IEEE Communications Surveys \& Tutorials, IEEE Transactions on Communications, IEEE Transactions on Vehicular Technology, IEEE Transactions on Network Science and Engineering, IEEE Transactions on Network and Service Management and IEEE Networking Letters. He serves as the Lead Guest Editor of Special Issues for IEEE Transactions on Network Science and Engineering, IEEE Internet of Things Journal, IEEE Networking Letters. He also serves as the Guest Editor of Special Issues for IEEE Transactions on Services Computing, IEEE Communications Magazine, and IEEE Open Journal of the Communications Society. His research interests include Low-altitude Wireless Networks, UAV communications and Networking, Mobile Edge Computing (MEC), Intelligent Reflecting Surface (IRS), Generative AI and Agentic AI, and Deep Reinforcement Learning.

\end{IEEEbiography}

\begin{IEEEbiography}[{\includegraphics[width=1in,height=1.23in,clip,keepaspectratio]{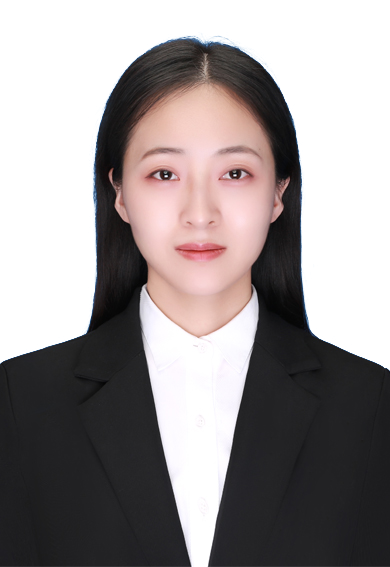}}]{Zemin Sun} (Member, IEEE) received a BS degree in Software Engineering, an MS degree and a Ph.D degree in Computer Science and Technology from Jilin University, Changchun, China, in 2015, 2018, and 2022, respectively. She was a visiting Ph.D. student at the University of Waterloo. She currently serves as an assistant researcher in the College of Computer Science and Technology at Jilin University. Her research interests include vehicular networks, edge computing, and game theory.
\end{IEEEbiography}

\begin{IEEEbiography}[{\includegraphics[width=1in,height=1.23in,clip,keepaspectratio]{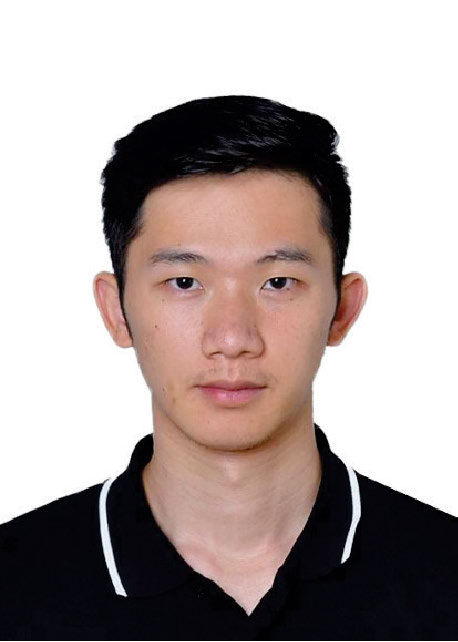}}]{Jiacheng Wang} received the Ph.D. degree from the School of Communication and Information Engineering, Chongqing University of Posts and Telecommunications, Chongqing, China. He is currently a Research Associate in computer science and engineering with Nanyang Technological University, Singapore. His research interests include wireless sensing, semantic communications, and metaverse.
\end{IEEEbiography}

\begin{IEEEbiography}[{\includegraphics[width=1in,height=1.23in,clip,keepaspectratio]{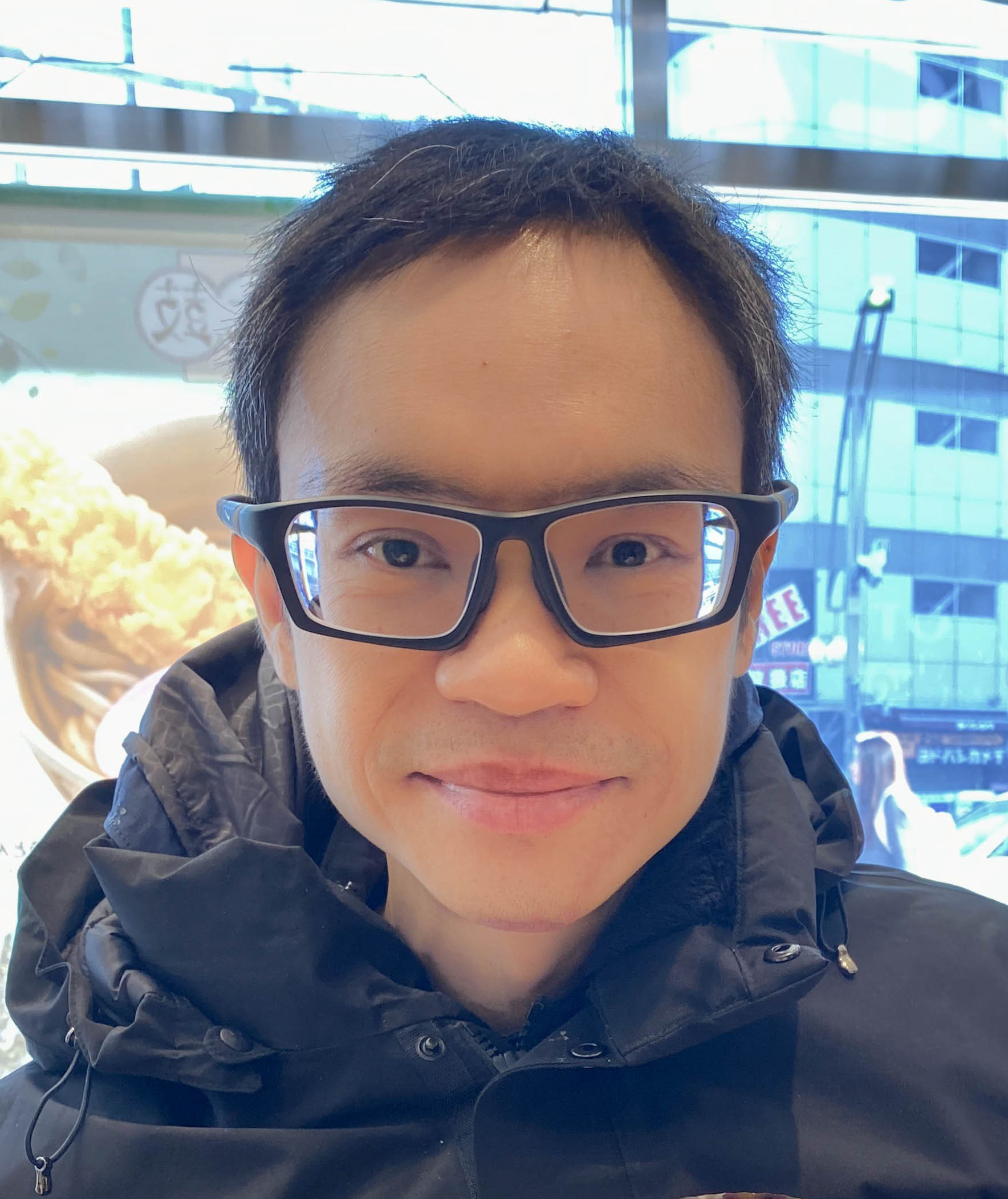}}]{Dusit Niyato} (Fellow, IEEE) received the B.Eng. degree from the King Mongkuts Institute of Technology Ladkrabang (KMITL), Thailand, in 1999, and the Ph.D. degree in electrical and computer engineering from the University of Manitoba, Canada, in 2008. He is currently a Professor with the School of Computer Science and Engineering, Nanyang Technological University, Singapore. His research interests include the Internet of Things (IoT), machine learning, and incentive mechanism design.
\end{IEEEbiography}

\begin{IEEEbiography}[{\includegraphics[width=1in,height=1.23in,clip,keepaspectratio]{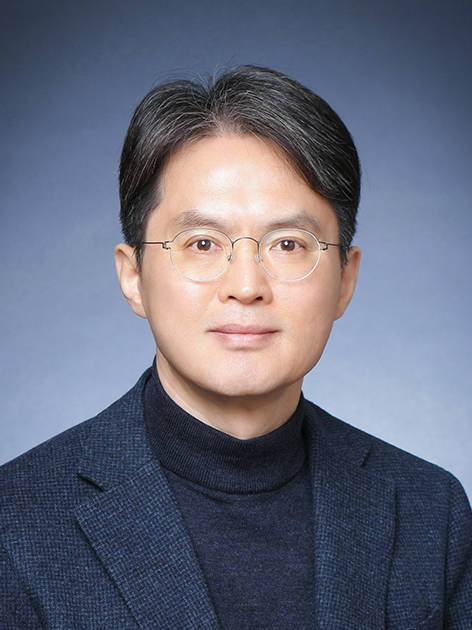}}]{Dong In Kim} (Fellow, IEEE) received the Ph.D. degree in electrical engineering from the Uni
versity of Southern California, Los Angeles, CA, USA, in 1990. He was a Tenured Professor with the School of Engineering Science, Simon Fraser University, Burnaby, BC, Canada. He is currently a Distinguished Professor with the College of Information and Communication Engineering, Sungkyunkwan University, Suwon, South Korea. He is a Fellow of the Korean Academy of Science and Technology and a Member of the National Academy of Engineering of Korea. He was the first recipient of the NRF of Korea Engineering Research Center in Wireless Communications for RF Energy Harvesting from 2014 to 2021. Hereceived several research awards, including the 2023 IEEE ComSoc Best Survey Paper Award and the 2022 IEEE Best Land Transportation Paper Award. He was selected the 2019 recipient of the IEEE ComSoc Joseph LoCicero Award for Exemplary Service to Publications. He was the General Chair of the IEEE ICC 2022, Seoul. Since 2001, he has been serving as an Editor, an Editor at Large, and an Area Editor of Wireless Communications I for IEEE Transactions on Communications. From 2002 to 2011, he served as an Editor and a Founding Area Editor of Cross-Layer Design and Optimization for IEEE Transactions on Wireless Communications. From 2008 to 2011, he served as the CoEditor-in-Chief for the IEEE/KICS Journal of Communications and Networks. He served as the Founding Editor-in-Chief for the IEEE Wireless Communications Letters from 2012 to 2015. He has been listed as a 2020/2022 Highly Cited Researcher by Clarivate Analytics.
\end{IEEEbiography}

\begin{IEEEbiography}[{\includegraphics[width=1in,height=1.23in,clip,keepaspectratio]{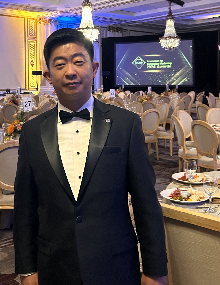}}]{Zhu Han} (Fellow, IEEE) received the B.S. degree in electronic engineering from Tsinghua University, in 1997, and the M.S. and Ph.D. degrees in electrical and computer engineering from the University of Maryland, College Park, in 1999 and 2003, respectively. From 2000 to 2002, he was an R\&D Engineer of JDSU, Germantown, Maryland. From 2003 to 2006, he was a Research Associate at the University of Maryland. From 2006 to 2008, he was an assistant professor at Boise State University, Idaho. Currently, he is a John and Rebecca Moores Professor in the Electrical and Computer Engineering Department as well as in the Computer Science Department at the University of Houston, Texas. Dr. Han’s main research targets on the novel game-theory related concepts critical to enabling efficient and distributive use of wireless networks with limited resources. His other research interests include wireless resource allocation and management, wireless communications and networking, quantum computing, data science, smart grid, carbon neutralization, security and privacy. Dr. Han received an NSF Career Award in 2010, the Fred W. Ellersick Prize of the IEEE Communication Society in 2011, the EURASIP Best Paper Award for the Journal on Advances in Signal Processing in 2015, IEEE Leonard G. Abraham Prize in the field of Communications Systems (best paper award in IEEE JSAC) in 2016, IEEE Vehicular Technology Society 2022 Best Land Transportation Paper Award, and several best paper awards in IEEE conferences. Dr. Han was an IEEE Communications Society Distinguished Lecturer from 2015 to 2018 and ACM Distinguished Speaker from 2022 to 2025, AAAS fellow since 2019, and ACM Fellow since 2024. Dr. Han is a 1\% highly cited researcher since 2017 according to Web of Science. Dr. Han is also the winner of the 2021 IEEE Kiyo Tomiyasu Award (an IEEE Field Award), for outstanding early to mid-career contributions to technologies holding the promise of innovative applications, with the following citation: ``for contributions to game theory and distributed management of autonomous communication networks."
\end{IEEEbiography}

\end{document}